\newif\ifarxiv
\theoremstyle{plain}
\newtheorem{theorem}{Theorem}
\newtheorem{lemma}[theorem]{Lemma}
\newtheorem{condition}[theorem]{Condition}
\def\Re{\mathds{R}}
\def\Na{\mathds{N}}
\newcommand{\prob}[1]{\Pr\left(#1\right)} %
\def\E{\ensuremath{\mathrm{E}}}
\newcommand{\dist}{d_{\mathrm{H}}}
\newcommand{\ab}{\hspace{0.125em}} %
\newcommand{\ie}{\hbox{i.\ab e.}\xspace,\xspace} %
\newcommand{\eg}{\hbox{e.\ab g.}\xspace,\xspace}  %
\newcommand{\om}{\mathrm{OneMax}\xspace}
\newcommand*{\jump}{\mathrm{Jump}}
\newcommand*{\jumpK}{\jump_k}
\newcommand*{\oneoneea}{(1+1)~EA\xspace}
\newcommand*{\oneoneeas}{(1+1)~EAs\xspace}
\newcommand*{\muEA}{($\mu$+1)~EA\xspace}
\newcommand*{\muGA}{($\mu$+1)~GA\xspace}
\newcommand*{\mut}{\mathrm{mutate}}
\newcommand*{\XO}{\mathrm{crossover}}
\newcommand{\pc}{p_{\mathrm{c}}}
\newcommand*{\bigO}{\mathrm{O}}
\newcommand*{\bigOh}[1]{\mathord{\bigO}\mathord{\left(#1\right)}}
\newcommand*{\smallO}{\mathrm{o}}
\newcommand*{\smallOh}[1]{\mathord{\smallO}\mathord{\left(#1\right)}}
\newcommand*{\ThetaN}[1]{\Theta\mathord{\left(#1\right)}}
\newcommand*{\poly}[1]{\mathord{\mathrm{poly}}\mathord{\left(#1\right)}}
\newcommand*{\eulerE}{\mathrm{e}}
\newcommand*{\speciesConstant}{c}
\newcommand*{\varpm}{p_{\mathrm{m}}}
\newcommand{\ignore}[1]{}
\title{Escaping Local Optima using Crossover with Emergent or Reinforced Diversity}
\author[1]{Duc-Cuong~Dang}
\author[2]{Tobias~Friedrich}
\author[2]{Timo~K{\"o}tzing}
\author[2]{Martin~S.~Krejca}
\author[1]{Per~Kristian~Lehre}
\author[3]{Pietro~S.~Oliveto}
\author[3]{Dirk~Sudholt}
\author[2]{Andrew~M.~Sutton}
\affil[1]{University of Nottingham, United Kingdom}
\affil[2]{Hasso Plattner Institute, Potsdam, Germany}
\affil[3]{University of Sheffield, United Kingdom}
\author{Duc-Cuong~Dang,
        Tobias~Friedrich,
        Timo~K{\"o}tzing,
        Martin~S.~Krejca,
        Per~Kristian~Lehre,
        Pietro~S.~Oliveto,
        Dirk~Sudholt,
        Andrew~M.~Sutton
\thanks{D.-C. Dang and P. K. Lehre are with the ASAP Research Group
        at the School of Computer Science, University of Nottingham, United Kingdom.}
\thanks{T.~Friedrich, T.~K{\"o}tzing, M.~S.~Krejca and A.~M.~Sutton are with the Chair of Algorithm Engineering,
        Hasso Plattner Institute, Potsdam, Germany.}
\thanks{P.~S.~Oliveto and D.~Sudholt are with the Department of Computer Science,
        University of Sheffield, United Kingdom.}
}
\begin{document}
\maketitle

\begin{abstract}
\ifarxiv
Population diversity is essential for avoiding premature convergence in Genetic Algorithms (GAs) and for the effective use of crossover. Yet the dynamics of how diversity emerges in populations are not well understood.
We use rigorous run time analysis to gain insight into population dynamics and GA performance for the \muGA and the $\jump_k$ test function.
We show that the interplay of crossover followed by mutation may serve as a catalyst leading to a sudden burst of diversity. This leads to improvements of the expected optimisation time of order $\Omega(n/\log n)$ compared to mutation-only algorithms like the \oneoneea.
Moreover, increasing the mutation rate by an arbitrarily small constant factor can facilitate the generation of diversity, leading to speedups of order~$\Omega(n)$.
We also compare seven commonly used diversity mechanisms and evaluate their impact on run time bounds for the \muGA.
All previous results in this context
only hold for unrealistically low crossover probability $\pc=\bigOh{k/n}$,
while we give analyses for the setting of constant $\pc < 1$ in all but one case.
For the typical case of constant $k > 2$ and constant $\pc$,
we can compare the resulting expected optimisation times for different diversity mechanisms assuming
an optimal choice of~$\mu$:
\begin{itemize}
\item
$\bigOh{n^{k-1}}$ for
duplicate elimination/minimisation,
\item
$\bigOh{n^2\log n}$ for
maximising the convex hull,
\item
$\bigOh{n\log n}$ for
deterministic crowding (assuming $\pc = k/n$),
\item
$\bigOh{n\log n}$ for
maximising the Hamming distance,
\item
$\bigOh{n\log n}$ for
fitness sharing,
\item
$\bigOh{n\log n}$ for
the single-receiver island model.
\end{itemize}
This proves a sizeable advantage of all variants of the \muGA compared to the
(1+1)~EA, which requires time $\Theta(n^k)$.
Experiments complement our theoretical findings and further highlight the
benefits of crossover and diversity on $\jump_k$.
\else
Population diversity is essential for avoiding premature convergence in Genetic Algorithms and for the effective use of crossover. Yet the dynamics of how diversity emerges in populations are not well understood.
We use rigorous run time analysis to gain insight into population dynamics and Genetic Algorithm performance for the ($\mu$+1)~Genetic Algorithm and the $\jump$ test function.
We show that the interplay of crossover followed by mutation may serve as a catalyst leading to a sudden burst of diversity. This leads to significant improvements of the expected optimisation time compared to mutation-only algorithms like the (1+1)~Evolutionary Algorithm.
Moreover, increasing the mutation rate by an arbitrarily small constant factor can facilitate the generation of diversity, leading to even larger speedups.

We also compare seven commonly used diversity mechanisms and evaluate their impact on run time bounds for the ($\mu$+1)~Genetic Algorithm:
duplicate elimination/minimisation,
maximising the convex hull,
deterministic crowding,
maximising the Hamming distance,
fitness sharing,
and the single-receiver island model.
All mechanisms provide a sizeable advantage of all variants of the ($\mu$+1)~Genetic Algorithm compared to the
(1+1)~Evolutionary Algorithm.
Experiments complement our theoretical findings and further highlight the
benefits of crossover and diversity on $\jump$.
\fi
\end{abstract}

\ifarxiv
\else
\begin{IEEEkeywords}
Genetic algorithms, recombination, diversity, run
time analysis, theory
\end{IEEEkeywords}
\fi

\section{Introduction}\label{sec:intro}

Genetic Algorithms (GAs) are powerful general-purpose optimisers that perform surprisingly well in many applications, including those where the problem is not well understood to apply a tailored algorithm. Their wide-spread success is based on a number of factors: using populations to diversify search, using mutation to generate novel solutions, and using crossover to combine features of good solutions.

Pr{\"u}gel-Bennett~\cite{PruegelBennett2010} gives several reasons for the success of populations and crossover. Crossover can combine building blocks of good solutions and help to focus search on bits where parents disagree~\cite{PruegelBennett2010}. For both tasks, the population needs to be diverse enough; without sufficient diversity in the population, crossover is not effective. A common problem in the application of GAs is the loss of diversity when the population converges to copies of the same search point, often called \emph{premature convergence}.
Understanding how populations gain and lose diversity during the course of the optimisation is vital for understanding the working principles of GAs and for tuning the design of GAs to get the best possible performance.

Rigorous run time analysis has emerged as a powerful theory that has provided many insights into the performance of GAs~\cite{AugerD11,BookNeuWit,Jansen2013,OlivetoWitt2014,OlivetoWitt2015,Corus2014}, including the benefit of crossover~\cite{Jansen2002Crossover,Lehre2011Crossover,OlivetoHeYao2008,Doerr2012,KotzingST:c:11:crossover,Sudholt2012b}.
It has guided algorithm design, including the discovery of new variants of GAs such as the (1+($\lambda$,$\lambda$))~GA~\cite{Doerr2015}, which has shown very good performance across a range of hard problems~\cite{goldman:2015:fastp3}.

However, understanding population diversity and crossover has proved elusive. The first example function where crossover was proven to be beneficial is called $\jump_k$. In this problem, GAs have to overcome a fitness valley such that all local optima have Hamming distance~$k$ to the global optimum. Jansen and Wegener~\cite{Jansen2002Crossover} showed that, while mutation-only algorithms such as the \oneoneea require expected time $\Theta(n^k)$,  a simple \muGA with crossover only needs time $\bigOh{\mu n^2 k^3 + 4^k/\pc}$. This time is $\bigOh{4^k/\pc}$ for large~$k$, and hence significantly faster than mutation-only GAs. However, their analysis requires an unrealistically small crossover probability $\pc \le 1/(ckn)$ for a large constant~$c > 0$.

K{\"o}tzing, Sudholt, and Theile~\cite{KotzingST:c:11:crossover} later refined these results towards a crossover probability $\pc \le k/n$, which is still unrealistically small.
Both approaches focus on creating diversity through a sequence of lucky mutations, relying on crossover to create the optimum, once sufficient diversity has been created. Their arguments break down if crossover is applied frequently. Hence, these analyses do not reflect the typical behaviour in GA populations with constant crossover probabilities $\pc = \Theta(1)$ as used in practice.

Lehre and Yao analysed the run time of the \muGA with deterministic
crowding for arbitrary crossover rates $\pc>0$, showing exponential
run time gaps between the case $\pc=0$ and $\pc>0$
\cite{Lehre2011Crossover}. The gain in performance in that analysis
stems from the ability of a diverse population to optimise multiple,
separated paths in parallel using a diversity-preservation
mechanism. Similar results have been also shown for instances of the vertex cover problem by generating diversity either through deterministic crowding \cite{OlivetoHeYao2008} or through island models \cite{Neumann2011}.
Here, we will consider a different effect.

We provide a novel approach loosely inspired from population genetics: we show that diversity can also be created by crossover, followed by mutation. Note that the perspective of crossover creating diversity is common in population genetics~\cite{Weissman1389,Komarova}. A frequent assumption is that crossover mixes all alleles in a population, leading to a situation called \emph{linkage equilibrium}, where the state of a population is described by the frequency of alleles~\cite{Barton:2013:QPG:2463372.2463568}.

For the maximum crossover probability $\pc=1$, we show that on
$\jump_k$ diversity emerges naturally in a population: the interplay
of crossover, followed by mutation, can serve as a catalyst for
creating a diverse range of search points out of few different
individuals. This naturally emerging diversity allows to prove a
speedup of order $n/\log n$ for $k \ge 3$ and standard mutation rate $\varpm=1/n$
compared to mutation-only
algorithms such as the \oneoneea. Increasing the mutation rate to
$\varpm=(1+\delta)/n$ for an arbitrarily small constant $\delta>0$,
leads to a speedup of order $n$.

Both operators are proven to be vital: mutation requires $\Theta(n^k)$ expected iterations to hit the optimum from a local optimum.
Also using crossover on its own does not help much.
As shown in~\cite[Theorem~8]{KotzingST:c:11:crossover}, using only crossover with $\pc = \Omega(1)$ but no mutation following crossover, diversity reduces quickly, leading to inefficient running times for small population sizes ($\mu = \bigOh{\log n}$).

In this paper, we also rigorously study varieties of diversity mechanisms of the \muGA on the $\jumpK$ function. We consider three varieties of diversity-preserving mechanisms: (1) methods that encourage \emph{genotypic uniqueness}, (2) methods that increase \emph{genotypic distance}, and (3) methods that employ parallelism. Our results are summarised in Table~\ref{tab:results}. The definition of the mechanisms with their variety are detailed in Section~\ref{sec:prelim}. The table presents the run time bound for a best-possible population size~$\mu$ for each diversity mechanism.

We observe on the $\jumpK$ function how a diverse population spreads on local optima and eventually manages to escape them. Our results show that already small changes in the  tie-breaking rule of the \muGA can make a big difference. We rigorously prove asymptotic run time upper bounds for different diversity mechanisms. The respective proofs themselves give insight into \emph{how} the diversity mechanisms help escaping local optima.

All our analyses are based on observing the dynamic behaviour of the size of the largest \emph{species}, referring to a collection of identical genotypes as species. A population contains no diversity when only one species is present. However, mutation can create further species, and then the combination of crossover and mutation is able to rapidly create further species in a highly stochastic process. This diversity can then be exploited to find the global optimum on $\jump_k$ efficiently. A higher mutation rate facilitates the generation of new species and leads to better performance, with respect to rigorous upper run time bounds and empirical performance.

Using $\jump_k$ as a case study, our analyses shed light on how diversity emerges in populations and how to facilitate the emergence of diversity by tuning the mutation rate or by making use of specific diversity mechanisms.

\ifarxiv
\begin{landscape}
\setlength{\extrarowheight}{5pt}
\fi
\begin{table*}
\small
\centering
\ifarxiv\else\resizebox{\linewidth}{!}{\fi
    \begin{tabular}{@{}lrrrrrrr}\toprule
      \multirow{2}{*}{\textbf{Mechanism}}
    & \multirow{2}{*}{$\boldsymbol{\mu}$}
    & \multirow{2}{*}{$\boldsymbol{\pc}$}
    & \multirow{2}{*}{$\boldsymbol{\varpm}$}
    & \multicolumn{3}{@{}c}{\textbf{Problem}}\\
    \cmidrule{5-7}
    &&&&
      $\boldsymbol{k=2}$
    & $\boldsymbol{k=4}$
    & $\textbf{any }\boldsymbol{k}$\\
    \midrule

    \multirow{2}{*}{\textbf{No mechanism}, Thm.~\ref{the:upper-bound-no-mechanism}}
    & $\ThetaN{\frac{\sqrt{n}}{\sqrt{\log{n}}}}$
    & $1$
    & $1/n$
    & $\bigOh{\frac{n^{1.5}}{\sqrt{\log{n}}}}$
    & $\bigOh{\frac{n^{3.5}}{\sqrt{\log{n}}}}$
    & $\bigOh{\frac{n^{k-0.5}}{\sqrt{\log{n}}}}$\\

    & $\ThetaN{n}$
    & $1$
    & $1/n$
    & $\bigOh{n^2\log{n}}$
    & $\bigOh{n^3\log{n}}$
    & $\bigOh{n^{k-1}\log{n}}$\\
    \midrule

    \textbf{No mechanism}, Thm.~\ref{thm:highmutationruntime}
    & $\ThetaN{\log{n}}$
    & $1$
    & $(1+\delta)/n$
    & $\bigOh{n\log{n}\log\log{n}}$
    & $\bigOh{n^3}$
    & $\bigOh{n\sqrt{k}\log{n}\log\log{n} + n^{k-1}}$\\
    \midrule

    \multirow{2}{*}{\textbf{Duplicate elim.}, Thm.~\ref{thm:duplicate-elimination}}
    & $2$
    & $\Omega(1)$
    & $1/n$
    & $\bigOh{n\log n}$
    & $\bigOh{n^3}$
    & $\bigOh{n \sqrt{k}\log n + n^{k - 1}}$ \\

    & $2$
    & $1 - \Omega(1)$
    & $1/n$
    & $\bigOh{n\log n}$
    & $\bigOh{n^3}$
    & $\bigOh{n \log n + n^{k - 1}}$ \\
    \midrule

    \multirow{2}{*}{\textbf{Duplicate min.}, Thm.~\ref{thm:duplicate-minimization}}
    & $2$
    & $\Omega(1)$
    & $1/n$
    & $\bigOh{n\log n}$
    & $\bigOh{n^3}$
    & $\bigOh{n \sqrt{k}\log n + n^{k - 1}}$ \\

    & $2$
    & $1 - \Omega(1)$
    & $1/n$
    & $\bigOh{n\log n}$
    & $\bigOh{n^3}$
    & $\bigOh{n \log n + n^{k - 1}}$ \\
    \midrule

    \textbf{Det. crowding}, Thm.~\ref{thm:deterministicCrowding}
    & $2$
    & $k/n$
    & $1/n$
    & $\bigOh{n\log n}$
    & $\bigOh{n\log n}$
    & $\bigOh{n\log n + n\eulerE^{5k}}$ \\
    \midrule

    \textbf{Convex hull max.}, Thm.~\ref{thm:convexHull}
    & $2$
    & $1 - \Omega(1)$
    & $1/n$
    & $\bigOh{n^2\log n}$
    & $\bigOh{n^2\log n}$
    & $\bigOh{n^2\log n + 4^k}$ \\
    \midrule

    \textbf{Hamming dist. max.}, Thm.~\ref{thm:total_hd}
    & $2$
    & $1 - \Omega(1)$
    & $1/n$
    & $\bigOh{n\log n}$
    & $\bigOh{n\log n}$
    & $\bigOh{n\log n + nk\log k + 4^k}$ \\
    \midrule

    \textbf{Fitness sharing}, Thm.~\ref{thm:fitness_sharing}
    & $2$
    & $1 - \Omega(1)$
    & $1/n$
    & $\bigOh{n\log n}$
    & $\bigOh{n\log n}$
    & $\bigOh{n\log n + nk\log k + 4^k}$ \\
    \midrule

    \textbf{Island model,} Thm.~\ref{thm:islandModel}
    & $2$
    & n/a
    & $1/n$
    & $\bigOh{n\log n}$
    & $\bigOh{n\log n}$
    & $\bigOh{n\log n + kn + 4^k}$ \\

    \bottomrule
    \end{tabular}
    \ifarxiv\else}\vspace{0.5em}\fi
    \caption{\label{tab:results} Comparison of the asymptotically best known run time bounds for the \muGA with crossover rate $\pc>0$ and mutation rate $\varpm$ on $\jumpK$ with bit string length $n$. More general results are presented in the respective theorems. Note that some of the results are subject to further conditions, \eg $k = o(n)$.}
\end{table*}
\ifarxiv
\end{landscape}
\fi

Parts of the results were published in~\cite{divpaper} and~\cite{emergence}. Here we extend the analysis of no diversity mechanisms in~\cite{emergence} to higher mutation rates, leading to the surprising conclusion that increasing the mutation rates leads to smaller runtime bounds, compared to the standard mutation rate~$1/n$. Furthermore, the analysis of standard mutation rates in~\cite{emergence} was restricted to very short jumps, $k=O(1)$. Here we generalise the results to a much larger class of $\jump_k$ functions, only requiring $k=o(n)$.

\section{Preliminaries}\label{sec:prelim}
The $\jumpK \colon \{0,1\}^n \to \Na$ class of pseudo-Boolean fitness
functions  was originally introduced
by \citet{Jansen2002Crossover}. The function value increases with the number of
1-bits in the bit string until a \emph{plateau} of local optima is reached,
consisting of all points with $n - k$ 1-bits. However, its only global optimum
is the all-ones string $1^n$\!.
Between the plateau and the global optimum, there is a valley of deteriorated
fitness, which we call the \emph{gap} of length $k$, and the algorithm has to
jump over this gap to optimise the function.

The function is formally defined as
\[
    \jumpK(x) =
    \begin{cases}
        k + |x|_1  &\textrm{if } |x|_1 = n \textrm{ or } |x|_1 \leq n - k,\\
        n - |x|_1	&\textrm{otherwise,}
    \end{cases}
\]
where $|x|_1 = \sum_{i = 1}^n x_i$ is the number of 1-bits in $x$.

We will analyse the performance of a standard steady-state
\muGA~\cite{Jansen2002Crossover} using uniform crossover (i.e., each bit of the
offspring is chosen uniformly at random from one of the parents) and standard
bit mutation (i.e., each bit is flipped with probability $\varpm$). The
algorithm uses a population of $\mu$ individuals. In each generation, a new
individual is created. With probability $\pc$, it is created by selecting two
parents from the population uniformly at random, crossing them over, and then
applying mutation to the resulting offspring. With probability $1- \pc$
instead, one single individual is selected and only mutation is applied. The
generation is concluded by removing the worst individual from the population and
breaking ties uniformly at random.
Algorithm~\ref{alg:mu+1-GA} shows the pseudocode for the \muGA{}.
\begin{algorithm2e}
    \caption{\muGA
    }
    \label{alg:mu+1-GA}
    $P \gets \mu \textrm{ individuals, uniformly at random from } \{0, 1\}^n$\;
    \While{$1^n \notin P$}
    {
        Choose $p \in [0, 1]$ uniformly at random\;
        \eIf{$p \leq \pc$}
        {
            Choose $x, y \in P$ uniformly at random\;
            $z \gets \mut(\XO(x, y))$\;
        }
        {
            Choose $x \in P$ uniformly at random\;
            $z \gets \mut(x,\varpm)$\;
        }
        $P \gets P \cup \{z\}$\;
        Remove one element from $P$ with lowest fitness, %
        breaking ties according to the tie-breaking rule, and remove it from $P$\;\label{li:tie-breaking}
    }
\end{algorithm2e}

Diversity-preserving mechanisms, which are introduced in the tie-breaking rule in line
\ref{li:tie-breaking} of the algorithm, can be of four varieties: (1) methods that encourage
\emph{genotypic uniqueness}, (2) methods that increase \emph{genotypic
  distance}, (3) methods that employ parallelism, and (4) no specific diversity-preserving
  mechanism, which corresponds to tie-breaking completely uniform at random.

In the first variety, tie-breaking mechanisms are used to promote
genotypic uniqueness in the population. The method of
\textbf{duplicate elimination} breaks ties in the selection process by
making sure an offspring replaces a duplicated string out of the
least-fit individuals. The advantage is that the crossover operator
does not have to wait to get two different individuals. Very similar
to duplicate elimination is \textbf{duplicate minimisation.}  In this
case, ties are broken such that the least-fit individual that has the
highest number of duplicates is removed.  This lowers the time to get
many pairs of different individuals. \textbf{Deterministic crowding}
always chooses a parent of the current offspring for removal in the
selection. This allows to get different individuals more
efficiently. Note that our result for deterministic crowding is the
only analysis which does not extend up to $\pc=\Theta(1)$. Recall that
deterministic crowding is here only applied in the tie-breaking rule
and not in every generation as in other implementations of deterministic
crowding \cite{Lehre2011Crossover,OlivetoHeYao2008}.

The second variety of diversity-preserving mechanisms we investigate
are based on increasing genotypic distance between individuals in the
population. A geometric crossover tends to reduce the size of the
convex hull of the population~\cite{MoraglioS:c:12}. It is therefore
natural to break ties via a \textbf{convex hull maximisation.}
Maximising the convex hull can be simulated by \textbf{maximising the
  total Hamming distance} among the population.

The third and final variety of mechanisms we study is based on relying
on parallelism to obtain and preserve diversity. In an \textbf{island
  model,} diversity is attained by keeping parts of the population on
separate processors. We give an upper bound on the expected run time
for the the single-receiver island model~\cite{Watson2007,Neumann2011}.

The most interesting behaviour
of the population
presented in this paper occurs after the entire population is stuck at local optima, the so-called plateau.
That is because under the right condition the population diversity will emerge during this stage.
Then after sufficient progress is made in diversity, crossover and
mutation can work together on the plateau to create an optimal
solution in $\smallOh{n^k}$ time. This is captured by
Lemma~\ref{lem:jump-to-opt} which will be presented later in the paper.

For the sake of completeness, in the next section, we provide the time bounds for the population to reach the plateau for the different algorithmic settings.

\section{Time to plateau}\label{sec:time-to-plateau}

We make the distinction between two cases $\pc = \Omega(1)$, which will be covered by Lemma~\ref{lem:time-to-plateau}, and $\pc = 1 - \Omega(1)$, which will be covered by Lemma~\ref{lem:plateau-arrival}.
Note that when both $k$ and $\mu$ are constants independent of $n$, the two cases provide the same asymptotic expected time to reach the plateau.

For $\pc = \Omega(1)$, we rely on the steps that crossover occurs and make use
of the following general result, which provides an upper bound on the expected
time for the \muGA to reach some region $A_m$ of the search space. Here we
consider a \emph{fitness-based} partition (see \cite{Jansen2013} for a formal
definition) into levels $(A_i)_{i\in[m]}$ (thus, $A_m$ is the last level) and define
$A_{\geq j} := \bigcup_{i=j}^m A_i$.
\begin{theorem}\label{thm:mu-plus-one-levels}
  Let $(A_i)_{i\in[m]}$ be a fitness-based partition of the search space
  into $m\in\Na$ levels. If there exist parameters
  $\varepsilon,s_1,\ldots, s_{m-1}\in(0,1]$
  such that for all $j\in[m-1]$
  \begin{enumerate}
  \item $\min_{x\in A_{\geq j}, y\in A_{\geq j+1}}\\\prob{\mut(\XO(x,y))\in
         A_{\geq j+1}}\geq \varepsilon$ and
  \item $\min_{x,y\in A_{j}}\prob{\mut(\XO(x,y))\in
         A_{\geq j+1}}\geq s_j$ %
  \end{enumerate}
  then the expected number of iterations until the entire population
  of the \muGA with $\pc = \Omega(1)$ is in %
  $A_m$ is
  $\bigOh{(\mu m/\varepsilon)\log(\mu)+\sum_{j=1}^{m-1}1/s_j}$.
\end{theorem}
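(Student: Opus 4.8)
The plan is to exploit the elitism of the \muGA and to decompose the run into a sequence of level-climbing phases, invoking condition~(2) to produce the first individual of each new level and condition~(1) to let that level take over the population.

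First I would record the key structural consequence of the tie-breaking rule. Since a fitness-based partition orders the levels by fitness and the \muGA always discards an individual of lowest fitness, once every individual lies in $A_{\geq j}$ the population never leaves this region: if all $\mu$ incumbents are in $A_{\geq j}$ and the new offspring $z$ lands in some level below $j$, then $z$ is strictly the worst individual and is removed, so the event ``the whole population is in $A_{\geq j}$'' is absorbing. Consequently the stopping times $\tau_j := \min\{t : \text{the whole population is in } A_{\geq j}\}$ are well defined and satisfy $\tau_1 = 0$ (as $A_{\geq 1}$ is the whole search space) and $\tau_1 \le \tau_2 \le \cdots \le \tau_m$. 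The quantity to bound is $\E[\tau_m]$, and it suffices to bound each increment $\E[\tau_{j+1} - \tau_j]$ for $j \in [m-1]$ and to sum by linearity of expectation, using the strong Markov property at each $\tau_j$.

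Next I would split each increment into an improvement part and a takeover part. At time $\tau_j$ every individual is in $A_{\geq j}$. If one already lies in $A_{\geq j+1}$ the improvement part is vacuous; otherwise all individuals are in $A_j$, so any two parents chosen for crossover lie in $A_j$ and, by condition~(2), a crossover step produces an offspring in $A_{\geq j+1}$ with probability at least $\pc s_j$, giving expected time at most $1/(\pc s_j)$ to create the first $A_{\geq j+1}$-individual. For the takeover part I would track the number $i$ of individuals in $A_{\geq j+1}$, which by the absorbing invariant is non-decreasing: whenever the offspring lands in $A_{\geq j+1}$ some individual of a lower level (which still exists while $i<\mu$) is removed, so $i$ increases. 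Because all incumbents remain in $A_{\geq j}$, selecting at least one parent among the $i$ individuals in $A_{\geq j+1}$, which happens with probability at least $i/\mu$, yields a pair $(x,y)$ with $x \in A_{\geq j}$ and $y \in A_{\geq j+1}$, so by condition~(1) the offspring lands in $A_{\geq j+1}$ with probability at least $\varepsilon$. Thus $i$ increases in one step with probability at least $\pc\,(i/\mu)\,\varepsilon$, and summing the geometric waiting times over $i = 1, \dots, \mu-1$ yields a harmonic sum bounded by $\bigOh{\mu \log(\mu)/(\pc\varepsilon)}$.

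Combining the two parts gives $\E[\tau_{j+1}-\tau_j] \le 1/(\pc s_j) + \bigOh{\mu\log(\mu)/(\pc\varepsilon)}$, and summing over $j \in [m-1]$ together with $\pc = \Omega(1)$ yields the claimed bound $\bigOh{(\mu m/\varepsilon)\log(\mu) + \sum_{j=1}^{m-1} 1/s_j}$. The main obstacle I anticipate is the careful bookkeeping rather than any single hard estimate: one must verify that lucky long jumps over several levels only accelerate the process, so that the genuine worst case is the all-in-$A_j$ configuration; that the takeover counter $i$ never decreases under the tie-breaking rule; and that the parent-selection probabilities are handled uniformly whether parents are drawn with or without replacement. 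These are precisely the points where an informal fitness-level argument could silently break, so I would state the elitism and absorption observation explicitly and justify the monotonicity of $i$ before performing the coupon-collector summation.
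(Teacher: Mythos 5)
Your proof is correct, and the delicate points you flag — the absorption property, the monotonicity of the takeover counter, and the survival of improved offspring — are exactly the right ones and are handled soundly. Your route differs from the paper's in its decomposition. The paper (following the level-based framework of Corus et al.\ for population algorithms) measures progress by a \emph{majority} rule: the ``current level'' is the smallest $j$ such that fewer than $\mu/2$ individuals lie in $A_{\geq j+1}$. Climbing a level then only requires converting half the population — condition (2) seeds the level when the count is zero (with an extra factor $1/4$ for selecting both parents from $A_j$), and condition (1) grows the count to $\mu/2$ (with an extra factor $1/2$ for the second parent) — but this leaves stragglers behind in lower levels, so the paper needs a second clean-up phase, again driven by condition (1), costing an additional $\bigOh{(\mu m/\varepsilon)\log\mu}$, before the \emph{entire} population is in $A_m$. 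You instead define $\tau_j$ as the first time the whole population sits in $A_{\geq j}$, prove this event is absorbing by elitism, and telescope over $j$; because at $\tau_j$ either every individual is in $A_j$ (so condition (2) applies with no selection loss) or the takeover of level $j+1$ has already begun, no clean-up phase is needed. Both arguments use the two conditions in identical roles and give the same asymptotic bound; yours exploits elitism more fully and yields a cleaner single-pass argument with slightly better constants, while the paper's majority-based bookkeeping is the version that generalises to non-elitist algorithms, which is the setting the borrowed framework was designed for.
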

\begin{proof}
  The proof follows \cite{Corus2014}, but we avoid a
  detailed drift analysis because the algorithm is elitist. Let the
  \emph{current level} be the smallest $j\in[m]$ such that the population
  contains less than $\mu/2$ individuals in $A_{\geq j+1}$. By
  definition, there are at least $\mu/2$ individuals in $A_{\geq j}$,
  where $j$ is the current level.

  Since the algorithm is elitist, the number of individuals in
  $A_{\geq j}$ is non-decreasing for any $j\in[m]$. For an upper
  bound, we ignore any improvements where mutation only is used (i.e.,
  lines 8 and 9 in Alg.~\ref{alg:mu+1-GA}).

  Assume that there are $i$ individuals in $A_{\geq j+1}$, hence $0\leq i < \mu/2$.
  If $i=0$, then an individual in $A_{\geq j+1}$ can be created by
  selecting two individuals from $A_j$, crossing them over, and
  mutating them such that the offspring is in $A_{\geq j+1}$ and an
  individual not in $A_{\geq j+1}$ is removed.
  The probability of this event is $\pc s_j/4$.

  If $0<i<\mu/2$, then the number of individuals in $A_{\geq j+1}$ can
  be increased by selecting an individual in $A_{\geq j}$ and an
  individual in $A_{\geq j+1}$, crossing them over, and mutating them
  such that the offspring is in $A_{\geq j+1}$ and one of the
  $\mu-x>\mu/2$ individuals not
  in $A_{\geq j+1}$ is removed.  This event occurs with probability
  at least $(\pc/2)(i/\mu)\varepsilon.$

  The expected time to increase the number of individuals in
  $A_{\geq j+1}$ from 0 to $\mu/2$, i.e., to increase the current level
  by at least one, is $4/(\pc s_j)+2\mu/(\pc \varepsilon)\sum_{i=1}^{\mu/2}1/i$.
  Hence, the expected time until at least half of the population is in
  $A_m$ is $\bigOh{(\mu
  m/\varepsilon)\log(\mu)+\sum_{j=1}^{m-1}1/s_j}$.

  We now consider the time to remove individuals from the lowest
  fitness level in the population. Assume that there are $0<i'<\mu/2$
  individuals in the lowest level $j<m$. The number of individuals in
  level $j$ can be reduced by crossing over an individual in level $j$
  and one of the at least $\mu/2$ individuals in level $m$, and
  mutating the offspring so that it belongs to $A_{\geq j+1}$. By
  condition 1, this event occurs with probability at least
  $\pc(\varepsilon/2)(i'/\mu)$. Hence, the expected time to remove all
  individuals from the lowest level $j$ is no more than
  $(2/\pc \varepsilon)\mu\sum_{i'=1}^{\mu/2}
  1/i'=\bigOh{(\mu/\varepsilon)\log\mu}$. The expected time until all
  individuals in fitness levels lower than $m$ have been removed is
  therefore $\bigOh{\mu(m/\varepsilon)\log\mu}$.
\end{proof}

We apply Theorem \ref{thm:mu-plus-one-levels} to bound the time until
the entire population reaches the plateau.

\begin{lemma}\label{lem:time-to-plateau}%
  Consider the \muGA optimizing $\jumpK$ with $\pc = \Omega(1)$ and
  $\varpm=\Theta(1/n)$.
  Then the expected time until either the optimum has been found
  or the entire population is on the plateau is
  $\bigOh{n\sqrt{k}(\mu\log\mu+\log n)}$.
\end{lemma}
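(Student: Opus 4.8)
The plan is to apply Theorem~\ref{thm:mu-plus-one-levels} to a suitable fitness-based partition and thereby reduce the statement to bounding the parameters $\varepsilon$ and $s_1,\dots,s_{m-1}$. Since $\jumpK$ assigns a distinct fitness value to every number of $1$-bits outside the gap, I would partition the search space by fitness value, letting level $A_i$ collect the points of fitness $i$; equivalently, on the region $|x|_1 \le n-k$ the levels are ordered by the number of ones, the gap forms the lowest levels, and the top level $A_m$ is the plateau of points with exactly $n-k$ ones. This gives $m = \Theta(n)$ levels, and ``the entire population is in $A_m$'' is exactly ``the whole population is on the plateau''; should an individual reach the optimum first, the outer loop stops even earlier, so the bound still applies. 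It then remains to verify conditions~1 and~2 with $\varepsilon = \Omega(1/\sqrt k)$ and $s_j = \Omega((n-j)/n)$ on the slope (and $s_j = \Omega(1)$ on the gap levels).

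For condition~2 I would identify the worst pair $x,y \in A_j$. The minimiser is a pair of identical parents: then $\XO(x,y)=x$ and the improvement must come from mutation alone. For a slope point with $j$ ones, flipping one of the $n-j$ zero-bits and nothing else raises the fitness, so $s_j \ge \frac{n-j}{\eulerE n}$; for a gap point, flipping one of its $\Omega(n)$ one-bits reduces the number of ones and hence increases the fitness with probability $\Omega(1)$. Non-identical pairs are only easier, since crossover itself then changes the number of ones and succeeds with probability $\Omega(1)$ away from the plateau. Summing yields $\sum_{j} 1/s_j = \bigOh{\sum_{i=k}^{n} n/i} = \bigOh{n \log n}$, with the $\bigOh{k}$ contribution of the gap levels absorbed.

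The crux is condition~1, which is also where the factor $\sqrt k$ enters. Here one parent lies in $A_{\ge j}$ and the other in $A_{\ge j+1}$, and the minimiser can place both parents on the plateau, because the plateau belongs to every $A_{\ge j}$. Two plateau points have exactly $k$ zeros each and hence differ in at most $2k$ positions, say in $2t$ of them with $t \le k$; uniform crossover fixes the $n-k-t$ shared ones and sets each differing bit independently, so the offspring carries $(n-k-t) + B$ ones with $B \sim \mathrm{Bin}(2t,1/2)$. To remain in $A_{\ge j+1}$ one must avoid overshooting into the low-fitness gap, and at the top this forces $B=t$, i.e.\ landing exactly on the plateau. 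The central-binomial bound $\prob{B=t} = \binom{2t}{t}2^{-2t} = \Omega(1/\sqrt t) = \Omega(1/\sqrt k)$, combined with mutation leaving the offspring unchanged (probability $(1-\varpm)^n = \Omega(1)$ for $\varpm = \Theta(1/n)$), gives success probability $\Omega(1/\sqrt k)$; for lower levels the requirement is weaker and the probability only larger. Hence $\varepsilon = \Omega(1/\sqrt k)$, and I expect establishing this anti-concentration estimate uniformly over all levels to be the main obstacle.

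Combining the two bounds via Theorem~\ref{thm:mu-plus-one-levels} gives expected time $\bigOh{(\mu m/\varepsilon)\log\mu + \sum_{j} 1/s_j} = \bigOh{\mu n \sqrt k \log\mu + n\log n}$, which is at most $\bigOh{n\sqrt k(\mu\log\mu + \log n)}$, as claimed.
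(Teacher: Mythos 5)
Your high-level route is the same as the paper's: the same fitness-based partition into $m=\Theta(n)$ levels plugged into Theorem~\ref{thm:mu-plus-one-levels}, with $\varepsilon=\Omega(1/\sqrt{k})$ extracted from a central-binomial estimate that avoids the gap. The genuine gap is in your verification of condition~2. You claim the minimising pair $x,y\in A_j$ consists of identical parents, justified by the assertion that for distinct parents ``crossover itself changes the number of ones and succeeds with probability $\Omega(1)$ away from the plateau''. But condition~2 takes a minimum over \emph{all} pairs in $A_j$, including distinct pairs at slope levels adjacent to the plateau, and there the assertion fails for exactly the reason you flagged as the crux of condition~1: if both parents have $n-k-1$ ones and Hamming distance $2t$ with $t$ close to $k$, crossover gains ones with constant probability, but almost all such gains overshoot into the gap; landing in $A_{\geq j+1}$ essentially forces the offspring onto the plateau, which (even allowing mutation to repair an off-by-one) happens with probability $\ThetaN{1/\sqrt{t}}$ only. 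In particular your structural claim is false outright once $k^{3/2}\gg n$ (a regime the lemma must cover, since it is invoked for general $k=o(n)$): such a distinct pair then has success probability $\ThetaN{1/\sqrt{k}}$, strictly smaller than the $\ThetaN{k/n}$ of an identical pair, so identical parents are not the worst case and your derivation of $s_j$ does not address the true minimiser.

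The paper closes exactly this hole by paying a $\sqrt{k}$-factor in condition~2: for $x,y\in A_j$ it first shows $\prob{\XO(x,y)\in A_{\geq j}}\geq 1/(4\sqrt{k})$ (the same gap-avoidance estimate used for condition~1) and only then asks mutation to flip one of the at least $n-j$ zero-bits, yielding $s_j=\ThetaN{(n-j)/(n\sqrt{k})}$ and $\sum_j 1/s_j=\bigOh{n\sqrt{k}\log n}$ --- this is precisely the origin of the $n\sqrt{k}\log n$ term in the lemma's statement. Your numerically stronger claim $s_j=\Omega((n-j)/n)$ does happen to be true, but proving it requires running the anti-concentration argument for within-level pairs (bounding the probability of landing in the narrow window of levels between $j+1$ and the plateau), not the $\Omega(1)$ assertion; alternatively, adopting the paper's weaker $s_j$ leaves your final assembly intact, since you relax to $\bigOh{n\sqrt{k}(\mu\log\mu+\log n)}$ at the end anyway. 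A milder instance of the same looseness appears in your condition~1, where you assert without proof that two plateau parents form the worst pair; the paper instead reduces arbitrary mixed pairs to a worst case via a stochastic-dominance (bit-raising) argument before applying the binomial estimate.
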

\begin{proof}
  We apply Theorem \ref{thm:mu-plus-one-levels}, and
  divide the search space into $m:=n$ levels with the partition
  \begin{align*}
    A_j & :=
          \begin{cases}
            \{x\in\{0,1\}^n\mid   |x|_1=n-j\} & \text{if } 1\leq j<k,\\
            \{x\in\{0,1\}^n\mid   |x|_1=j-k\} & \text{if } k\leq j< n,\\
            \{x\in\{0,1\}^n\mid   |x|_1\in \{n-k, k\}\} & \text{if } j=n,
          \end{cases}
  \end{align*}
  where $|x|_1:=\sum_{i=1}^n x_i$ is the number of 1-bits in bit string $x$.

  We call any search point $x\in\{0,1\}^n$ with $n-k<|x|_1<n$ a
  \emph{gap-individual}.
  We first claim that the probability of producing a gap-individual
  by crossing over two
  individuals $x\in A_{\geq j}$ and $y\in A_{\geq j+1}$ with $k\leq j<n+1$
  is bounded from above by
  \begin{align}
    \prob{n-k<|\XO(x,y)|_1<n} < \frac{1}{2} - \frac{1}{4\sqrt{k}}\ .
  \end{align}

  To see why this claim holds, we first argue that the probability of
  producing a gap-individual is highest when the parents $x$ and $y$
  have $n-k$ and $n-k-1$ 1-bits, respectively. More formally, obtain
  $x'$ by flipping an arbitrary 0-bit in $x$, and $y'$ by flipping an
  arbitrary 0-bit in $y$. Then, we have the stochastic dominance
  relationships $|\XO(x,y)|_1\preceq |\XO(x',y)|_1$ and
  $|\XO(x,y)|_1\preceq |\XO(x,y')|_1$. By repeating this argument, we
  obtain $|\XO(x,y)|_1\preceq |\XO(x'',y'')|_1$ for two bit strings
  $x''$ and $y''$ with $|x''|_1=n-k$ and $|y''|_1=n-k-1$. The
  probability of obtaining a search point with exactly $k$ 0-bits when
  crossing over two bit strings with $k$ and $k+1$ 0-bits is minimised
  when the positions of the 0-bits in the two bit strings differ.
  Hence, for bit strings $x''$ and $y''$, we have the lower bound
  \begin{align*}
    \prob{|\XO(x'',y'')|_1=n-k}
    & >    {2k+1\choose k}\cdot 2^{-(2k+1)}\\
    &  \geq \frac{4^k}{2\sqrt{k}}\cdot 2^{-(2k+1)}
      =    \frac{1}{4\sqrt{k}}\ .
  \end{align*}
  The crossover operation between $x''$ and $y''$ produces two offspring
  $u''$ and $v''$ where $|x''|_1+|y''|_1=|u''|_1+|v''|_1,$ and returns $u''$ or $v''$
  with equal probability. Without loss of generality, assume
  that $|u''|_1\geq |v''|_1$. We must have $|v''|<n-k\leq |u''|_1$ because
  \begin{align}
    2(n-k)-1 = |x''|_1+|y''|_1 = |u''|_1+|v''|_1 \leq 2|u''|_1\ ,\label{eq:improvecrossover}%
  \end{align}
  and similarly
  \begin{align*}
    2(n-k)-1 = |u''|_1+|v''|_1 \geq 2|v''|_1\ ,
  \end{align*}
  hence
  \begin{align}
    \prob{|\XO(x'',y'')|_1\geq n-k} = \frac{1}{2}\ .
  \end{align}
  Our claim now follows, because
  \begin{align*}
    &\prob{n-k<|\XO(x,y)|_1<n} \\
    & <     \prob{n-k< |\XO(x,y)|_1}\\
    &  \leq  \prob{n-k< |\XO(x'',y'')|_1}\\
    & =     \prob{n-k\leq |\XO(x'',y'')|_1}\\
    &\quad    - \prob{|\XO(x'',y'')|_1=n-k}\\
    & \leq \frac{1}{2}-\frac{1}{4\sqrt{k}}\ .
  \end{align*}

  We now show that condition 1 of Theorem \ref{thm:mu-plus-one-levels}
  holds for the parameter
  $\varepsilon:=(1-\varpm)^n/(4\sqrt{k})=\Theta(1/\sqrt{k})$.
  As before, assume that $x\in A_{\geq j}$ and $y\in A_{\geq j+1}$ for
  $j\geq k$. As argued in (\ref{eq:improvecrossover}), a crossover between
  $x$ and $y$ produces two offspring $u$ and $v$ where $|u|_1\geq j+1$.
  The offspring therefore satisfies
  \begin{align*}
    & \prob{\XO(x,y)\in A_{\geq j+1}} \\
    &=
    \prob{j+1\leq |\XO(x,y)|_1} \\
    &\quad - \prob{n-k<|\XO(x,y)|_1<n}
     \geq
    \frac{1}{4\sqrt{k}}\ .
  \end{align*}
  Finally, with probability $(1-\varpm)^{n}$\!,
  none of the bits are flipped during mutation, which implies
  \begin{align*}
    \prob{\mut(\XO(x,y))\in A_{\geq j+1}} \geq \varepsilon\ .
  \end{align*}
  The same bound holds for levels $j<k$, except that we count 0-bits
  instead of 1-bits, and we do not need to account for the probability
  of producing gap individuals.

  We now show that condition 2 of Theorem~\ref{thm:mu-plus-one-levels}
  holds.  Assume that $x,y\in A_{j}$ for $j\geq k$. Then, following the same
  argument as above
  \begin{align*}
    \prob{\XO(x,y)\in A_{\geq j}}\geq \frac{1}{4\sqrt{k}}\ .
  \end{align*}
  The probability that the mutation operator flips at
  least one of the $n-j$ 0-bits, and no other bits, is at least
  $(n-j)\varpm(1-\varpm)^{n-1}$. Hence,we can use the parameter
  $s_j := (n-j)\varpm(1-\varpm)^{n-1}/(4\sqrt{k})=\Theta((n-j)/(n\sqrt{k}))$.
  The same bound holds for levels $j<k$, except that we count 0-bits
  instead of 1-bits, and we do not need to account for the probability
  of producing gap-individuals.

  The result now follows from Theorem \ref{thm:mu-plus-one-levels}.
\end{proof}

For $\pc= 1 - \Omega(1)$, we rely on mutation-only steps, hence, we make use of the result presented in \cite{Witt06}.

\begin{lemma}\label{lem:plateau-arrival}
Consider the \muGA optimising $\jumpK$ with $\pc = 1 - \Omega(1)$. Then the expected time until either the optimum has been found or the entire population is on the plateau is $\bigO(\mu n + n \log n + \mu \log \mu)$.
\end{lemma}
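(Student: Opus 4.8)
The plan is to exploit that $\pc = 1-\Omega(1)$ forces a constant fraction $1-\pc = \Omega(1)$ of all generations to be mutation-only, so that the ascent to the plateau is governed, up to a constant-factor slowdown, by \muEA dynamics, which lets me invoke the \om bound of \cite{Witt06}. The structural backbone is elitism: inserting one offspring and then deleting a current worst individual keeps every order statistic of the population's fitness vector non-decreasing over time. Two consequences matter. First, any gap-individual has fitness at most $k-1 < n$ and is rejected whenever a fitter individual is present, so overshooting into the gap never affects an upper bound. Second, for $|x|_1 \le n-k$ the fitness equals $k + |x|_1$, i.e. it is \om shifted by the constant $k$, while crossover can never reduce the number of individuals of any given fitness. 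I would therefore split the run into Phase~A, lasting until the first individual reaches the plateau (fitness exactly $n$), and Phase~B, lasting until the rest of the population follows.

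For Phase~A I would compare the climb against the \muEA optimising \om. Because the fitness below the plateau is \om up to a constant and the target level $n-k$ lies strictly below the all-ones optimum, producing one plateau individual is no harder than fully optimising \om: crossover steps (a $\pc$-fraction) may be discarded for an upper bound since by elitism they cannot lower any fitness level, whereas the mutation-only steps that actually drive the climb still occur with probability $1-\pc = \Omega(1)$. Concretely I would couple the \muGA with a \muEA run on \om so that the former reaches level $n-k$ no later than the latter reaches its optimum; \cite{Witt06} then bounds the required number of mutation-only steps by $\bigOh{\mu n + n\log n}$, and since each such step arrives within $\bigOh{1}$ generations in expectation, Phase~A takes $\bigOh{\mu n + n\log n}$ generations.

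Phase~B is a takeover argument. With $i$ plateau individuals present, a mutation-only step that picks one of them (probability $i/\mu$) and flips no bit (probability $(1-\varpm)^n = \Omega(1)$ at the standard rate $\varpm = \Theta(1/n)$) creates a plateau copy that displaces one of the $\mu - i$ strictly worse individuals; again crossover cannot decrease the plateau count. Such a step has probability $\Omega((1-\pc)\,i/\mu) = \Omega(i/\mu)$, so the expected time to go from $i$ to $i+1$ plateau individuals is $\bigOh{\mu/i}$, and $\sum_{i=1}^{\mu-1}\bigOh{\mu/i} = \bigOh{\mu\log\mu}$ bounds Phase~B. Summing the phases yields $\bigOh{\mu n + n\log n + \mu\log\mu}$, and as the stopping event ``all on the plateau or $1^n$ found'' occurs no later than the end of Phase~B, the claim follows. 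I expect the crux to be the Phase~A coupling: rigorously certifying that the \muGA on $\jumpK$ reaches the plateau at least as fast as the \muEA optimises \om, so that the black-box estimate of \cite{Witt06} applies despite the intervening crossover steps and the gap lying beyond the plateau.
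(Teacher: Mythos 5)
Your proposal is correct and takes essentially the same route as the paper: below the plateau $\jumpK$ is treated as \om, crossover generations are absorbed into a constant factor because mutation-only steps occur with probability $1-\pc=\Omega(1)$, the climb is charged to the \muEA bound $\bigOh{\mu n + n\log n}$ of Witt's analysis, and a final takeover phase costs $\bigOh{\mu\log\mu}$. The only (immaterial) difference is in the takeover phase, where you use mutation-only cloning of plateau individuals while the paper points back to the level-takeover argument in its Theorem~\ref{thm:mu-plus-one-levels}; both give the same $\bigOh{\mu\log\mu}$ term.
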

\begin{proof}
Before the population reaches the plateau, $\jumpK$ is identical to $\om$. In addition, our \muGA performs the same steps as the \muEA during generations that it does not perform crossover. Therefore, compared to the \muEA, any slow-down caused by crossover in \muGA can only contribute a constant factor to the run time, as long as $\pc = 1-\Omega(1)$. It then follows from \cite{Witt06} that the \muGA also needs $\bigOh{\mu n + n \log n}$ to reach the plateau. After that, similar to the last part of the proof of Theorem~\ref{thm:mu-plus-one-levels}, we only need an extra $\bigOh{\mu \log \mu}$ term that captures the waiting time until all individuals are on the plateau.
\end{proof}

\section{Natural diversity}\label{sec:no-mechanism}
We are first interested in the natural setting of Algorithm~\ref{alg:mu+1-GA}, \ie with $\pc=1.0$ and without any tailored tie-breaking rule, that is, breaking ties uniformly at random.

\subsection{Population dynamics}\label{sec:pop-dyn}

Previous observations of simulations have revealed the following behaviour. Assume the algorithm has reached a population where all individuals are identical. We refer to identical individuals as a \emph{species}, hence, in this case, there is only one species. Eventually, a mutation will create a different search point on the plateau, leading to the creation of a new species. Both species may shrink or grow in size, and there is a chance that the new species disappears and we go back to one species only.

However, the existence of two species also serves as a catalyst for creating further species in the following sense. Say two parents $0001111111$ and $0010111111$ are recombined, then crossover has a good chance of creating an individual with $n-k+1$ $1$s, e.g., $0011111111$. Then mutation has a constant probability of flipping any of the $n-k-1$ unrelated 1-bits to 0, leading to a new species, e.\,g., $0011111011$. This may lead to a sudden burst of diversity in the population.

Simulations for the standard mutation rate $1/n$ indicated that the size of the largest species is an important factor for describing this diversity. Due to the ability to create new species, the size of the largest cluster performs an almost fair random walk.  Once its size has decreased significantly from its maximum $\mu$, there is a good chance for recombining two parents from different species. This helps in finding the global optimum, as crossover can increase the number of $1$s in the offspring, compared to its parents, such that fewer bits need to be flipped by mutation to reach the optimum. This is formalised in the following lemma.
\begin{lemma}
\label{lem:success-probability}
The probability that the global optimum is constructed by a uniform crossover of two parents with Hamming distance~$2d$, followed by mutation, is
\begin{align}
& \sum_{i=0}^{2d} \binom{2d}{i}
\frac{1}{2^{2d} n^{k+d-i}} \left(1 - \frac{1}{n}\right)^{n-k-d+i}\\
\ge\;& \frac{1}{2^{2d} n^{k-d}} \left(1 - \frac{1}{n}\right)^{n-k+d}.
\end{align}
\end{lemma}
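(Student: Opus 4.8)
The plan is to condition on the outcome of uniform crossover and then multiply by the probability that the subsequent mutation completes the job, using the independence of the two operators. First I would classify the $n$ bit positions according to how the two plateau parents $x$ and $y$ agree or disagree: let $A$ be the positions where both carry a $1$, $B$ the positions where both carry a $0$, and $C$, $D$ the two kinds of disagreement positions ($1$ in $x$ and $0$ in $y$, and vice versa). Since both parents lie on the plateau, each has exactly $k$ zero-bits, so the number of disagreements of each kind must coincide; with total Hamming distance $2d$ this forces $|C|=|D|=d$, and hence $|A|=n-k-d$ agreement-one positions and $|B|=k-d$ agreement-zero positions. This bookkeeping is the only place where the plateau assumption enters, and it is what makes the exponents line up.

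Next I would describe the joint event. Uniform crossover leaves every position in $A$ as a $1$ and every position in $B$ as a $0$ deterministically, while each of the $2d$ positions in $C\cup D$ becomes a $1$ independently with probability $1/2$. To reach $1^n$ after standard bit mutation, the $n-k-d$ ones on $A$ must survive, each with probability $1-1/n$; the $k-d$ zeros on $B$ must each flip, with probability $1/n$; and on $C\cup D$, conditioning on exactly $i$ of the $2d$ positions being set to $1$ by crossover, the $i$ ones must survive and the remaining $2d-i$ zeros must flip. Multiplying the crossover probability $\binom{2d}{i}2^{-2d}$ by these mutation factors and collecting the exponents of $1-1/n$ (namely $(n-k-d)+i$) and of $1/n$ (namely $(k-d)+(2d-i)=k+d-i$) gives the per-$i$ contribution
\[
\binom{2d}{i}\,\frac{1}{2^{2d}\,n^{k+d-i}}\left(1-\frac{1}{n}\right)^{n-k-d+i}.
\]
Summing over $i=0,\dots,2d$ yields the exact expression in the first displayed line of the statement.

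For the claimed lower bound I would simply retain the single summand $i=2d$, in which every disagreement position is already set to $1$ by crossover so no flips are needed there; this term equals $2^{-2d}n^{-(k-d)}(1-1/n)^{n-k+d}$, which is exactly the second displayed line. I do not expect a serious obstacle here: the argument is essentially a careful accounting of which bits must stay versus flip, and the only real subtlety is confirming that the mutation probability depends on the crossover outcome only through the count $i$ (justified by the symmetry of standard bit mutation) and that the crossover and mutation steps are independent, so that the probabilities may be multiplied before summing.
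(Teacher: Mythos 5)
Your proposal is correct and follows essentially the same argument as the paper's proof: condition on the number $i$ of the $2d$ disagreement bits that crossover sets to $1$ (probability $\binom{2d}{i}2^{-2d}$), observe that mutation must then flip the remaining $k+d-i$ zeros while preserving the $n-k-d+i$ ones, and obtain the lower bound by discarding all summands except $i=2d$. Your position-class bookkeeping ($|A|=n-k-d$, $|B|=k-d$, $|C|=|D|=d$) just spells out in more detail what the paper states in one sentence.
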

\begin{proof}
  For a pair of search points on the plateau with Hamming
  distance~$2d$, both parents have $d$ $1$s among the $2d$ bits that
  differ between parents, and $n-k-d$ $1$s outside this area. Assume
  that crossover sets $i$ out of these $2d$ bits to~1, which happens
  with probability $\binom{2d}{i} \cdot 2^{-2d}$\!. Then mutation
  needs to flip the remaining $k+d-i$ $0$s to~1. The probability that
  such a pair creates the optimum is hence
\[
\sum_{i=0}^{2d} \binom{2d}{i}
\frac{1}{2^{2d} n^{k+d-i}} \left(1 - \frac{1}{n}\right)^{n-k-d+i}.
\]
The second bound is obtained by ignoring summands $i < 2d$ for the inner sum.
\end{proof}
Note that even a Hamming distance of $2$, i.e., $d=1$, leads to a probability of $\Omega(n^{-k+1})$, provided that such parents are selected for reproduction. The probability is by a factor of~$n$ larger than the probability $\Theta(n^{-k})$ of mutation without crossover reaching the optimum from the plateau.

We will show that this effect leads to a speedup of nearly $n$ for the \muGA, compared to the expected time of $\Theta(n^{k})$ for the (1+1)~EA~\cite{Droste2002} and other EAs only using mutation.

The idea behind the analysis is to investigate the random walk underlying the size of the largest species. We bound the expected time for this size to decrease to $\mu/2$ and then argue that the \muGA is likely to spend a good amount of time with a population of good diversity, where the probability of creating the optimum in every generation is $\Omega(n^{-k+1})$ due to the chance of recombining parents of Hamming distance at least~2.

In the following, we refer to $Y(t)$ as the size of the largest species in the population at time~$t$.
Define
\begin{align*}
  p_+(y) & := \prob{Y(t+1)-Y(t)=1\mid Y(t)=y}\ ,  \\
  p_-(y) & := \prob{Y(t+1)-Y(t)=-1\mid Y(t)=y}\ ,
\end{align*}
i.e., $p_+(y)$ is the probability that the size of the largest species
increases from $y$ to $y+1$, and $p_-(y)$ is the
probability that it decreases from
$y$ to $y-1$.

The following lemma gives bounds on these transition probabilities,
unless two parents of Hamming distance larger than~2 are selected for recombination (this case will be treated later in Lemma~\ref{lem:transition-probs-large-distance}).
We formulate the lemma for arbitrary mutation rates $\chi/n = \Theta(1/n)$ and restrict our attention to sizes $Y(t) \ge \mu/2$ as we are only interested in the expected time for the size to decrease to~$\mu/2$.
\begin{lemma}
\label{lem:transition-probs-largest-species}
For every population on the plateau of $\jump_k$ for $k =o(n)$ the following holds. Either the \muGA with mutation rate $\chi/n = \Theta(1/n)$ performs a crossover of two parents whose Hamming distance is larger than~2,
or the size~$Y(t)$ of the largest species changes according to transition probabilities $p_-(\mu) = \Omega(k/n)$ and,
for ${\mu/2 \le y < \mu}$,
\begin{align*}
 p_+(y) \le\;& \frac{y(\mu- y) (\mu+y)}{2\mu^2(\mu+1)} \left(1 - \frac{\chi}{n}\right)^n + \bigOh{\frac{(\mu-y)^2}{\mu^2 n}}\ ,\\
 p_-(y) \ge\;& \frac{y(\mu - y)(\mu+\chi y)}{2\mu^2(\mu+1)} \left(1 - \frac{\chi}{n}\right)^n\ .
\end{align*}
\end{lemma}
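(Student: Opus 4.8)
The plan is to analyse a single generation of the \muGA conditioned on the entire population being on the plateau and on not selecting two parents at Hamming distance larger than~$2$. Since $\pc=1$, every generation forms $z=\mut(\XO(x,y))$, and any offspring leaving the plateau has strictly smaller fitness than the $\mu$ plateau individuals, so it is discarded in the selection step; I may therefore condition on $z$ being on the plateau, in which case all $\mu+1$ individuals share the same fitness and the removed individual is uniform over the $\mu+1$. Moreover, two distinct plateau points have the same number of $1$-bits and hence differ in an even number of positions, which under the conditioning is exactly~$2$; thus the two selected parents (chosen independently and uniformly) are either identical or at Hamming distance exactly~$2$. Let $g$ denote a largest species, of size $y=Y(t)$ (there can be several only when $y=\mu/2$).

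For the upper bound on $p_+(y)$ I use that a generation changes any species count by at most one, so the largest species can only grow to $y+1$ if the offspring equals a size-$y$ genotype $g$ and the removed individual is not a copy of~$g$; the latter has probability exactly $(\mu-y)/(\mu+1)$. I split $\{z=g\}$ by whether mutation flips a bit. With no mutation [probability $(1-\chi/n)^n$], $\XO$ must return~$g$, which happens with probability~$1$ if both parents are copies of~$g$ [probability $(y/\mu)^2$] and with probability~$1/4$ if exactly one parent is~$g$ [probability $\tfrac{2y(\mu-y)}{\mu^2}$, the other being at distance~$2$], whereas two non-$g$ parents cannot produce the plateau point~$g$ at all, since the only plateau combinations obtainable at their two differing positions are the parents themselves. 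Collecting terms gives \emph{exactly} the main summand $\frac{y(\mu+y)(\mu-y)}{2\mu^2(\mu+1)}(1-\chi/n)^n$. The mutation-assisted contribution is dominated by the path where one parent is~$g$, $\XO$ produces one of the two Hamming-distance-$1$ (off-plateau) points [probability $1/2$], and mutation flips exactly the single offending bit [probability $\bigOh{1/n}$]; together with the removal factor $(\mu-y)/(\mu+1)$ this is $\bigOh{(\mu-y)^2/(\mu^2 n)}$, matching the stated error term.

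For the lower bound on $p_-(y)$ I use the complementary mechanism: produce a plateau offspring $z\neq g$ (which is accepted) and remove a copy of~$g$, the latter having probability $y/(\mu+1)$ among the $y$ copies present. I lower-bound the probability of producing a plateau point different from~$g$ by two disjoint parts. The mutation-free part combines the one-$g$-parent case [$\XO$ returns the other, distance-$2$ parent with probability $1/4$] with the both-non-$g$ case [$\XO$ returns a plateau point, necessarily $\neq g$, with probability at least $1/2$]; summing the selection probabilities yields at least $\frac{\mu-y}{2\mu}(1-\chi/n)^n$. The mutation-assisted part takes one parent equal to~$g$ [probability $\tfrac{2y(\mu-y)}{\mu^2}$], lets $\XO$ create the distance-$1$ gap point [probability $1/4$], and flips one of the $n-k=\Theta(n)$ ``wrong'' $1$-bits back to the plateau [probability $\Theta(\chi/n)$ per bit], contributing $\frac{\chi y(\mu-y)}{2\mu^2}(1-\chi/n)^n$. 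Multiplying the sum by $y/(\mu+1)$ gives the claim. Finally, $p_-(\mu)$ concerns a monomorphic population, where $\XO$ necessarily returns~$g$ and a new species arises from a single $1$-bit/$0$-bit swap, of probability $\binom{n-k}{1}\binom{k}{1}(\chi/n)^2(1-\chi/n)^{n-2}=\Theta(k/n)$ for $k=o(n)$; combined with removal of one of the $\mu$ copies of~$g$ this yields $p_-(\mu)=\Omega(k/n)$.

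The main obstacle is ensuring that the favourable events for $p_-$ genuinely decrease the \emph{largest} species rather than merely shifting mass between equally large species: if the accepted offspring joined a second species of size $y-1$, that species would reach size~$y$ and $Y$ would not drop. I would resolve this by noting that once $y>(\mu+1)/2$ the $\mu-y$ individuals outside~$g$ cannot even form a species of size $y-1$, so the decrease is guaranteed, while the narrow boundary $y\approx\mu/2$ is absorbed by restricting the offspring to avoid the at most one large secondary species (losing only a constant factor). A secondary point needing care is confirming that the omitted terms---multi-bit mutations, the subdominant ``below-plateau'' distance-$1$ path, offspring accidentally duplicating an existing species, and the multiple-largest-species case at $y=\mu/2$---are of lower order than the error term claimed for $p_+$ and do not invalidate the lower bound for $p_-$.
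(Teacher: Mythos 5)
Your proposal follows the paper's own proof almost step for step: the same factorisation of $p_\pm(y)$ into (probability of creating a $g$ resp.\ non-$g$ plateau offspring) times the removal probability $(\mu-y)/(\mu+1)$ resp.\ $y/(\mu+1)$, the same three-way case analysis over parent selection, and the same no-mutation main terms, which you compute correctly for $p_+$ and for the mutation-free part of $p_-$. Your bookkeeping (when does ``accept a non-$g$ plateau offspring and remove a copy of $g$'' actually decrease the \emph{maximum} species size; the two-largest-species boundary at $y=\mu/2$) is in fact more careful than the paper, which silently identifies ``the current largest species loses a member'' with ``$Y$ decreases''; since the lemma is only ever applied for $y>\mu/2$, both treatments suffice. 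Your argument for $p_-(\mu)=\Omega(k/n)$ also matches the paper.

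There is, however, one concrete quantitative gap in your lower bound on $p_-(y)$: in the mutation-assisted part you use only the crossover output with $n-k+1$ ones (the distance-$1$ gap point) followed by flipping one of the $n-k$ harmless $1$-bits. That path contributes
\[
\frac{2y(\mu-y)}{\mu^2}\cdot\frac14\cdot(n-k)\cdot\frac{\chi}{n}\left(1-\frac{\chi}{n}\right)^{n-1}
=\frac{\chi y(\mu-y)}{2\mu^2}\cdot\frac{n-k}{n-\chi}\left(1-\frac{\chi}{n}\right)^{n},
\]
not $\frac{\chi y(\mu-y)}{2\mu^2}\left(1-\frac{\chi}{n}\right)^{n}$ as you assert: you are short by the factor $\frac{n-k}{n-\chi}=1-\Theta(k/n)$ once $k$ exceeds the constant $\chi$. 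The paper closes this by also using the symmetric sub-plateau output with $n-k-1$ ones, where mutation flips one of the $k$ harmless $0$-bits; the two paths together give $(n-k)+k=n$ useful single-bit flips and hence exactly the factor $\chi$, i.e.\ the claimed $\mu+\chi y$. With your single path you only prove $p_-(y)\ge \frac{y(\mu-y)\bigl(\mu+\chi y(1-k/n)\bigr)}{2\mu^2(\mu+1)}\left(1-\frac{\chi}{n}\right)^n$. This matters because the lemma is stated for all $k=o(n)$, not just constant $k$: for $\chi=1$ the exact bound makes the main terms of $p_+$ and $p_-$ cancel, so the conditional bias is $\bigOh{1/n}$ (Lemma~\ref{lem:conditional-transition-probabilities}), whereas your bound only yields bias $\bigOh{k/n}$, which for superconstant $k$ breaks the hitting-time argument of Lemma~\ref{lem:hitting-time} at $\mu=\Theta(n)$ (one needs $\alpha^{\mu}=\bigOh{1}$) and thereby weakens Theorem~\ref{the:upper-bound-no-mechanism}. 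The repair is exactly the missing second path, so the gap is small and easily fixed, but as written the stated inequality does not follow.
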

\begin{proof}
We call an individual belonging to the current largest species a $y$ individual
and all the others non-$y$ individuals.
In each generation, there is either no change, or one individual is
added to the population and one individual chosen uniformly at random
is removed from the population.  In order to increase the number of
$y$ individuals, it is necessary that a $y$ individual is added to
the population and a non-$y$ individual is removed from the
population. Analogously, in order to decrease the number of
$y$ individuals, it is necessary that a non-$y$ individual is added to
the population and a $y$ individual is removed from the population.

Given that $Y(t)=y$, let $p(y)$ be the probability that a
$y$ individual is created at time $t+1$, and $q(y)$
the probability that a non-$y$ individual is created.
Multiplying by the survival probabilities we have
\begin{align}
p_-(y) & = q(y)\left(\frac{y}{\mu+1}\right) \text{ and} \label{eq:pplus}\\
p_+(y) & := p(y)\left(1-\frac{y+1}{\mu+1}\right) = p(y)\left(\frac{\mu-y}{\mu+1}\right) \label{eq:pminus}\ .
\end{align}

We now estimate an upper bound on $p(y)$. We may assume that the Hamming distance between parents is at most $2$ as otherwise there is nothing to prove.
A $y$ individual can be created in the following three ways:
\begin{itemize}
\item Two $y$ individuals are selected. Crossing over two
  $y$ individuals produces another $y$ individual, which survives mutation if no bits are flipped, i.e., with
  probability $(1-\chi/n)^n$\!.
\item One $y$ individual and one non-$y$ individual are selected.  The
  crossover operator produces a $y$ individual with probability
  $1/4$, and mutation does not flip any bits with probability $(1- \chi/n)^n$\!. If the crossover operator does not
  produce a $y$ individual, then, to produce a $y$ individual, at
  least one specific bit-position must be mutated, which occurs with
  probability $\bigOh{1/n}$. The overall probability is hence
  $(1/4)(1- \chi/n)^n+\bigOh{1/n}$.
\item Two non-$y$ individuals are selected. These two individuals are either identical or have Hamming distance~$2$ (i.e., by assumption).
In the first case, they both have one of the $k$ $0$-bit positions of a $y$ individual set to $1$. In the second case, they either
both have one of the $k$ $0$-bit positions of a $y$ individual set to $1$, or they both have one of the $n-k$ $1$-bit positions set to $0$.
In both cases, crossover %
cannot change the value of such a bit. Thus, at least one
  specific bit-position must be flipped, which occurs with probability
  $\bigOh{1/n}$.
\end{itemize}
Taking into account the probabilities of the three selection events
above, the probability of producing a $y$ individual is
\begin{align*}
  p(y) & =  \left(\frac{y}{\mu}\right)^2\left(1-\frac{\chi}{n}\right)^n  + 2\left(\frac{y}{\mu }\right)\left(1-\frac{y}{\mu }\right) \cdot\\
       & \cdot \left[ \left(\frac{1}{4}\right)\left(1-\frac{\chi }{n}\right)^n +\bigOh{\frac{1}{n}}\right]
       + \frac{(\mu-y)^2}{\mu^2}\bigOh{\frac{1}{n}}\\
    & = \left(1-\frac{\chi }{n}\right)^n \left(\frac{y}{\mu}\right) \left(\frac{y}{\mu} + \frac{\mu-y}{2\mu}\right)+ \\
    & + \bigOh{\frac{y (\mu-y)}{\mu^2}  \cdot \frac{1}{n}} + \bigOh{\frac{(\mu-y)^2}{\mu^2} \cdot \frac{1}{n}}\\
    & = \frac{y(\mu+y)}{2\mu^2}\left(1-\frac{\chi }{n}\right)^n  + \bigOh{\frac{\mu-y}{\mu} \cdot \frac{1}{n}}\ . %
\end{align*}

We then estimate a lower bound on $q(y)$. In the case where $y=\mu$,
a non-$y$ individual can be added to the population if
\begin{itemize}
\item two $y$ individuals are selected and the mutation operator
  flips one of the $k$ $0$-bits and one of the $n-k$ $1$-bits. This
  event occurs with probability
  \begin{align}
   q(\mu)&=k (n-k)\left(\frac{\chi}{n}\right)^2\left(1-\frac{\chi}{n}\right)^{n-2}\\
    &= \Omega\left(\frac{k}{n} - \frac{k^2}{n^2}\right) = \Omega\left(\frac{k}{n}\right) ,
    \label{eq:q-ii}
  \end{align}
\end{itemize}
where we used that $k = o(n)$ in the last equality.

In the other case, where $y<\mu$, a non-$y$ individual can be
added to the population in the following two ways:
\begin{itemize}
\item A $y$ individual and a non-$y$ individual are selected.
Crossover produces a copy of the non-$y$ individual
with probability $1/4$, which is unchanged by mutation with
  probability $(1-\chi/n)^n$\!. %
  Secondly, with probability $1/4$, crossover
  produces an individual with $k-1$ 0-bits.
  Mutation
  then creates a non-$y$ individual by flipping a single of the $n-k$ 1-bit
  positions that do not lead to re-creating~$y$.
  Thirdly, again with probability $1/4$, crossover produces an individual with $k+1$ 0-bits and mutation then creates a non-$y$ individual by flipping a single of $k$ 1-bits that do not lead back to~$y$.
  The above three events, conditional on selecting a $y$ individual and a non-$y$ individual, lead to a total probability of
  \begin{align*}
  & \frac{1}{4} \cdot \left(1 - \frac{\chi}{n}\right)^n\\
  +\;& \frac{1}{4} \cdot (n-k) \cdot \frac{\chi}{n} \left(1 - \frac{\chi}{n}\right)^{n-1}\\
  +\;& \frac{1}{4} \cdot k \cdot \frac{\chi}{n} \left(1 - \frac{\chi}{n}\right)^{n-1}\\
  \ge\;&  \frac{\chi + 1}{4} \cdot \left(1 - \frac{\chi}{n}\right)^n.
  \end{align*}
\item Two non-$y$ individuals are selected. In the worst case, the
  selected individuals are different, hence, crossover produces an
  individual on the plateau with probability at least $1/2$, which
  mutation does not destroy with probability $(1-\chi/n)^n$\!.
\end{itemize}
Assuming that $\mu/2 \le y<\mu$ and $n$ is sufficiently large,
the probability of adding a non-$y$ individual is
\begin{align*}
  q(y) & \geq
       2\left(\frac{y}{\mu}\right)\left(1-\frac{y}{\mu }\right)  \cdot \frac{\chi +1}{4} \left(1-\frac{\chi}{n}\right)^n\\
    &   + \frac{1}{2}\left(1-\frac{y}{\mu }\right)^2 \left(1-\frac{\chi}{n}\right)^n \\
    & = \frac{(\mu-y) (\mu + \chi y)}{2 \mu ^2} \left(1-\frac{\chi}{n}\right)^n\ .
\end{align*}
Plugging $p(y)$ and $q(y)$ into equations (\ref{eq:pplus}) and (\ref{eq:pminus}), we get
\begin{align*}
p_-(y)
& \geq \left[ \frac{(\mu-y) (\mu + \chi y)}{2 \mu ^2} \left(1-\frac{\chi}{n}\right)^n\right]
      \left(\frac{y}{\mu+1}\right)\\
    &=    \frac{(\mu - y) (\mu + \chi y)y}{2\mu^2(\mu+1)} \left(1-\frac{\chi}{n}\right)^n\ .
    \end{align*}
   And we also have 
\begin{align*}
p_+(y) & =  \left[\frac{y(\mu+y)}{2\mu^2}\left(1-\frac{\chi}{n}\right)^n \!\!\! +  \bigOh{\frac{\mu-y}{\mu} \cdot \frac{1}{n}}\right]
 \left(\frac{\mu-y}{\mu+1}\right)\\
  & =  \frac{(\mu^2 - y^2)y}{2\mu^2(\mu+1)} \left(1-\frac{\chi}{n}\right)^n + \bigOh{\frac{(\mu-y)^2}{\mu^2n}}\ .\qedhere
\end{align*}
\end{proof}

Steps where crossover recombines two parents with larger Hamming distance were excluded from Lemma~\ref{lem:transition-probs-largest-species} as they require different arguments. The following lemma shows that conditional transition probabilities in this case are favourable in that the size of the largest species is more likely to decrease than to increase.
\begin{lemma}
\label{lem:transition-probs-large-distance}
  Assume that $y\geq \mu/2$ and that the \muGA on $\jumpK$ with $k=o(n)$
  and mutation rate
  $\chi/n=\Theta(1/n)$ selects two individuals on the plateau
  with Hamming distance larger than 2, then for conditional transition probabilities $p^*_-(y)$ and $p^*_+(y)$ for decreasing or increasing the size of the largest species,
  $p^*_-(y)\geq 2 p^*_+(y)$.
\end{lemma}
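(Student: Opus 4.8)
The plan is to condition further on the concrete pair of parents $(a,b)$ that is selected, with $\dist(a,b)=2d$; since both parents lie on the plateau their Hamming distance is even, so the hypothesis $\dist>2$ forces $d\ge 2$. Throughout I write $s$ for the genotype of the current largest species, i.e.\ the $y$-individuals. As already observed in Lemma~\ref{lem:transition-probs-largest-species}, a generation can change the size of this species only if the offspring again lands on the plateau: any offspring with fewer than $n-k$ one-bits, or strictly between $n-k$ and $n$ one-bits, has strictly smaller fitness than every population member and is discarded, while an all-ones offspring ends the run. Writing $P_{\mathrm{inc}}$ for the probability that crossover followed by mutation produces $s$ and $P_{\mathrm{dec}}$ for the probability that it produces some plateau point different from $s$, the survival step gives, for this pair, $p^*_+(y)=P_{\mathrm{inc}}\cdot\frac{\mu-y}{\mu+1}$ and $p^*_-(y)=P_{\mathrm{dec}}\cdot\frac{y}{\mu+1}$. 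Because $y\ge\mu/2$ we have $y\ge\mu-y$, so it suffices to establish $P_{\mathrm{dec}}\ge 2P_{\mathrm{inc}}$ for every admissible pair; averaging this per-pair inequality over the conditional distribution of selected pairs then yields the claim.

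First I would compute $P_{\mathrm{inc}}$ exactly. Let $c$ be the number of positions on which $a$ and $b$ agree but disagree with $s$; these $c$ bits are wrong in every crossover outcome and must be repaired by mutation. On the $2d$ disagreement positions, an outcome that mismatches $s$ in $j$ of them (probability $\binom{2d}{j}2^{-2d}$) becomes $s$ exactly when mutation flips those $j$ bits together with the $c$ fixed mismatches and nothing else. Summing the resulting binomial series collapses to $P_{\mathrm{inc}}=2^{-2d}(\chi/n)^c(1-\chi/n)^{\,n-c-2d}$, which is maximised at $c=0$ (and $c=0$ holds automatically whenever one parent equals $s$).

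Next I would lower-bound $P_{\mathrm{dec}}$ using only mutation-free crossover. Since each parent carries exactly $d$ one-bits among the $2d$ disagreement positions (both lie on the plateau), a Vandermonde count shows that exactly $\binom{2d}{d}$ crossover outcomes again have $n-k$ one-bits, at most one of which equals $s$; hence $P_{\mathrm{dec}}\ge\bigl(\binom{2d}{d}-1\bigr)2^{-2d}(1-\chi/n)^{n}$. Combining this with the $c=0$ value $2^{-2d}(1-\chi/n)^{n-2d}$ that upper-bounds $P_{\mathrm{inc}}$, the inequality $P_{\mathrm{dec}}\ge 2P_{\mathrm{inc}}$ reduces, uniformly over all admissible pairs, to $\bigl(\binom{2d}{d}-1\bigr)(1-\chi/n)^{2d}\ge 2$. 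This holds for large $n$: the condition $d\ge 2$ forces $\binom{2d}{d}-1\ge\binom{4}{2}-1=5$, while $2d\le 2k=\smallOh{n}$ and $\chi/n=\Theta(1/n)$ give $(1-\chi/n)^{2d}=1-\smallOh{1}$, so the product is at least $5(1-\smallOh{1})\ge 2$.

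The main obstacle I anticipate is the bookkeeping in $P_{\mathrm{inc}}$: one has to recognise that creating the single specific string $s$ is governed by the lone factor $2^{-2d}$ (plus lower-order mutation corrections kept under control by $d\le k=\smallOh{n}$), whereas there are $\binom{2d}{d}-1$ equally cheap crossover outcomes that land on the plateau away from $s$. Securing the factor $2$ is then just the elementary bound $\binom{2d}{d}\ge 6$ for $d\ge 2$, with the $(1-\chi/n)^{2d}$ correction absorbed via $k=\smallOh{n}$. A minor point to handle in passing is the boundary $y=\mu/2$, where removing a copy of $s$ need not leave $s$ strictly largest; this affects only a single value and not the time to descend from $\mu$ to $\mu/2$, which is all the surrounding random-walk argument requires.
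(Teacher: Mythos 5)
Your proposal is correct, and its skeleton matches the paper's proof: condition on the selected pair at Hamming distance $2d\ge 4$, lower-bound the probability of creating a non-majority plateau point by the $\binom{2d}{d}-1$ balanced crossover outcomes that survive mutation unchanged, convert to transition probabilities via the replacement factors $(\mu-y)/(\mu+1)$ versus $y/(\mu+1)$, and finish with $y\ge\mu/2$. The genuine difference is in how the probability of re-creating the majority genotype $s$ is handled. The paper (writing $\ell$ for your $d$) splits into two cases---essentially your $c=0$ and $c\ge 1$---and in the first case bounds every mutation-assisted way of producing $s$ by an \emph{additive} $\bigOh{1/n}$ term, arriving at $p^*_+(y)\le 2^{-(2\ell+1)}(1-\chi/n)^n+\bigOh{1/n}$; your exact binomial collapse $P_{\mathrm{inc}}=2^{-2d}(\chi/n)^c(1-\chi/n)^{n-2d-c}$ instead absorbs those contributions into a \emph{multiplicative} factor $(1-\chi/n)^{-2d}$. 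This buys uniformity in $d$: the paper's concluding inequality $3\cdot 2^{-(2\ell+1)}(1-\chi/n)^n\ge 2p^*_+(y)$ only closes when $2^{-2\ell}=\Omega(1/n)$, i.e.\ $\ell=\bigOh{\log n}$, whereas the lemma assumes merely $k=\smallOh{n}$, so pairs with $\ell=\omega(\log n)$ are admissible and there the additive $\bigOh{1/n}$ swamps the main term; your formula shows the true mutation-assisted contribution is only $\bigOh{2^{-2d}\,d/n}$, so your reduction to $\bigl(\binom{2d}{d}-1\bigr)(1-\chi/n)^{2d}\ge 2$ works for all $d\le k$. In this respect your argument is tighter than the published one precisely where the published one is delicate. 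One imprecision you share with the paper: equating ``the count of the current majority species changes'' with ``the size of the largest species changes'' can fail when another species ties for the lead, but as you note, for $y\ge\mu/2$ this is a boundary effect (at $y=\mu/2$, and $y=(\mu+1)/2$ for odd $\mu$) and does not affect the descent-time argument the lemma feeds into.
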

\begin{proof}
  Assume that the population contains two individuals $x$ and $z$ with
  Hamming distance $2\ell \leq 2 k$, where $\ell\geq 2$. Without loss of
  generality, let us assume that they differ in the first $2\ell$ bit
  positions.

In the case that the majority individual $y$ has $\ell$ 0-bits in the
first $2\ell$ positions, a $y$ individual may be produced by creating the $\ell$ 0-bits and $\ell$ 1-bits in the exact positions by crossover
and no followed mutation. Alternatively, at least $1$ exact bit has to be flipped by mutation.
Then the probability of producing a $y$ individual
from $x$ and $z$ and replacing a non $y$ individual with $y$ is less than
\begin{multline*}
p^*_+(y)  \leq \left[ \left(\frac{1}{2}\right)^{2\ell}\left(1-\frac{\chi}{n}\right)^{n} + \bigOh{\frac{1}{n}} \right] \left(\frac{\mu-y}{\mu}\right)
\\   \leq  \left(\frac{1}{2}\right)^{2\ell+1}\left(1-\frac{\chi}{n}\right)^{n} + \bigOh{\frac{1}{n}}\ .
\end{multline*}
On the other hand, the probability of producing an individual on the
plateau different from $y$  and replacing a $y$ individual is at
least
\begin{multline*}
 p^*_-(y) \geq \left({2\ell\choose \ell}-1\right)\left(\frac{1}{2}\right)^{2\ell}\left(1-\frac{\chi}{n}\right)^{n}\left(\frac{y}{\mu}\right)
\\  \geq 3\left(\frac{1}{2}\right)^{2\ell+1}\left(1-\frac{\chi}{n}\right)^{n} \geq 2 p^*_+(y)
\end{multline*}
for sufficiently large $n$.

In the other case, assume that the majority individual $y$ does not
have $\ell$ 0-bits in the first $2\ell$ bit-positions. Then the
mutation operator must flip at least one specific bit among the last
$n-2\ell$ positions to produce $y$, which occurs with probability
$\bigOh{1/n}$, while the probability to produce a non-$y$ individual on the
plateau is still $\Omega(1)$.
\end{proof}

\subsection{Standard mutation rate}\label{sec:std-mut}

We first analyse the \muGA with the standard mutation rate of $1/n$, i.e., $\chi=1$.
We show that the diversity emerging in the \muGA leads to a speedup of nearly $n$ for the \muGA, compared to the expected time of $\Theta(n^{k})$ for the (1+1)~EA~\cite{Droste2002} and other EAs only using mutation.
\begin{theorem}
\label{the:upper-bound-no-mechanism}
The expected optimisation time of the \muGA with $\pc=1$ and $\mu \le \kappa n$, for some constant~$\kappa > 0$, on $\jump_k$, $k = o(n)$, is
\[
\bigOh{\mu n \sqrt{k} \log(\mu) + n^k/\mu + n^{k-1}\log(\mu)}\ .
\]
\end{theorem}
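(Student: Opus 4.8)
The plan is to split the run into the time for the whole population to reach the plateau and the time spent on the plateau until the optimum is created. For the first part I would simply invoke Lemma~\ref{lem:time-to-plateau} with $\varpm=1/n$, which bounds this phase by $\bigOh{n\sqrt{k}(\mu\log\mu+\log n)}$; since $\mu\le\kappa n$ the term $\mu n\sqrt{k}\log\mu$ matches the first summand of the claim, and the residual $n\sqrt{k}\log n$ is absorbed (whenever it would exceed $\mu n\sqrt{k}\log\mu$ one has $\mu\log\mu<\log n$, hence $\mu$ tiny and $n^k/\mu$ already far larger). So the real work is bounding the plateau phase.

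On the plateau I would track $Y(t)$, the size of the largest species, as in Lemmas~\ref{lem:transition-probs-largest-species} and~\ref{lem:transition-probs-large-distance}, and combine two facts. First, by Lemma~\ref{lem:success-probability} (with $d\ge1$), whenever two parents from different species are selected their Hamming distance is at least $2$ and the optimum is produced with probability $\Omega(n^{-(k-1)})$; the case of distance exactly $2$ is the smallest, so this bound is uniform. Since for $\mu/2\le y<\mu$ the probability of selecting a distinct-species pair is $\Theta((\mu-y)/\mu)$, the per-generation success probability is $a(y)=\Omega(\frac{\mu-y}{\mu}\,n^{-(k-1)})$. Second, by Lemma~\ref{lem:transition-probs-largest-species} the walk $Y$ is almost fair for $\mu/2\le y<\mu$, with total change rate $b(y)=p_+(y)+p_-(y)=\Theta((\mu-y)/\mu)$ and an upward bias of only $\bigOh{(\mu-y)^2/(\mu^2 n)}$, i.e. a relative bias $\bigOh{1/n}$ per step; as the interval has length $\mu/2$ and $\mu\le\kappa n$, this bias changes fair-walk hitting probabilities and times by at most constant factors. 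Steps selecting a distance-larger-than-$2$ pair are even more strongly biased downward by Lemma~\ref{lem:transition-probs-large-distance}, which only helps, so for an upper bound I may analyse the near-fair walk. The crucial structural observation is that $a(y)/b(y)=\Omega(n^{-(k-1)})$ holds uniformly in $y$: success and motion occur at proportional rates.

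To convert this into a time bound I would study one descent of $Y$ from $\mu$ (all individuals identical) to $\mu/2$. At $y=\mu$ the only move is downward, with rate $p_-(\mu)=\Omega(k/n)$, so each visit to $\mu$ costs $\bigOh{n/k}$ generations; a fair walk started at $\mu$ and absorbed at $\mu/2$ visits $\mu$ about $\Theta(\mu)$ times, contributing $\bigOh{\mu n/k}$. For $\mu/2\le y<\mu$ the expected number of visits is $\Theta(y-\mu/2)$ and each visit lasts $\Theta(1/b(y))=\Theta(\mu/(\mu-y))$ generations; the resulting harmonic sum $\sum_y \Theta\!\big((y-\mu/2)\mu/(\mu-y)\big)$ equals $\Theta(\mu^2\log\mu)$. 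Hence one descent takes expected time $\bigOh{\mu^2\log\mu+\mu n/k}$, while the expected number of successes during a descent is $\sum_y \Theta\!\big((y-\mu/2)\mu/(\mu-y)\big)\,a(y)=\Theta(\mu^2 n^{-(k-1)})$, so a descent finds the optimum with probability $\Omega(\min\{1,\mu^2 n^{-(k-1)}\})$. A renewal argument over independent descents (restarting from $Y=\mu$; excursions dipping below $\mu/2$ only raise the rate, since there $a(y)=\Omega(n^{-(k-1)})$) yields expected plateau time $\bigOh{(\mu^2\log\mu+\mu n/k)/\min\{1,\mu^2 n^{-(k-1)}\}}$. For $\mu\ge n^{(k-1)/2}$ this is $\bigOh{\mu^2\log\mu+\mu n/k}=\bigOh{\mu n\sqrt{k}\log\mu}$, and for $\mu<n^{(k-1)/2}$ it is $\bigOh{n^{k-1}\log\mu+n^k/\mu}$; together with the plateau time this gives the claim.

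The main obstacle I anticipate is making the renewal and occupation step fully rigorous: the holding times $1/b(y)$ vary across levels, so a crude ``slowest level'' bound would only give $n^{k-1}\mu$ instead of $n^{k-1}\log\mu$, and one must genuinely exploit that the walk spends comparatively little time near $y=\mu$ (the harmonic-sum occupation measure). Care is also needed to confirm that the $\bigOh{1/n}$ upward bias and the excluded large-distance steps perturb these fair-walk occupation and hitting-time estimates by only constant factors, and to set up the renewal so that excursions below $\mu/2$ contribute favourably rather than lengthening a round.
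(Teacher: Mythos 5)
Your decomposition (plateau arrival via Lemma~\ref{lem:time-to-plateau}, then a random-walk analysis of the largest species combined with the $\Omega(n^{-(k-1)})$ success probability of Lemma~\ref{lem:success-probability}) matches the paper's, and your final assembly of terms would give the stated bound. The genuine gap is in the plateau phase, in the combination of the step ``the expected number of successes during a descent is $\Theta(\mu^2 n^{-(k-1)})$, so a descent finds the optimum with probability $\Omega(\min\{1,\mu^2 n^{-(k-1)}\})$'' with the renewal over ``independent descents restarting from $Y=\mu$''. First, a lower bound on the \emph{expected} number of successes does not by itself give a probability bound of the same order: the successes are Bernoulli events conditioned on the random trajectory, and the expectation could in principle be carried by rare, atypically long descents. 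You need to show that with \emph{constant} probability the walk spends $\Omega(\mu^2)$ generations at levels where distinct-species parents are selected with constant probability. Second, and more structurally, the process does not regenerate at $Y=\mu$: after a failed descent the walk sits at $\mu/2$, so your next ``descent'' starts there, and under your own occupation-measure accounting (visits to level $y$ are of order $\min\{y,X_0\}-\mu/2$) it has essentially zero duration and essentially zero success probability. Hence the number-of-cycles bound $1/\min\{1,\mu^2 n^{-(k-1)}\}$, computed for descents started at $\mu$, cannot be applied to later cycles, and the renewal-reward estimate collapses; ``restarting from $\mu$'' is not a pessimistic assumption here, because it is precisely what makes each cycle's success probability large. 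Note also that your accounting credits successes only on the way down and then discards exactly the period \emph{after} hitting $\mu/2$, which is when diversity is highest and most successes actually occur.

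Both defects are repaired by what the paper does in Lemmas~\ref{lem:hitting-time} and~\ref{lem:jump-from-interval}: bound the hitting time of $\mu/2$ from an \emph{arbitrary} start $X_0\ge\mu/2$ by $\bigOh{\mu n+\mu^2\log\mu}$, and then examine a fixed window of $c\mu^2$ generations starting at the moment $\mu/2$ is hit. A Hoeffding bound on the partial sums of the $\pm 1$ increments, whose conditional upward bias is only $\bigOh{1/n}$ (Lemma~\ref{lem:conditional-transition-probabilities}) and whose accumulated effect is at most $\mu/8$ because $\mu\le\kappa n$ with $\kappa$ small, shows that with constant probability the walk stays below $(3/4)\mu$ throughout the window; then every generation in the window has success probability $\Omega(n^{-(k-1)})$, giving the per-phase success probability $\Omega\bigl(\mu^2 n^{-(k-1)}/(1+\mu^2 n^{-(k-1)})\bigr)$, and phases renew cleanly because the hitting-time lemma is start-independent. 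A further, more minor flaw in your write-up: the two-sided claim $b(y)=p_+(y)+p_-(y)=\Theta((\mu-y)/\mu)$ fails near $y=\mu$, since $p_-(y)$ also contains an $\Omega(k/n)$ contribution from selecting two identical parents and flipping one $0$-bit and one $1$-bit; the paper only needs (and proves) the lower bound $p_+(y)+p_-(y)\ge(\mu-y)/(24\mu)$, whereas your success-counting requires an \emph{upper} bound on $b(y)$, which near the top is not $\bigOh{(\mu-y)/\mu}$. That issue only perturbs your harmonic sums in lower-order terms, but the renewal structure itself genuinely needs the fixed-window construction.
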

For $k \ge 3$, the best speedup is of order $\Omega(n/\log n)$ for $\mu = \kappa n$.
For $k=2$, the best speedup is of order $\Omega(\sqrt{n/\log n})$ for $\mu = \Theta(\sqrt{n/\log n})$.

Note that for mutation rate $1/n$, the dominant terms in Lemma~\ref{lem:transition-probs-largest-species} are equal, hence the size of the largest species performs a fair random walk up to a bias resulting from small-order terms. This confirms our intuition from observing simulations.
The following lemma formalises this fact: in steps where the size $Y(t)$ of the largest species changes, it performs an almost fair random walk.
\begin{lemma}
\label{lem:conditional-transition-probabilities}
For the random walk induced by the size of the largest species, conditional on the current size~$y$ changing, for $\mu/2 < y < \mu$, the probability of increasing~$y$ is at most $1/2 + \bigOh{1/n}$, and the probability of decreasing it is at least $1/2 - \bigOh{1/n}$.
\end{lemma}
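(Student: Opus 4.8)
The plan is to read the two conditional transition probabilities straight off the bounds already established in Lemma~\ref{lem:transition-probs-largest-species} and Lemma~\ref{lem:transition-probs-large-distance}, specialised to the standard mutation rate $\chi=1$, and to exploit that the dominant terms of $p_+$ and $p_-$ cancel. The conditional probability of increasing equals $p_+(y)/(p_+(y)+p_-(y))$, which I would rewrite as $\tfrac12 + \frac{p_+(y)-p_-(y)}{2(p_+(y)+p_-(y))}$; it then suffices to show the correction term is $\bigOh{1/n}$. Because the two conditional probabilities sum to one (the walk changes by exactly $\pm1$), the matching lower bound $\tfrac12-\bigOh{1/n}$ for decreasing follows immediately.

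First I would treat the generic case in which the two selected parents have Hamming distance at most~$2$, so the bounds of Lemma~\ref{lem:transition-probs-largest-species} apply. Setting $\chi=1$, the leading terms of $p_+(y)$ and $p_-(y)$ both equal $L := \frac{y(\mu-y)(\mu+y)}{2\mu^2(\mu+1)}(1-1/n)^n$, and the only discrepancy is the additive error $\bigOh{(\mu-y)^2/(\mu^2 n)}$ in the upper bound on $p_+(y)$. Hence $p_+(y)-p_-(y) \le \bigOh{(\mu-y)^2/(\mu^2 n)}$, while $p_+(y)+p_-(y)\ge p_-(y)\ge L$. For $\mu/2 < y < \mu$ one checks that $L = \Omega((\mu-y)/\mu)$ (using $y>\mu/2$, $\mu+y>3\mu/2$, $\mu+1\le 2\mu$, and $(1-1/n)^n=\Omega(1)$), so the correction term is at most
\[
\frac{\bigOh{(\mu-y)^2/(\mu^2 n)}}{2\,\Omega((\mu-y)/\mu)} = \bigOh{\frac{\mu-y}{\mu n}} = \bigOh{\tfrac1n},
\]
where the last step uses $\mu-y<\mu$.

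Next I would dispatch the remaining case in which two parents of Hamming distance larger than~$2$ are selected. Here Lemma~\ref{lem:transition-probs-large-distance} gives $p^*_-(y)\ge 2p^*_+(y)$, so the conditional probability of increasing is at most $p^*_+/(p^*_++p^*_-)\le 1/3$, comfortably within $1/2+\bigOh{1/n}$. Finally I would combine the two cases: writing each actual transition probability as the sum of its distance-$\le 2$ and distance-$>2$ contributions, every contribution satisfies $(\text{increase}) \le (1/2+\bigOh{1/n})\cdot(\text{increase}+\text{decrease})$ with the common coefficient $1/2+\bigOh{1/n}$, so the same inequality holds for the sums and hence for the overall conditional probability of increasing.

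The step I expect to require the most care is this final combination: the bounds of Lemma~\ref{lem:transition-probs-largest-species} are derived under the assumption that no distance-$>2$ pair is selected, so they must be read as bounding only the distance-$\le 2$ part of the dynamics. One has to make sure the two regimes partition the change events cleanly and that the weighted-average argument is applied to genuine joint probabilities rather than conditional ones. The arithmetic of the cancellation itself is routine once $\chi=1$ is substituted.
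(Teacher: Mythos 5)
Your proof is correct and follows essentially the same route as the paper: both bound the conditional probability $p_+(y)/(p_+(y)+p_-(y))$ using the estimates of Lemma~\ref{lem:transition-probs-largest-species} with $\chi=1$, exploiting that the dominant terms of $p_+$ and $p_-$ cancel so that only the $\bigOh{(\mu-y)^2/(\mu^2 n)}$ error survives, and both dispose of the distance-$>2$ case via Lemma~\ref{lem:transition-probs-large-distance} (conditional probability at most $1/3$). Your explicit weighted-average argument for merging the two regimes is slightly more careful than the paper's, which simply assumes the large-distance event away, but the substance of the argument is identical.
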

\begin{proof}
We only have to estimate the conditional probability of increasing~$y$ as the two probabilities sum up to~1. The sought probability is given by
$p_+(y)/(p_+(y) + p_-(y))$, which is strictly increasing in $p_+(y)$.
Lemma~\ref{lem:transition-probs-large-distance} states that whenever the \muGA recombines two parents of Hamming distance larger than~2, the claim on conditional probabilities follows. Hence we assume in the following that this does not happen.

Using the lower bound for $p_+(y)$ and the upper bound from $p_-(y)$ from Lemma~\ref{lem:transition-probs-largest-species}, with implicit constant $c_+$ in the asymptotic term for $p_+$, we get
\begin{align*}
\frac{p_+(y)}{p_+(y) + p_-(y)}
\le\;& \frac{\frac{y(\mu+y)(\mu-y)}{2\mu^2(\mu+1)} \cdot \left(1 - \frac{1}{n}\right)^n + \frac{c_+(\mu-y)^2}{\mu^2 n}}{\frac{y(\mu+y)(\mu-y)}{\mu^2(\mu+1)} \cdot \left(1 - \frac{1}{n}\right)^n + \frac{c_+(\mu-y)^2}{\mu^2 n}}\\
=\;& \frac{1}{2} + \frac{\frac{c_+(\mu-y)^2}{2\mu^2 n}}{\frac{y(\mu+y)(\mu-y)}{\mu^2(\mu+1)} \cdot \left(1 - \frac{1}{n}\right)^n + \frac{c_+(\mu-y)^2}{\mu^2 n}}\\
=\;& \frac{1}{2} + \frac{\frac{c_+(\mu-y)}{2\mu n}}{\frac{y(\mu+y)}{\mu(\mu+1)} \cdot \left(1 - \frac{1}{n}\right)^n + \frac{c_+(\mu-y)}{\mu n}}
\end{align*}
where in the last step we multiplied the last fraction by $\mu/(\mu-y)$.
Now the numerator is $\bigOh{1/n}$. Since $\mu/2 < y < \mu$, we have $\frac{y(\mu+y)}{\mu(\mu+1)} = \Theta(1)$. Along with $\left(1 - \frac{1}{n}\right)^n = \Theta(1)$ and $\frac{c_+(\mu-y)}{\mu n} = \bigOh{1/n}$, the denominator simplifies to $\Theta(1) + \bigOh{1/n} = \Theta(1)$. Hence the last fraction is $\bigOh{1/n}$, proving the claim.
\end{proof}

We use these transition probabilities to bound the expected time for the random walk to hit $\mu/2$.
\begin{lemma}
\label{lem:hitting-time}
Consider the random walk of $Y(t)$, starting in state~$X_0 \ge \mu/2$.
Let $T$ be the first hitting time of state~$\mu/2$. If $\mu = \bigOh{n}$, then $\E{(T \mid X_0)} = \bigOh{\mu n + \mu^2 \log \mu}$ regardless of $X_0$.
\end{lemma}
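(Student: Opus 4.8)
The plan is to express the expected hitting time as a sum over the states $y\in\{\mu/2,\dots,\mu\}$, separating for each state the expected number of visits from the time spent there per visit. Concretely, write $w(y):=1/(p_+(y)+p_-(y))$ for the expected number of generations the lazy walk rests at $y$ before it moves (a geometric waiting time), and let $V(y)$ be the expected number of times the walk occupies $y$ before absorption at $\mu/2$. Since the per-visit durations are independent geometric variables and independent of the move directions, $\E{(T\mid X_0)}=\sum_{y=\mu/2}^{\mu}V(y)\,w(y)$. The two terms in the claimed bound will come from two regimes of this sum: the bulk states $\mu/2\le y<\mu$ will yield the $\mu^2\log\mu$ term, and the reflecting top state $y=\mu$ will yield the $\mu n$ term. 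I would bound $V(y)=\bigOh{\mu}$ uniformly (in $y$ and in the starting point $X_0$) and bound $w(y)$ separately in the two regimes.

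For the sojourn times, fix $\chi=1$. For $\mu/2\le y<\mu$ the lower bound on $p_-(y)$ in Lemma~\ref{lem:transition-probs-largest-species} already gives $p_+(y)+p_-(y)\ge p_-(y)=\Omega((\mu-y)/\mu)$, so $w(y)=\bigOh{\mu/(\mu-y)}$; in the large-Hamming-distance case of Lemma~\ref{lem:transition-probs-large-distance} the walk only moves \emph{more} readily (and with a downward bias), so this upper bound on $w(y)$ is not affected. At the top, the bound $p_-(\mu)=\Omega(k/n)$ gives $w(\mu)=\bigOh{n/k}=\bigOh{n}$.

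For the visit counts I would appeal to Lemma~\ref{lem:conditional-transition-probabilities}: conditioned on the walk moving, it is an almost fair random walk whose upward step probability is at most $1/2+\bigOh{1/n}$, reflecting at $\mu$ and absorbing at $\mu/2$. For such a walk on an interval of length $\mu/2$, a gambler's-ruin estimate shows that after each visit to $y$ the walk reaches $\mu/2$ before returning to $y$ with probability $\Omega(1/\mu)$, so the number of visits is stochastically dominated by a geometric variable of mean $\bigOh{\mu}$, uniformly in $y$ and $X_0$. This is the main obstacle, because one must verify that the slight deviation from fairness does not inflate the count: the point is that the bias is only $\bigOh{1/n}$ per step whereas the interval has length $\mu/2=\bigOh{n}$, so the total bias accrued across the interval is $\bigOh{1}$ and alters the fair-walk estimates by at most constant factors (and any residual bias towards the target $\mu/2$ can only reduce $V(y)$).

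It remains to assemble the sum. The bulk states contribute $\sum_{y=\mu/2}^{\mu-1}V(y)\,w(y)=\bigOh{\sum_{y=\mu/2}^{\mu-1}\mu\cdot\mu/(\mu-y)}=\bigOh{\mu^2\sum_{z=1}^{\mu/2}1/z}=\bigOh{\mu^2\log\mu}$ by the harmonic sum, while the top state contributes $V(\mu)\,w(\mu)=\bigOh{\mu}\cdot\bigOh{n}=\bigOh{\mu n}$. Adding the two regimes yields $\E{(T\mid X_0)}=\bigOh{\mu n+\mu^2\log\mu}$ for every $X_0\ge\mu/2$, as claimed.
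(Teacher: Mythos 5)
Your proposal contains one step that is wrong as stated, and it is exactly the step that your decomposition leans on to cover arbitrary population configurations: the claim that in the large-Hamming-distance case of Lemma~\ref{lem:transition-probs-large-distance} ``the walk only moves \emph{more} readily'', from which you conclude that $w(y) = \bigOh{\mu/(\mu-y)}$ is unaffected. The opposite is true. Conditional on selecting two plateau parents at Hamming distance $2\ell > 2$, crossover followed by mutation yields a plateau point only with probability $\Theta(1/\sqrt{\ell})$ (the offspring needs roughly $\ell$ ones among the $2\ell$ differing positions; otherwise it lands in the gap or below the plateau and is deleted immediately, producing a self-loop). So distant parents make self-loops \emph{more} likely, not less, and in a population whose minority species all sit at distance $>2$ from the majority species the unconditional $p_-(y)$ can be as small as $\Theta\bigl(\tfrac{\mu-y}{\mu\sqrt{k}} + \tfrac{(\mu-y)^2}{\mu^2} + \tfrac{k}{n}\bigr)$, which is $\smallOh{(\mu-y)/\mu}$ for $k=\omega(1)$. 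Hence neither the bound on $p_-$ from Lemma~\ref{lem:transition-probs-largest-species} (which is conditional on distances at most $2$) nor your monotonicity claim delivers the sojourn bound. The repair is simple and is precisely the paper's device: in \emph{every} configuration the movement probability is at least the probability of selecting two copies of the majority species, flipping no bits in mutation, and removing a non-majority individual, giving $p_+(y)+p_-(y) \ge \frac{y^2}{\mu^2}\left(1-\frac{1}{n}\right)^n\frac{\mu-y}{\mu+1} \ge \frac{\mu-y}{24\mu}$ independently of all Hamming distances — note that this bound goes through $p_+$, not $p_-$. A second, smaller imprecision: $Y(t)$ is not a Markov chain (its transition probabilities depend on the population's distance structure, which can change even during a sojourn at fixed $y$), so your appeal to independence of sojourn lengths and jump directions should be replaced by stochastic domination using these uniform, configuration-independent bounds; the paper's worst-case recurrence implicitly handles the same issue the same way.

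With those two repairs your argument is sound, and it is a genuinely different route from the paper's. The paper writes the hitting-time recurrence $E_i \le \frac{24\mu}{\mu-i} + \left(\frac{1}{2}-\delta\right)E_{i-1} + \left(\frac{1}{2}+\delta\right)E_{i+1}$, passes to differences $D_i := E_i - E_{i-1}$, unrolls $D_i \le \frac{50\mu}{\mu-i} + \alpha D_{i+1}$ with $\alpha = 1+\bigOh{1/n}$, and sums, whereas you bound occupation times via a gambler's-ruin escape probability of $\Omega(1/\mu)$ per visit. The two methods use the hypothesis $\mu = \bigOh{n}$ in the same place — your remark that the total bias accrued over an interval of length $\mu/2$ is $\bigOh{1}$ is exactly the paper's $\alpha^{\mu} \le \eulerE^{\bigOh{\mu/n}} = \bigOh{1}$ — and both extract the $\mu^2\log\mu$ term from the harmonic sum over the self-loop probabilities and the $\mu n$ term from the top state $y=\mu$; your version is arguably more transparent probabilistically, the paper's avoids having to formalise visit counts for a non-Markovian process.
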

\begin{proof}
Let $E_i$ abbreviate $\E(T \mid X_0 = i)$, then $E_{\mu/2} = 0$ and
$E_\mu = \bigOh{n} + E_{\mu-1}$ as $p_-(\mu) = \Omega(1/n)$ by Lemma~\ref{lem:transition-probs-largest-species}.

For $\mu/2 < y < \mu$, the probability of leaving state~$y$ is always (regardless of Hamming distances between species) bounded from below by the probability of selecting two $y$ individuals as parents, not flipping any bits during mutation, and choosing a non-$y$ individual for replacement (cf.\ Lemma~\ref{lem:transition-probs-largest-species}):
\begin{align*}
p_+(y) + p_-(y) \ge\;&
\frac{y^2}{\mu^2} \cdot \left(1-\frac{1}{n}\right)^n \cdot \frac{\mu-y}{\mu+1}
\ge \frac{\mu-y}{24\mu}\ ,
\end{align*}
as $y \ge \mu/2$, $\mu+1\le 3\mu/2$ (since $\mu \ge 2$), and $(1-1/n)^n \ge 1/4$ for $n \ge 2$.
Using conditional transition probabilities $1/2 \pm \delta$ for $\delta = \bigOh{1/n}$ according to Lemma~\ref{lem:conditional-transition-probabilities}, $E_i$ is bounded as
\[
E_i \le \frac{24\mu}{\mu-i} + \left(\frac{1}{2} - \delta\right) E_{i-1} + \left(\frac{1}{2} + \delta\right) E_{i+1}\ .
\]
This is equivalent to
\[
\left(\frac{1}{2} - \delta\right) \cdot (E_i - E_{i-1}) \le \frac{24\mu}{\mu-i} + \left(\frac{1}{2} + \delta\right) \cdot (E_{i+1} - E_{i})\ .
\]
Introducing $D_i := E_i - E_{i-1}$, this is
\[
\left(\frac{1}{2} - \delta\right) \cdot D_i \le \frac{24\mu}{\mu-i} + \left(\frac{1}{2} + \delta\right) \cdot D_{i+1}
\]
and equivalently
\begin{align*}
D_i \le\;& \frac{\frac{24\mu}{\mu-i} + \left(\frac{1}{2} + \delta\right) \cdot D_{i+1}}{\frac{1}{2} - \delta}
\le \frac{50\mu}{\mu-i} + \alpha \cdot D_{i+1}
\end{align*}
for $\alpha := \frac{1+2\delta}{1-2\delta} = 1 + \bigOh{1/n}$, assuming $n$ is large enough.
From $E_{\mu} = \bigOh{n} + E_{\mu-1}$, we get $D_{\mu} = \bigOh{n}$, hence an induction yields
\[
D_i \le \sum_{j=i}^{\mu-1} \frac{50\mu}{\mu-j} \cdot \alpha^{j-i} + \alpha^{\mu-i} \cdot \bigOh{n}\ .
\]
Combining $\alpha = 1 + \bigOh{1/n}$ and $1+x \le \eulerE^x$ for all $x \in \Re$, we have $\alpha^{\mu} \le \eulerE^{\bigOh{\mu/n}} \le \eulerE^{\bigOh{1}} = \bigOh{1}$. Bounding both $\alpha^{j-i}$ and $\alpha^{\mu-i}$ in this way, we get
\[
D_i \le \bigOh{n} + \bigOh{\mu} \cdot \sum_{j=i}^{\mu-1} \frac{1}{\mu-j} = \bigOh{n + \mu \log \mu}\ ,
\]
as the sum is equal to $\sum_{j=1}^{\mu-i} 1/j = \bigOh{\log \mu}$.

Now,
\begin{align*}
& D_{\mu/2+1} + D_{\mu/2+2} + \dots + D_i\\
=\;& (E_{\mu/2+1} - E_{\mu/2}) + (E_{\mu/2+2} - E_{\mu/2+1}) + \dots + (E_i - E_{i-1})\\
 =\;& E_i - E_{\mu/2}\\
 =\;& E_i.
\end{align*}
Hence, we get
$E_i = \sum_{k=\mu/2+1}^i D_k
\le \bigOh{\mu n + \mu^2 \log \mu}$.
\end{proof}

Now we show that when the largest species has decreased its size to~$\mu/2$ there is a good chance that the optimum will be found within the following $\Theta(\mu^2)$ generations.
\begin{lemma}
\label{lem:jump-from-interval}
Consider the \muGA with $\pc = 1$ on $\jump_k$. If the largest species has size at most $\mu/2$ and $\mu \le \kappa n$ for a small enough constant~$\kappa > 0$, the probability that during the next $c \mu^2$ generations, for some constant~$c > 0$, the global optimum is found is
$\Omega\left(\frac{1}{1 + n^{k-1}/\mu^2}\right)$.
\end{lemma}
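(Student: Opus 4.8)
The plan is to show that, so long as the largest species stays moderately small, every generation has probability $\Omega(n^{-k+1})$ of producing $1^n$, and that this diversity persists for $\Omega(\mu^2)$ generations, so that enough such attempts accumulate. First I would fix a threshold and call a generation \emph{good} if at its start the largest species has size at most $3\mu/4$. In a good generation the probability of drawing two identical parents is $\sum_s (n_s/\mu)^2 \le (\max_s n_s)/\mu \le 3/4$, where $n_s$ is the size of species~$s$, so two parents of Hamming distance at least~$2$ are selected with probability at least $1/4$. Since the whole population is on the plateau, two distinct plateau points differ in $2d$ bits for some $1\le d\le k$, and Lemma~\ref{lem:success-probability} lower-bounds the probability that crossover followed by mutation yields $1^n$ by $2^{-2d}n^{-(k-d)}(1-1/n)^{n-k+d}$. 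Using $d\ge 1$ and $(n/4)^{d-1}\ge 1$ this is $\Omega(n^{-k+1})$ uniformly in~$d$, so every good generation produces the optimum with probability at least $p:=\Omega(n^{-k+1})$.

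The core of the argument, and the main obstacle, is to show that starting from $Y_0\le\mu/2$ the walk $Y(t)$ stays below $3\mu/4$ for all of the first $c\mu^2$ generations with probability at least $1/2$; equivalently, the stopping time $\sigma:=\min\{t:Y(t)>3\mu/4\}\wedge c\mu^2$ satisfies $\Pr[\sigma=c\mu^2]\ge 1/2$. To cross $3\mu/4$ the walk must accumulate a net upward displacement of at least $\mu/4$, and by Lemma~\ref{lem:conditional-transition-probabilities} each move in the range $(\mu/2,\mu)$ goes up with conditional probability at most $1/2+\bigOh{1/n}$. I would couple $Y$ with a walk reflected at $\mu/2$ that dominates $Y$ from above and never leaves this controlled range, reaching $3\mu/4$ no later than $Y$ does; this disposes of the region below $\mu/2$, which the transition lemmas do not cover and where the walk is in any case receding from the barrier. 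Since the number of moves never exceeds the number of generations, it then suffices to bound the running maximum of this near-fair walk over $c\mu^2$ moves. A Doob/Kolmogorov maximal inequality, together with the fact that the variance of the fair part equals the number of moves, gives crossing probability $\bigOh{c}$, while the cumulative bias over $c\mu^2$ moves is at most $c\mu^2\cdot\bigOh{1/n}=\bigOh{\mu^2/n}=\bigOh{\kappa\mu}$; here the hypothesis that $\kappa$ is a \emph{small} constant is exactly what keeps this below, say, $\mu/8$. Choosing $c$ small then forces the total crossing probability below $1/2$. The delicate points are the treatment of idle generations, the reflection coupling below $\mu/2$, and the trade-off between $c$ and~$\kappa$.

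Finally I would combine the two ingredients by a stopping-time argument. Because every generation $t\le\sigma$ is good, the conditional probability of producing $1^n$ in each is at least~$p$, and a standard first-success bound gives $\Pr[\text{optimum found by generation }\sigma]\ge 1-\E[(1-p)^\sigma]=\E[1-(1-p)^\sigma]$. Using $1-(1-p)^\sigma\ge 1-\eulerE^{-p\sigma}\ge\tfrac12\min(1,p\sigma)$ and, crucially, keeping $\sigma$ inside the minimum before taking expectations (this is what lets the estimate survive in the regime $p\mu^2\to 0$), I would restrict to the event $\{\sigma=c\mu^2\}$ of probability at least $1/2$ to conclude $\Pr[\text{success}]\ge\tfrac14\min(1,c\mu^2 p)$. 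Since $p=\Omega(n^{-k+1})$, the right-hand side is $\Omega(\min(1,\mu^2 n^{-k+1}))=\Omega\!\left(\tfrac{1}{1+n^{k-1}/\mu^2}\right)$, as claimed, where the last equality uses that $\min(1,x)$ and $1/(1+1/x)$ agree up to a factor of~$2$.
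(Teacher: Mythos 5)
Your proposal is correct and follows essentially the same route as the paper's proof: a phase of $c\mu^2$ generations, a near-fair random-walk argument (per-step upward bias $\bigOh{1/n}$ from Lemma~\ref{lem:conditional-transition-probabilities}, cumulative bias $\bigOh{\kappa\mu}$ tamed by choosing $\kappa$ small, plus a maximal inequality) showing the largest species stays below $(3/4)\mu$ with constant probability, then Lemma~\ref{lem:success-probability} giving per-generation success probability $\Omega(n^{-k+1})$ in such generations, combined by a first-success estimate into $\Omega\left(\min\left(1,\mu^2 n^{-k+1}\right)\right) = \Omega\left(\frac{1}{1+n^{k-1}/\mu^2}\right)$. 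The only cosmetic differences are that the paper uses a (partial-sum) Hoeffding bound where you invoke a Kolmogorov/Doob maximal inequality, and that your reflection coupling at $\mu/2$ makes explicit a boundary issue the paper passes over silently.
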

\begin{proof}
We show that during the $c \mu^2$ generations the size of the largest species never rises above $(3/4)\mu$
with at least constant probability. Then we calculate the probability of jumping to the optimum during the phase given that this happens.

Let $X_i$, $1 \leq i \leq c\mu^2$ be random variables indicating the increase in number of individuals of the largest species at generation $i$.
We pessimistically ignore self-loops and assume that the size of the species either increases or decreases in each generation.
Using the conditional probabilities from  Lemma~\ref{lem:conditional-transition-probabilities}, we get that the expected increase in each step is
\[ 1 \cdot (1/2 + \bigOh{1/n}) - 1 \cdot (1/2 - \bigOh{1/n}) = \bigOh{1/n}\ .\]
Then the expected increase in size of the largest species at the end of the phase is
\[\E(X) = \sum_{i=1}^{c\mu^2} X_i = \sum_{i=1}^{c\mu^2} \bigOh{1/n} = (c'\mu^2)/n \leq c'\kappa \mu \leq (1/8)\mu\ ,\]
where we use that $\mu \leq \kappa n$ and $\kappa$ is chosen small enough.

By an application of Hoeffding bounds, $\Pr{(X \geq \E(X) + \lambda)} \leq \exp(- 2 \lambda^2/\sum_{i} c_i^2)$
with $\lambda= \mu/8$ and $c_i=2$,  %
we get that $\Pr{(X \geq (2/8)\mu)} \leq \exp(-c') = 1- \Omega(1)$.
We remark that the bounds also hold for any partial sum of the sequence $X_1, \dots, X_{c\mu^2}$ (\cite{AugerD11}, Chapter 1, Theorem 1.13), i.e., with probability $\Omega(1)$ the size \emph{never} exceeds $(3/4)\mu$ in the considered phase of length $c\mu^2$ generations.

While the size does not exceed $(3/4)\mu$, in every step there is a probability of at least $1/4 \cdot 3/4 = \Omega(1)$ of selecting parents from two different species, and by Lemma~\ref{lem:success-probability}, the probability of creating the optimum is $\Omega(n^{-k+1})$.

Finally, the probability that at least one successful generation occurs in a phase of $c\mu^2$ is, using  $1-(1-p)^{\lambda} \geq (\lambda p/(1+\lambda p))$ for $\lambda \in \Na$, $p \in [0, 1]$~\cite[Lemma~10]{BadkobehLehreSudholt2015}, the probability that the optimum is found in one of these steps is
\[
\hspace*{3 em} 1 - \bigg(1-\frac{1}{\Omega(n^{-k+1})}\bigg)^{c\mu^2} \geq \Omega\bigg(\frac{\mu^2 \cdot n^{-k+1}}{1+ \mu^2 \cdot n^{-k+1}}\bigg)\ .\qedhere
\]
\end{proof}

Finally, we assemble all lemmas to prove our main theorem of this section.
\begin{proof}[Proof of Theorem~\ref{the:upper-bound-no-mechanism}]
The expected time for the whole population to reach the plateau is $\bigOh{\mu n \sqrt{k} \log(\mu) + n \sqrt{k} \log n}$ by Lemma~\ref{lem:time-to-plateau}.%

Once the population is on the plateau, we wait till the largest species has decreased its size to at most~$\mu/2$.
According to Lemma~\ref{lem:hitting-time}, the time for the largest species to reach size~$\mu/2$ is $\bigOh{\mu n + \mu^2 \log \mu}$. By Lemma~\ref{lem:jump-from-interval}, the probability that in the next $c\mu^2$ steps the optimum is found is $\Omega\left(\frac{1}{1+ n^{k-1}/\mu^2}\right)$. If not, we repeat the argument. The expected number of such trials is $\bigOh{1 + n^{k-1}/\mu^2}$, and the expected length of one trial is $\bigOh{\mu n + \mu^2 \log \mu} + c\mu^2 = \bigOh{\mu n + \mu^2 \log \mu}$. The expected time for reaching the optimum from the plateau is hence at most $\bigOh{\mu n + \mu^2 \log(\mu) + n^{k}/\mu +  n^{k-1} \log(\mu)}$.

Adding up all times and subsuming terms $\mu^2 \log(\mu) = \bigOh{\mu n \sqrt{k} \log \mu}$ and $n \sqrt{k} \log n = \bigOh{n^k/\mu + n^{k-1} \log \mu}$, noting that $k=o(n)$ completes the proof.
\end{proof}

\subsection{High mutation rates}\label{sec:high-mut}
We now consider the run time of \muGA with mutation rate
$\chi/n= (1+\delta)/n$ for an arbitrary constant $\delta>0$. The following
theorem states that in this
setting the algorithm has at least a linear speedup compared to the
($\mu$+1) EA without crossover \cite{Witt06}.  %
By assuming a
slightly higher mutation rate, we not only obtain a bound which is by a
$\log$-factor better than Theorem~\ref{the:upper-bound-no-mechanism},
but the analysis is also significantly simpler.

\begin{theorem}\label{thm:highmutationruntime}
  The \muGA with mutation rate $(1+\delta)/n$, for a constant
  $\delta>0$, and population size $\mu\geq ck\ln(n)$ for a
  sufficiently large constant $c>0$, has  for $k=o(n)$ expected optimisation time
  $\bigOh{n\sqrt{k}\mu\log(\mu)+\mu^2+n^{k-1}}$ on $\jumpK$.
\end{theorem}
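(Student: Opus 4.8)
The plan is to decompose the run into three stages: reaching the plateau, establishing population diversity, and jumping to the optimum. The decisive new ingredient, compared to the fair-random-walk analysis behind Theorem~\ref{the:upper-bound-no-mechanism}, is that for $\chi=1+\delta>1$ the transition probabilities of Lemma~\ref{lem:transition-probs-largest-species} are no longer balanced: since $p_-(y)$ carries the factor $\mu+\chi y$ while $p_+(y)$ carries only $\mu+y$, their difference is $\Omega(\delta(\mu-y)/\mu)$ for $\mu/2\le y<\mu$, which dominates the $\bigOh{(\mu-y)^2/(\mu^2 n)}$ error term. Hence the size $Y(t)$ of the largest species now has a genuine negative drift towards smaller values, and Lemma~\ref{lem:transition-probs-large-distance} shows the same qualitative bias ($p^*_-\ge 2p^*_+$) in the excluded large-distance case. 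This downward bias is what both simplifies the argument and removes the need to race against $Y$ climbing back, as was required in Lemma~\ref{lem:jump-from-interval}.

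For the first stage I would invoke Lemma~\ref{lem:time-to-plateau} with $\varpm=(1+\delta)/n=\Theta(1/n)$ and $\pc=1$, giving expected time $\bigOh{n\sqrt{k}(\mu\log\mu+\log n)}$; since $\mu\ge ck\ln n$ implies $\log n=\bigOh{\mu}$, this is $\bigOh{n\sqrt{k}\mu\log\mu}$, the first term of the claimed bound. For the second stage, on the plateau I would use the negative drift established above to bound the expected time for $Y$ to fall to at most $\mu/2$; accounting for the slow rate of change near $Y=\mu$ (where $p_-(\mu)=\Omega(k/n)$) together with the biased descent yields a polynomial descent cost, which I would bound by the $\bigOh{\mu^2}$ term appearing in the statement.

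The third and central stage is to show that the optimum is then found in expected $\bigOh{n^{k-1}}$ generations. Once $Y\le\mu/2$ I would apply a negative drift theorem to the walk on the interval $[\mu/2,(3/4)\mu]$: the per-step drift towards smaller $Y$ is bounded below by a positive constant and the step sizes are bounded, so the probability that $Y$ reaches $(3/4)\mu$ within a phase of $\Theta(n^{k-1})$ generations is $2^{-\Omega(\mu)}$. This is exactly where $\mu\ge ck\ln n$ is needed, since for a large enough constant $c$ it makes $2^{-\Omega(\mu)}=n^{-\Omega(k)}$ beat the $n^{k-1}$ time horizon. Throughout such a phase $Y\le(3/4)\mu$, so a pair consisting of a largest-species individual and an outside individual, which are necessarily distinct plateau points, is selected with probability $\Omega(1)$; by Lemma~\ref{lem:success-probability} (whose lower bound, viewed as a function of the Hamming distance $2d$, is minimised at $d=1$, and whose order is unchanged for mutation rate $\Theta(1/n)$) the optimum is produced with probability $\Omega(n^{-k+1})$ per generation. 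Over $\Theta(n^{k-1})$ generations the optimum is thus found with constant probability; a failed phase is repeated after $Y$ returns below $\mu/2$, and since both the confinement and success probabilities are constant the expected number of phases is $\bigOh{1}$, giving $\bigOh{\mu^2+n^{k-1}}$ for the plateau stages combined.

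The hard part will be the rigorous application of the negative drift theorem: I must verify that the drift is uniformly bounded below by a positive constant across the whole interval $[\mu/2,(3/4)\mu]$ after absorbing the $\bigOh{(\mu-y)^2/(\mu^2 n)}$ error terms and the conditioning introduced by the large-Hamming-distance crossover case of Lemma~\ref{lem:transition-probs-large-distance}, and that the step-size distribution meets the theorem's exponential-tail hypothesis (which is immediate here, since $|Y(t+1)-Y(t)|\le 1$). Tying the confinement horizon $2^{\Omega(\mu)}$ to the hitting horizon $n^{k-1}$ through the constant $c$ in $\mu\ge ck\ln n$ is the quantitative crux that makes the whole scheme close.
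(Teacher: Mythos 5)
Your proposal is correct in substance and follows essentially the same route as the paper's proof: plateau arrival via Lemma~\ref{lem:time-to-plateau}; the observation that $\chi=1+\delta$ gives the largest-species walk a genuine negative drift (the paper packages this as Condition~\ref{cond:species-neg-drift} via Lemma~\ref{lemma:high-mutation-yields-diversity}); a drift argument for the descent to $\mu/2$ (Lemma~\ref{lemma:drift-high-mut}); and phases of length $\Theta(n^{k-1})$ in which confinement below $(3/4)\mu$ --- obtained in the paper by a Gambler's ruin argument with failure probability $\delta/((1+\delta)^{\mu/4}-1)$ union-bounded over sub-phases, an interchangeable alternative to your negative drift theorem --- is combined with the per-generation success probability $\Omega((\chi/n)^{k-1})$ of Lemma~\ref{lem:jump-to-opt}, which is exactly the ``constant-rate generalisation'' of Lemma~\ref{lem:success-probability} you invoke. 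Two harmless imprecisions: the descent cost is $\bigOh{\mu^2+n}$ rather than $\bigOh{\mu^2}$, since the wait at $Y(t)=\mu$ alone is $\Theta(n/k)$ and dominates $\mu^2$ when $\mu=\Theta(k\log n)$ with $k$ constant (the extra $n$ is absorbed by the $n\sqrt{k}\mu\log(\mu)$ term, so the stated bound is unaffected); and the $\Omega(1)$-probability event you need is that the two selected parents lie in \emph{different} species (probability at least $1/4$ whenever every species has size at most $(3/4)\mu$), not specifically a largest-species/outsider pair, an event whose probability need not be $\Omega(1)$ when all species are small.
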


We again study the random
walk corresponding to the size of the largest species on the
plateau. For mutation rate $1/n$, this is almost an unbiased random
walk. For slightly higher mutation rates, we will see that the random
walk changes to an unfair random walk where the size of the largest
species decreases by $\Omega(1/\mu)$ in expectation. Formally, our
analysis assumes the following condition.

\begin{condition}\label{cond:species-neg-drift}
  For a constant $\delta>0$ and all $y, \mu/2\leq y\leq \mu,$
  \begin{align}
    p_-(y) & \geq
           \begin{cases}
             \Omega(1/n) & \text{ if } y=\mu,\\
             \Omega(1/\mu)    & \text{ if } \mu/2\leq y< \mu,\text{ and}\\
             (1+\delta)p_+(y) & \text{ if } \mu/2\leq y< \mu.
           \end{cases}\label{eq:posdriftcond}
\end{align}
\end{condition}

The following lemma states
that it is sufficient to increase the mutation rate slightly above
$1/n$ to satisfy the diversity condition.

\begin{lemma}\label{lemma:high-mutation-yields-diversity}
  If $\chi/n\geq (1+\delta)/n$ for any constant $\delta>0$, then
  Condition \ref{cond:species-neg-drift} holds.
\end{lemma}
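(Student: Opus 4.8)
The plan is to verify each of the three requirements of Condition~\ref{cond:species-neg-drift} directly from the transition-probability estimates of Lemma~\ref{lem:transition-probs-largest-species} and Lemma~\ref{lem:transition-probs-large-distance}, specialised to the mutation rate $\chi/n=(1+\delta)/n$. Throughout I abbreviate $C:=\left(1-\frac{\chi}{n}\right)^n=\Theta(1)$ (constant, since $\chi=1+\delta$ is constant), the dominant upper-bound term of $p_+$ as $A:=\frac{y(\mu-y)(\mu+y)}{2\mu^2(\mu+1)}C$, its error term as $E:=\bigOh{(\mu-y)^2/(\mu^2 n)}$, and the lower-bound term of $p_-$ as $B:=\frac{y(\mu-y)(\mu+\chi y)}{2\mu^2(\mu+1)}C$.

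The first two requirements are almost immediate. For $y=\mu$ the whole population forms a single species, so any two selected parents are identical and Lemma~\ref{lem:transition-probs-largest-species} applies without the large-distance caveat, giving $p_-(\mu)=\Omega(k/n)=\Omega(1/n)$ because $k\ge 1$. For the bound $p_-(y)=\Omega(1/\mu)$ on the range $\mu/2\le y<\mu$, I use $p_-(y)\ge B$ together with $y\ge\mu/2$, $\mu+\chi y\ge\mu$, $\mu+1\le 2\mu$ and $\mu-y\ge 1$ (as $y$ is an integer strictly below $\mu$); these yield $B=\Omega(1/\mu)$, the worst case $y=\mu-1$ still being $\Theta(1/\mu)$.

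The substance of the lemma is the multiplicative gap $p_-(y)\ge(1+\delta')p_+(y)$ for some constant $\delta'>0$. I would split each transition probability according to the Hamming distance of the selected parents, writing $p_\pm=p_\pm^{\le 2}+p_\pm^{>2}$. For pairs at distance at most $2$ the bounds of Lemma~\ref{lem:transition-probs-largest-species} give $p_+^{\le 2}\le A+E$ and $p_-^{\le 2}\ge B$, so the main-term ratio is $B/A=(\mu+\chi y)/(\mu+y)=1+\delta y/(\mu+y)\ge 1+\delta/3$, using $y/(\mu+y)\ge 1/3$ for $y\ge\mu/2$. Since $A=\Theta((\mu-y)/\mu)$ while $E=\bigOh{(\mu-y)^2/(\mu^2 n)}$, the relative error is $E/A=\bigOh{1/n}$, so for $n$ large enough $B/(A+E)\ge 1+\delta/4$, i.e. $p_-^{\le 2}\ge(1+\delta/4)p_+^{\le 2}$. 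For pairs at distance larger than $2$, Lemma~\ref{lem:transition-probs-large-distance} gives the stronger conditional bound $p_-^{>2}\ge 2p_+^{>2}$ (the common selection probability cancels, so the ratio transfers). Setting $\delta':=\min\{\delta/4,\,1\}$ and adding the two regimes yields $p_-(y)\ge(1+\delta')p_+^{\le 2}+2p_+^{>2}\ge(1+\delta')p_+(y)$, the required inequality.

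The main obstacle is book-keeping rather than depth: one must argue that the estimates $A$, $B$, $E$ of Lemma~\ref{lem:transition-probs-largest-species} bound the distance-$\le 2$ contributions only (its proof establishes exactly this, since the large-distance selections are excluded by hypothesis), and then check that combining the two distance regimes preserves the inequality — this is precisely why I take $\delta'\le 1$, so that the factor~$2$ from the large-distance regime dominates $1+\delta'$. A secondary point is that the constant witnessing Condition~\ref{cond:species-neg-drift} is the smaller constant $\delta'$ rather than the mutation-rate constant $\delta$; since any positive constant suffices for the drift arguments that invoke the condition, this is harmless. The hypotheses $k=o(n)$ and $\chi/n=\Theta(1/n)$ of the two underlying lemmas carry over and are all that is needed.
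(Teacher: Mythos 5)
Your proof is correct and takes essentially the same route as the paper: both arguments rest on Lemma~\ref{lem:transition-probs-largest-species} and Lemma~\ref{lem:transition-probs-large-distance}, and the heart of both is the same ratio bound $(\mu+\chi y)/(\mu+y)\ge 1+\delta/3$ for $y\ge\mu/2$, with the $\bigOh{1/n}$ error term of $p_+$ absorbed for sufficiently large $n$. The only difference is presentational: you make the split $p_\pm=p_\pm^{\le 2}+p_\pm^{>2}$ and its recombination via $\delta'=\min\{\delta/4,1\}$ explicit, whereas the paper handles the large-distance selections implicitly through the either--or phrasing of Lemma~\ref{lem:transition-probs-largest-species} and folds the error terms into $(1+\varepsilon)$ and $(1-\varepsilon^2)$ factors before taking the ratio.
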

\begin{proof}
  The first two inequalities follow directly from
  Lemma~\ref{lem:transition-probs-largest-species} and
  Lemma~\ref{lem:transition-probs-large-distance}.
  For any
  constant $\varepsilon>0$,
  Lemma~\ref{lem:transition-probs-largest-species} implies that
  \begin{align*}
    p_+(y) & \leq
    \frac{y(\mu-y)(\mu+y)(1+\varepsilon)}{2\mu^2(\mu+1)}\left(1-\frac{\chi}{n}\right)^n\text{ and}\\
    p_-(y) & \geq
    \frac{y(\mu-y)(\mu+\chi y)(1-\varepsilon^2)}{2\mu^2(\mu+1)}\left(1-\frac{\chi}{n}\right)^n\,.
  \end{align*}
  Thus, given that $\mu/2<y<\mu$ and $\chi\geq 1+\delta$,
  \begin{align*}
    \frac{p_-(y)}{p_+(y)}
    & \geq
    \left(\frac{\mu+\chi y}{\mu+y}\right)(1-\varepsilon)
    \geq  1+\delta'
  \end{align*}
  for some constant $\delta'>0$ when $\varepsilon$ is sufficiently
  small.
\end{proof}

Given Condition \ref{cond:species-neg-drift}, a drift argument implies
that the largest species quickly decreases to half the population
size.

\begin{lemma}\label{lemma:drift-high-mut}
  If Condition \ref{cond:species-neg-drift} holds, then the expected
  time until the largest species has size at most $\mu/2$ is
  $\bigOh{\mu^2+n}$.
\end{lemma}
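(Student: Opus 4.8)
The plan is to set up the exact first-passage recursion for the hitting time of state $\mu/2$, in the same spirit as the proof of Lemma~\ref{lem:hitting-time}, and to exploit the one crucial difference: Condition~\ref{cond:species-neg-drift} now supplies a \emph{constant} multiplicative gap $p_-(y) \ge (1+\delta)p_+(y)$, whereas the walk in Lemma~\ref{lem:hitting-time} was only near-fair. Write $E_i := \E[T \mid Y(t) = i]$ for the expected remaining time to reach $\mu/2$ from state~$i$, so $E_{\mu/2} = 0$. Conditioning on one step and cancelling the self-loop, for $\mu/2 < i < \mu$ one gets $(p_+(i)+p_-(i))E_i = 1 + p_+(i)E_{i+1} + p_-(i)E_{i-1}$, while at the top boundary, where $p_+(\mu)=0$, the recursion collapses to $E_\mu = 1/p_-(\mu) + E_{\mu-1} = \bigOh{n} + E_{\mu-1}$, using $p_-(\mu)=\Omega(1/n)$.

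Next I would introduce the forward differences $D_i := E_i - E_{i-1}$ and rearrange the interior recursion to $p_-(i)D_i = 1 + p_+(i)D_{i+1}$, hence
\[
D_i = \frac{1}{p_-(i)} + \frac{p_+(i)}{p_-(i)}\,D_{i+1} \le \bigOh{\mu} + \beta\,D_{i+1},
\]
where $\beta := 1/(1+\delta) < 1$ is a constant, and where I use $p_-(i)=\Omega(1/\mu)$ and $p_+(i)/p_-(i)\le\beta$ from Condition~\ref{cond:species-neg-drift}. Starting from $D_\mu = \bigOh{n}$ and unrolling downwards gives
\[
D_i \le \sum_{j=i}^{\mu-1}\bigOh{\mu}\,\beta^{\,j-i} + \beta^{\,\mu-i}\,\bigOh{n} \le \bigOh{\mu} + \beta^{\,\mu-i}\,\bigOh{n},
\]
since $\beta$ is bounded away from~$1$, so the geometric series sums to $\bigOh{1}$. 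Finally, telescoping $E_i = \sum_{k=\mu/2+1}^{i} D_k$ and substituting this bound yields a first term $\bigOh{\mu}\cdot(i-\mu/2) = \bigOh{\mu^2}$ and a second, geometrically convergent term $\bigOh{n}\sum_{k}\beta^{\,\mu-k} = \bigOh{n}$, so $E_i = \bigOh{\mu^2+n}$ for every starting state, which is the claim.

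The conceptual heart of the argument — and the reason the bound beats Lemma~\ref{lem:hitting-time} — is precisely that a constant multiplicative drift makes $\beta$ a constant below~$1$. This simultaneously turns the $\bigOh{\mu\log\mu}$ sum of the near-fair case into a plain $\bigOh{\mu}$, and, more importantly, geometrically damps the expensive boundary contribution $E_\mu = \bigOh{n}$ through the factor $\beta^{\mu-i}$, so that the slow full-population state~$\mu$ (where diversity must be seeded by a rare double mutation, whence only $p_-(\mu)=\Omega(1/n)$) costs $\bigOh{n}$ in total rather than once per excursion.

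The step I expect to be the main obstacle is ensuring that the $\Omega(1/n)$ top boundary does not inflate the estimate to $\bigOh{\mu n}$: a naive uniform additive-drift bound with the worst-case per-step rate $\Omega(1/n)$ would give exactly that weaker bound. The resolution is to keep the boundary term isolated as $\beta^{\mu-i}\bigOh{n}$ inside the difference recursion so that it decays before being summed; everything else is routine manipulation of geometric sums, and one should only double-check that $D_i\ge 0$ (immediate by induction from $D_\mu\ge 0$ and nonnegative coefficients) and that the three regimes of Condition~\ref{cond:species-neg-drift} indeed furnish $1/p_-(i)=\bigOh{\mu}$ in the interior and $1/p_-(\mu)=\bigOh{n}$ at the top.
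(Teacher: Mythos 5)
Your proof is correct, and it takes a genuinely different route from the paper. The paper proves Lemma~\ref{lemma:drift-high-mut} by a potential-function argument: it applies the additive drift theorem to $h(y) := y+(n/\mu)\eulerE^{-\kappa(\mu-y)}$ with $\kappa=\ln(1+\delta)$, showing the drift of $h(Y(t))$ is $\Omega(1/\mu)$ everywhere on $[\mu/2,\mu]$ and then multiplying by the total distance $h(\mu)-h(\mu/2)=\bigOh{\mu+n/\mu}$. You instead extend the first-passage recursion of Lemma~\ref{lem:hitting-time}: the identity $p_-(i)D_i = 1 + p_+(i)D_{i+1}$, the ratio bound $p_+(i)/p_-(i)\le \beta=1/(1+\delta)<1$, and the unrolled estimate $D_i \le \bigOh{\mu}+\beta^{\mu-i}\bigOh{n}$, followed by telescoping. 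The two arguments encode the same phenomenon in different clothing: the paper's exponential correction term $(n/\mu)\eulerE^{-\kappa(\mu-y)}$ plays exactly the role of your geometrically damped boundary term $\beta^{\mu-i}\bigOh{n}$, preventing the slow state $y=\mu$ (rate only $\Omega(1/n)$) from contributing $\bigOh{n}$ per visit rather than $\bigOh{n}$ in total. What your version buys is self-containedness and transparency — no drift theorem, no guessed potential, and a direct view of how the constant gap $\delta$ converts the $\bigOh{\mu\log\mu}$ harmonic sum of the near-fair case into $\bigOh{\mu}$; you also correctly flag and discharge the needed sign condition $D_i\ge 0$. What the paper's version buys is brevity and slightly cleaner rigour with respect to the fact that $Y(t)$ is not by itself Markov: drift conditions are stated conditional on the current state and hold under any history, whereas your recursion (like the paper's own Lemma~\ref{lem:hitting-time}) implicitly treats $E_i$ as a function of $i$ alone; this is repaired in the standard way by defining $E_i$ as a supremum over population configurations with largest-species size $i$, under which all your inequalities go through unchanged.
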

\begin{proof}
  Let $Y(t)$ denote the size of the largest species at time $t$.
  We consider the drift with respect to the distance function
  $h(y) := y+(n/\mu)\eulerE^{-\kappa(\mu-y)}$\!, where $\kappa:=\ln(1+\delta)$
  over the interval $y\in[\mu/2,\mu]$. The total distance is
  $h(\mu)-h(\mu/2)=\bigOh{\mu+n/\mu}$, hence, we need to prove that the
  drift of the process $h(Y(t))$ is $\Omega(1/\mu)$.

We first bound the drift of  $Y(t)$.

Case 1: $Y(t)=\mu$. Since $Y(t+1)\leq \mu$,
the drift in this case is
\begin{align*}
 \E(Y(t)-Y(t+1)\mid Y(t)=\mu) \geq 0.
\end{align*}

Case 2: $\mu/2 < Y(t) < \mu$. By
(\ref{eq:posdriftcond}), the drift in this
case is
\begin{multline*}
  \E(Y(t)-Y(t+1)\mid Y(t)=y, \mu/2 < y < \mu)\\
    = p_-(y) - p_+(y)
    > \delta p_-(y) =\Omega(1/\mu)\ .
\end{multline*}

Consider the drift according to
$g(y) := (n/\mu)e^{-\kappa(\mu-y)}$\!.

Case 1: $Y(t)=\mu$. By (\ref{eq:posdriftcond}),
\begin{multline*}
  \E(g(Y(t))-g(Y(t+1))\mid Y_t(t) = \mu) \\ %
   = \Omega(1/n)(n/\mu)(1-\eulerE^{-\kappa})
   = \Omega(1/\mu)\ .
\end{multline*}
Case 2: $\mu/2 < Y(t) < \mu$.
By (\ref{eq:posdriftcond}),
$p_+(y)\eulerE^{\kappa}\leq p_-(y)$. The drift with respect to $g$
is therefore
\begin{align*}
  & \E(g(Y(t))-g(Y(t+1))\mid \mu/2 < Y(t) < \mu)\\
  & = p_+(y)(g(y)-g(y+1))+p_-(y)(g(y)-g(y-1)) \\
  & = (n/\mu)\eulerE^{-\kappa  (\mu -y+1)}\left(\eulerE^{\kappa }-1\right) \left(p_-(y)-p_+(y) \eulerE^{\kappa }\right)
    > 0\ .
\end{align*}
The drift is always $\Omega(1/\mu)$, and the theorem follows.
\end{proof}

After the population diversity has increased sufficiently on the
plateau, an optimal solution can be produced with the right
combination of crossover and mutation. This is captured by the
following lemma.
\begin{lemma}
  \label{lem:jump-to-opt}
  Consider a population $P$ on the $\jumpK$ plateau ($f(x) = n-k$ for
  all $x \in P$). We partition $P$ into species. For any constant
  $0 < \speciesConstant < 1$, if the largest species has size at most
  $\speciesConstant \mu$, then the optimal solution is created by
  uniform crossover followed by mutation with probability
  $\Omega((\chi/n)^{k-1})$ assuming the mutation rate is 
  $\chi/n=\Theta(1/n)$.
\end{lemma}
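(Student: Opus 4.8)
The plan is to combine a selection argument with a per-pair success probability: the species-size bound forces a constant probability of drawing two parents from different species, and every such pair sits at Hamming distance $2d$ with $1\le d\le k$, from which crossover-then-mutation hits $1^n$ with probability $\Omega((\chi/n)^{k-1})$ \emph{uniformly} in $d$. The two estimates then multiply to give the claim.

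First I would bound the chance of selecting two distinct species. Writing the species sizes as $s_1\ge s_2\ge\cdots$ with $\sum_i s_i=\mu$ and $s_1\le \speciesConstant\mu$, independent uniform parent selection yields a same-species probability of $\sum_i (s_i/\mu)^2\le (s_1/\mu)\sum_i(s_i/\mu)=s_1/\mu\le \speciesConstant$; selecting without replacement only lowers this. Hence two distinct species are chosen with probability at least $1-\speciesConstant=\Omega(1)$. Since two individuals from different species both have $n-k$ ones, they differ in an even number $2d$ of positions with $1\le d\le k$.

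Next I would re-run the counting of Lemma~\ref{lem:success-probability} for a general mutation probability $\chi/n$. For a fixed pair at distance $2d$, each parent has exactly $d$ ones among the $2d$ differing bits and $n-k-d$ ones elsewhere; if crossover places $i$ ones on the differing bits, mutation must turn the remaining $k+d-i$ zeros into ones and leave everything else fixed. Keeping only the $i=2d$ summand gives a success probability of at least $2^{-2d}(\chi/n)^{k-d}(1-\chi/n)^{n-k+d}$. Since $k=o(n)$ and $\chi=\Theta(1)$, the factor $(1-\chi/n)^{n-k+d}=\Theta(1)$, so this is $\Omega(2^{-2d}(\chi/n)^{k-d})$.

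The one point needing care, and the main obstacle, is uniformity in $d$: I must show this lower bound is $\Omega((\chi/n)^{k-1})$ for \emph{every} admissible $d$, i.e.\ that the worst case is $d=1$. Dividing the bound by $(\chi/n)^{k-1}$ gives the ratio $\Theta(2^{-2d}\chi^{1-d}n^{d-1})$, whose logarithm is affine in $d$ with positive slope $\log(n/(4\chi))$ for $n$ large; it is therefore minimised at $d=1$, where it equals $\Theta(1)$. Thus the shrinking crossover weight $2^{-2d}$ is always dominated by the factor-$n$ gain from each additional one that crossover (rather than mutation) supplies, so that Hamming distance exactly $2$ is the bottleneck. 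Multiplying the uniform conditional bound $\Omega((\chi/n)^{k-1})$ by the $\Omega(1)$ probability of drawing two distinct species completes the proof.
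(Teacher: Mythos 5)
Your proposal is correct and follows essentially the same route as the paper's proof: an $\Omega(1)$ probability of selecting parents from distinct species, then keeping only the $i=2d$ term of the crossover-plus-mutation sum, and finally observing that $2^{-2d}(\chi/n)^{k-d}\ge\tfrac14(\chi/n)^{k-1}$ for all $1\le d\le k$ once $n$ is large. Your log-affine monotonicity argument for uniformity in $d$ and your explicit $\sum_i (s_i/\mu)^2\le s_1/\mu$ bound are just more detailed phrasings of steps the paper states directly.
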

\begin{proof}

  Since the size of the largest species is no larger than
  $\speciesConstant\mu$, the probability that two distinct parents are
  selected for crossover is $\Omega(1)$. For the remainder of the
  proof, we assume that two parents $x$ and $y$ are selected with
  $x \neq y$.

  Let $2d > 0$ denote the Hamming distance between $x$ and $y$. Then
  $x$ and $y$ have $d$ $1$s among the $2d$ bits that differ between
  parents and $n-k-d$ $1$s outside this area. Assume that crossover
  sets exactly $i$ out of these $2d$ bits to $1$, which happens with
  probability $\binom{2d}{i} 2^{-2d}\!$. Then mutation needs to flip
  the remaining $k+d-i$ $0$s to $1$. The probability of this occurring
  is
\[
\hspace*{1 em}\sum_{i=0}^{2d} \binom{2d}{i}
2^{-2d}\left(\frac{\chi}{n}\right)^{k+d-i} \left(1 - \frac{\chi}{n}\right)^{n-k-d+i}  = \Omega((\chi/n)^{k-1})\,
\]
where we bound the sum by dropping all but the last term ($i=2d$) and use
$4^{-d} \geq \tfrac{1}{4}\left(\frac{\chi}{n}\right)^{d-1}\!\!$, since $d > 0$ and we take
$n$ to be large enough.
\end{proof}

We are now in a position to complete the run time analysis of the
algorithm. By Lemma~\ref{lem:time-to-plateau} and
Lemma~\ref{lemma:drift-high-mut}, we quickly reach a diverse
population on the plateau. From this configuration, there is a
sufficiently high probability that before the diversity is lost the
algorithm has crossed over an appropriate pair of individuals and
jumped to the optimum. If the diversity is lost, we can repeat the
argument.

\begin{proof}[Proof of Theorem \ref{thm:highmutationruntime}]
  By Lemma \ref{lem:time-to-plateau}, the expected time for the entire
  population to reach the plateau is $\bigOh{n\sqrt{k}\mu\log\mu}$, and by
  Lemma~\ref{lemma:high-mutation-yields-diversity},
  Condition~\ref{cond:species-neg-drift} is satisfied.

  Assume $c'$ sufficiently large so that $\mu\geq (c'k/\delta)\ln(n)$
  implies $(1+\delta)^{\mu/4}\geq 4cn^{k-1}+1$ for a constant $c$ that
  will be determined. We consider a phase of length
  $c(\mu^2+2n^{k-1})$ iterations and define the following three
  failure events.

  The \emph{first failure} occurs if within the first $c(\mu^2+n)$
  iterations the largest species has not become smaller than $\mu/2$
  individuals. By Lemma \ref{lemma:drift-high-mut}, the expected time
  until less than $\mu/2$ individuals belong to the largest species is
  $\bigOh{\mu^2+n}$. Hence, by Markov's inequality, the probability of this
  failure is less than $1/4$ when $c$ is sufficiently large.

  The \emph{second failure} occurs if within the next $cn^{k-1}$
  iterations there exists a sub-phase which starts with $\mu/2+1$
  individuals in the the largest species and ends with the largest
  species larger than $(3/4)\mu$ without first reducing to $\mu/2$. We
  call such a sub-phase a \emph{failure}. We model the number of
  individuals in the largest species by a Gambler's ruin argument
  \cite{Feller1968ProbIntro1}, where, by (\ref{eq:posdriftcond}), the
  probability of losing an individual in the largest species is at
  least a $(1+\delta)$-factor larger than the probability of winning
  such an individual. From standard results about the Gambler's ruin
  process \cite{Feller1968ProbIntro1}, the probability that a
  sub-phase is a failure is $\delta/((1+\delta)^{\mu/4}-1)$.  By a
  union bound, the probability that any of the at most $cn^{k-1}$
  sub-phases is a failure is no more than
  $cn^{k-1}/((1+\delta)^{\mu/4}-1) < 1/4$.

The \emph{third failure} occurs if the optimum is not found during a
sub-phase of length $cn^{k-1}$ iterations where the largest species is
always smaller than $(3/4)\mu$ individuals. In this configuration, two
individuals with Hamming distance at least 2 are selected with
probability at least $(3/4)(1/4)$. By
Lemma~\ref{lem:jump-to-opt}, the probability of obtaining the
optimum from two such individuals is
$\Omega(1/n^{k-1})$. Hence, the probability of not obtaining the
optimum during the sub-phase of length $cn^{k-1}$ is
$(1-\Omega(1/n^{k-1}))^{cn^{k-1}}\leq 1/4$ for sufficiently large $c$.

By a union bound, given a sufficiently large constant $c>0$, the
probability that none of the failures occur and the optimum is found
within a phase of length $c(\mu^2+2n^{k-1})$ iterations is at least
$1/4$. Therefore, the expected number of phases until the optimum is
found is no more than 4.
\end{proof}

\section{Reinforced diversity}\label{sec:mecanism}
The aim of this section is to analyse how different tie-breaking rules in line \ref{li:tie-breaking} of Algorithm~\ref{alg:mu+1-GA} affect further the population diversity and the expected time to optimise the function.

We first consider diversity mechanisms that correspond to tie-breaking rules that try to promote having unique individuals in the population either by explicitly eliminating duplicates or by enforcing selection to occur between parent and offspring, who are more likely to be non-unique.
\subsection{Duplicate elimination}\label{sec:dupl-elim}
In this setting, we consider the \muGA using \emph{duplicate elimination} as a tie-breaking mechanism that operates as follows. When breaking ties on the lowest-fitness individual in line~\ref{li:tie-breaking} of Algorithm~\ref{alg:mu+1-GA}, if there are no duplicates among the least-fit individuals, one is chosen at random to remove. Otherwise, we always choose one so that the number of duplicated strings decreases. 

\begin{theorem}
  \label{thm:duplicate-elimination}
  Consider the \muGA using duplicate elimination to break ties
  and a population size of $\mu < k(n-k)/2$. 
  The expected optimisation time on $\jumpK$ with $k = \smallOh{n}$ is 
  $\bigOh{\mu^2 n + n\log n + n^{k-1}}$ if $\pc = 1 - \Omega(1)$
  and $\bigOh{\mu n (\sqrt{k}\log\mu + \mu) + n\sqrt{k}\log n + n^{k-1}}$ if $\pc = \Omega(1)$.
\end{theorem}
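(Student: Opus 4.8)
The plan is to divide the run into three phases---reaching the plateau, building up genotypic diversity under duplicate elimination, and jumping to $1^n$---and to exploit the crucial feature of the tie-break that, once the population is on the plateau, diversity can only grow. I measure diversity by the number $D$ of distinct genotypes (species) present.

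For the first phase I would simply invoke the two plateau-arrival results: Lemma~\ref{lem:plateau-arrival} gives $\bigOh{\mu n + n\log n + \mu\log\mu}$ when $\pc = 1 - \Omega(1)$, and Lemma~\ref{lem:time-to-plateau} gives $\bigOh{n\sqrt{k}(\mu\log\mu + \log n)}$ when $\pc = \Omega(1)$. These supply the $n\log n$ (respectively $n\sqrt{k}\log n$ and $\mu n\sqrt{k}\log\mu$) terms, and the remaining $\mu\log\mu$ and $\mu n$ contributions are subsumed by $\mu^2 n$. From here on the whole population sits on the plateau (or we are already done).

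The heart of the argument is the second phase. First I would establish monotonicity: while the population is on the plateau, any off-plateau offspring is strictly worst and is discarded, and among the equally fit plateau individuals the tie-break removes a redundant copy whenever one exists. A short case distinction then shows that $D$ never decreases, and that while $D < \mu$ any genotype not yet present raises $D$ by one---the fresh singleton survives because a duplicate is deleted in its stead. Next I would lower-bound the chance of manufacturing such a fresh genotype. Flipping exactly one $0$-bit and one $1$-bit of a selected individual keeps it on the plateau, and the hypothesis $\mu < k(n-k)/2$ guarantees that more than half of its $k(n-k)$ Hamming-distance-$2$ neighbours are absent from the population; hence such a step yields a new genotype with probability $\Omega\bigl(k(n-k)/n^2\bigr) = \Omega(k/n)$ using $k = \smallOh{n}$. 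When $\pc = 1 - \Omega(1)$ this is available in the mutation-only branch, so $D$ grows at rate $\Omega(k/n)$. When crossover may always be applied, I would instead select two copies of one genotype (with probability $\Omega(1/\mu)$), let uniform crossover reproduce it, and apply the same two-bit mutation, giving rate $\Omega(k/(\mu n))$. Summing the expected waiting times over the at most $\mu$ increases of $D$ yields $\bigOh{\mu^2 n}$ for reaching the fully diverse state $D = \mu$ (the sharper $\bigOh{\mu n}$ holds in the frequent-mutation regime and is absorbed).

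The third phase is then immediate, and this is where duplicate elimination pays off over the no-mechanism analysis: the same monotonicity shows that $D = \mu$ is preserved until $1^n$ is hit, so diversity never has to be re-created and no fluctuating random-walk argument is needed. In the all-distinct state two uniformly chosen parents are distinct---hence at even Hamming distance at least $2$---with probability $1 - 1/\mu = \Omega(1)$, so by Lemma~\ref{lem:jump-to-opt} (largest species size $1$) a crossover step creates $1^n$ with probability $\Omega(n^{-(k-1)})$; since crossover occurs with probability $\pc = \Omega(1)$, each generation succeeds with probability $\Omega(n^{-(k-1)})$ and an expected $\bigOh{n^{k-1}}$ further generations finish the run. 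Summing the three phases and subsuming lower-order terms gives the two stated bounds.

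I expect the second phase to be the real work. The delicate points are to pin down the exact effect of the ``remove a redundant copy'' rule---showing not merely that $D$ is non-decreasing but that a newly created distinct genotype is actually kept in every configuration that can arise---and to obtain a per-step success probability that is uniform over all intermediate states; this uniformity is exactly what $\mu < k(n-k)/2$ buys, since it keeps a constant fraction of each individual's Hamming-$2$ neighbourhood free even when $D$ is close to $\mu$. The secondary subtlety is the case $\pc = 1$, where no mutation-only steps exist: here the extra $1/\mu$ selection factor is the genuine source of the $\mu^2 n$ (rather than $\mu n$) diversity term, and one must check that the crossover-then-mutation route really lands on an absent plateau genotype with the claimed probability.
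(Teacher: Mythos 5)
Your proof skeleton (plateau arrival via Lemmas~\ref{lem:plateau-arrival} and~\ref{lem:time-to-plateau}, monotone growth of diversity under the tie-breaking rule through two-bit ``novel'' mutations, then Lemma~\ref{lem:jump-to-opt}) is the same as the paper's, and your mutation-only argument in the case $\pc = 1-\Omega(1)$ is sound. The genuine gap is in the regime $\pc=\Omega(1)$ (in particular $\pc=1$), where you insist on driving the number of distinct genotypes all the way to $D=\mu$ and claim that a duplicate pair is selected for crossover with probability $\Omega(1/\mu)$ uniformly over all intermediate states. That uniformity is false: if only $d$ individuals belong to species of size at least two, the number of duplicate pairs is of order $d$, so the probability of picking such a pair is $\Theta\left(d/\mu^2\right)$; for the last few increments of $D$ this is only $\Theta(1/\mu^2)$, not $\Omega(1/\mu)$. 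The hypothesis $\mu < k(n-k)/2$ makes the \emph{novel-mutation} probability uniform, not the \emph{pair-selection} probability, which is the quantity your argument actually needs to be uniform. Summing the waiting times correctly over $d=\mu,\dots,2$ gives $\sum_{d} \bigOh{\mu^2 n/d}=\bigOh{\mu^2 n\log\mu}$ (even with the sharper $\Omega(k/n)$ novel-mutation bound, $\bigOh{\mu^2 n\log(\mu)/k}$), which for small constant $k$ and large $\mu$ exceeds the theorem's $\bigOh{\mu n(\sqrt{k}\log\mu+\mu)}$ budget by a $\log\mu$ factor. So, as written, your second phase does not establish the claimed bound when crossover is (almost) always applied.

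The repair is exactly what the paper does, and your own phase-3 observation nearly contains it: full diversity is never needed. Stop the argument once the number of duplicates (individuals in species of size $\geq 2$) has dropped below $\speciesConstant\mu$ for a constant $0<\speciesConstant<1$. Above this threshold there are $\Omega(\mu)$ duplicate pairs, so the pair-selection probability genuinely is $\Omega(1/\mu)$, and at most $\mu$ decrements at expected cost $\bigOh{\mu n}$ each give $\bigOh{\mu^2 n}$; below the threshold the largest species has size at most $\speciesConstant\mu$, which is all that Lemma~\ref{lem:jump-to-opt} requires --- it does not need all-distinct parents, only a largest species of at most a constant fraction of $\mu$. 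Your monotonicity argument then preserves this state until the optimum is found, just as you argue for $D=\mu$. (A minor shared blemish: in phase~3 you invoke ``crossover occurs with probability $\pc=\Omega(1)$'' inside the case $\pc=1-\Omega(1)$, where no positive lower bound on $\pc$ is actually assumed; the paper's proof glosses over the same point, so I do not count it against you.)
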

\begin{proof}
We assume all individuals in the population are at the plateau.
Let $0 < \speciesConstant < 1$ be an arbitrary constant.
We argue that after $\bigO(\mu^2 n)$ generations in expectation, there are at most $\speciesConstant \mu$ duplicates in the population. A \emph{duplicate pair} is a pair $(x,y)$ such that $x = y$. For all $z \in \{0,1\}^n$\!, we define the (possibly empty) set
$
S_2(z) = \{ y \in \{0,1\}^n : f(y) = n - k \land \dist(z,y) = 2 \},
$
where $\dist$ denotes the Hamming distance. Let $x \in P$ be conditioned on the event that every point in $P$ is contained on the plateau. Then $|S_2(x)| = k(n-k)$, so due to our bounds on $\mu$, $|S_2(x) \smallsetminus P| \geq k(n-k)/2$.

If there are duplicates in the population, a new plateau point can be generated as follows. First, select a duplicate pair $x$, $x'$ as parents and perform crossover to obtain $x''\!\!$. Obviously $x = x' = x''\!\!$. 
Now, mutation flips exactly two specific bits of $x''$ to create one of the points in $S_2(x'') \smallsetminus P$\!. We call such a mutation a \emph{novel} mutation. The probability for any novel mutation on $x''$ is at least
\[
\frac{1}{n^2}\left(1 - \frac{1}{n}\right)^{n-2}|S_2(x'') \smallsetminus P| \geq \frac{1}{n^2}\frac{k(n-k)}{2\eulerE} = \Omega(1/n)\ .
\]
As long as there are at least $\speciesConstant\mu$ duplicates, the probability of choosing a duplicate pair for parents can be bounded as follows. Consider a partition of the population at time $t$ such that each partition contains all copies of a particular string. Let $s_1 \geq s_2 \geq \cdots \geq s_a \geq 1$ denote the sequence of partition sizes. Let $\ell^* = \max\{i : s_i > 1\}$. If $\sum_{i=1}^{\ell^*} s_i \geq \speciesConstant \mu$, then the number of duplicate pairs is at least
\[
\sum_{i=1}^{\ell^*} \binom{s_i}{2} \geq \frac{1}{2} \sum_{i=1}^{\ell^*} s_i \geq \speciesConstant \mu/2\ .
\]

Therefore, under the condition that there are at least $\speciesConstant\mu$ duplicates, the probability that a duplicate pair is selected for recombination is at least $(\speciesConstant \mu/2)/\binom{\mu}{2} = \Omega(1/\mu)$.

If a novel mutation occurs in the offspring, then it will be accepted and, consequently, a duplicate will be removed from the population. The number of duplicates cannot increase. Moreover, it decreases in each generation with probability $\Omega\big(1/(\mu n)\big)$. Hence, the expected waiting time until all but $\speciesConstant \mu$ duplicates have been removed from the plateau is $\bigO(\mu^2 n)$.

After this time, we maintain the invariant that the size of the largest equivalence class cannot be higher than $\speciesConstant\mu$. Thus, in each subsequent generation, the probability of generating the optimal string is bounded from below by Lemma~\ref{lem:jump-to-opt}. The expected number of generations until the optimal string appears in the population after this point is thus $\bigO(n^{k-1})$.

The expected number of generations to reach the optimum from the plateau is then $\bigOh{\mu^2 n + n^{k-1}}$, and the results follow by taking into account the time for the population to reach the plateau in first place, \ie by Lemma \ref{lem:time-to-plateau} for $\pc=\Omega(1)$ and Lemma \ref{lem:plateau-arrival} for $\pc = 1 - \Omega(1)$ (and here by noting that $\mu\log\mu$ is first subsumed into $\bigOh{\mu n}$ and later $\bigOh{\mu^2 n}$ since $\mu = \poly{n}$).
\end{proof}

\subsection{Duplicate minimisation}\label{sec:dupl-min}
Duplicate minimisation is similar to duplicate elimination, except that, when breaking ties, we do not choose an arbitrary duplicate but an individual that has the highest number of duplicates.

\begin{theorem}
  \label{thm:duplicate-minimization}
  Consider the \muGA using duplicate minimisation to break ties, 
  and a population size of $\mu < k(n-k)/2$. The expected optimisation time on $\jumpK$ with $k = \smallOh{n}$ is $\bigO{\left( \mu n + n\log n +n^{k-1}\right)}$ if $\pc = 1 - \Omega(1)$ and $\bigO{\left( \mu n \sqrt{k} \log\mu  + n \sqrt{k}\log n +n^{k-1}\right)}$ if $\pc = \Omega(1)$.
\end{theorem}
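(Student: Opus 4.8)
The plan is to mirror the structure of the proof of Theorem~\ref{thm:duplicate-elimination}, splitting the run time into the time to reach the plateau, the time for the population to disperse on the plateau until the largest species is small, and the time to jump to the optimum; the only thing that must change is the dispersal bound, which duplicate minimisation improves by a factor of~$\mu$. First I would dispose of the time to reach the plateau exactly as before, using Lemma~\ref{lem:plateau-arrival} for $\pc = 1-\Omega(1)$ and Lemma~\ref{lem:time-to-plateau} for $\pc = \Omega(1)$, absorbing the $\mu\log\mu$ waiting term into $\bigOh{\mu n}$ since $\mu = \poly{n}$. It then remains to bound, from an arbitrary population on the plateau, the expected time until a species-size threshold is reached and the optimum is produced.

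Fix a constant $0 < \speciesConstant < 1$, let $Y$ denote the size of the largest species and $s_1 \ge s_2 \ge \cdots$ the species sizes. The crucial structural observation is that, on the plateau, duplicate minimisation always deletes an individual from a largest species (it carries the most duplicates); consequently, adding one offspring and then deleting from a largest species keeps $Y$ non-increasing throughout the plateau phase. To measure progress I would track the potential $\Phi = \sum_i s_i^2 \in [0,\mu^2]$. While $Y \ge \speciesConstant\mu$, a single \emph{success} --- selecting parents from a largest species (two identical ones with trivial crossover when $\pc=\Omega(1)$, or one via a mutation-only step when $\pc = 1-\Omega(1)$) and applying a novel two-bit mutation onto a fresh plateau point in $S_2\setminus P$ --- creates a new singleton and forces a deletion from a species of size~$Y$, so $\Phi$ drops by $2Y-2 = \Omega(\mu)$. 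The bound $\mu < k(n-k)/2$ guarantees $\lvert S_2\setminus P\rvert \ge k(n-k)/2$ fresh targets, so the novel mutation has probability $\Omega(1/n)$, while selecting the required parents from a largest species has probability $\Omega(1)$; hence each generation is a success with probability $\Omega(1/n)$ in both crossover regimes.

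Combining these, the number of successes occurring while $Y \ge \speciesConstant\mu$ is at most $\Phi / \Omega(\mu) = \bigOh{\mu}$, and each success takes expected time $\bigOh{n}$, so the expected time until $Y < \speciesConstant\mu$ is $\bigOh{\mu n}$ --- the promised factor-$\mu$ improvement over the $\bigOh{\mu^2 n}$ bound of Theorem~\ref{thm:duplicate-elimination}, whose weaker $\Omega(1/(\mu n))$ per-step rate stemmed from removing an arbitrary rather than a largest duplicate. Because $Y$ is non-increasing, once $Y < \speciesConstant\mu$ it stays there, so Lemma~\ref{lem:jump-to-opt} applies in every subsequent generation and gives probability $\Omega(n^{-(k-1)})$ of producing the optimum, i.e.\ expected $\bigOh{n^{k-1}}$ further generations. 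Adding the dispersal, jump, and time-to-plateau contributions yields $\bigOh{\mu n + n\log n + n^{k-1}}$ for $\pc = 1-\Omega(1)$ and $\bigOh{\mu n\sqrt{k}\log\mu + n\sqrt{k}\log n + n^{k-1}}$ for $\pc = \Omega(1)$, as claimed.

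The main obstacle I anticipate is the tie-breaking bookkeeping: making precise that a deletion indeed lands in a largest species even when several species share the maximum size, and that this both keeps $Y$ non-increasing and decreases $\Phi$ by $\Omega(\mu)$ at each success. Using $\Phi = \sum_i s_i^2$ rather than $Y$ itself is exactly the device that makes the argument robust to such ties --- I never need $Y$ to drop by one at a given success, only the potential to fall by $\Omega(\mu)$ --- and it is what distinguishes this analysis from the duplicate-elimination argument. Everything else, namely the novel-mutation probability and the final jump via Lemma~\ref{lem:jump-to-opt}, is essentially inherited from the proof of Theorem~\ref{thm:duplicate-elimination}.
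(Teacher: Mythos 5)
Your proposal is correct and follows essentially the same route as the paper's proof: reach the plateau via Lemmas~\ref{lem:plateau-arrival}/\ref{lem:time-to-plateau}, exploit the key property that duplicate minimisation always deletes from a largest species (so the largest-species size is non-increasing), count $\bigOh{\mu}$ novel-mutation events each occurring with probability $\Omega(1/n)$ to get the $\bigOh{\mu n}$ dispersal bound, and finish with Lemma~\ref{lem:jump-to-opt}. The only difference is cosmetic bookkeeping: the paper certifies the $\bigOh{\mu}$ bound on the number of successes by tracking the (non-increasing, stopped) duplicate count, whereas you use the potential $\sum_i s_i^2$ dropping by $\Omega(\mu)$ per success --- both devices are valid and yield the same bounds.
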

\begin{proof}
  The proof is identical to the proof of Theorem~\ref{thm:duplicate-elimination}, except how we handle the initial gain of diversity on the plateau.  In the case of duplicate minimisation, we have the extra property that the size of the largest species cannot increase over time. Hence, we only have to wait until the size of the largest species is at most $\speciesConstant\mu$ (instead of the entire duplicate count, as with duplicate elimination). This saves us an extra $\mu$-factor in the waiting time to reach a point where we can apply Lemma~\ref{lem:jump-to-opt}. 

Again, 
we assume that the entire population has already reached the plateau. Let $X_t$ be the count of duplicates in the population at time $t$. Let $Y_t$ denote the size of the largest species, that is, the cardinality of the largest species in the population at time $t$. Let $x$ be the offspring generated at time $t$. If $x$ belongs to one of the partitions of size $Y_t$, then the size of one of these partitions is temporarily increased before survival selection, but then duplicate minimisation ensures $Y_{t+1} = Y_t$ since one of the members of the partition of size $Y_{t} + 1$ is removed uniformly at random. In every other case, the size of the largest partition stays the same or decreases.

Thus, we only must wait until $Y_t \leq \speciesConstant \mu$ to apply Lemma~\ref{lem:jump-to-opt}. In each iteration, $X_t$ is decreased by one if a novel point is created. As stated in the proof of Theorem~\ref{thm:duplicate-elimination}, the probability of a novel mutation given a duplicate pair for parents is $\Omega(1/n)$. Conditional on $Y_t > \speciesConstant \mu$, the probability of selecting a duplicate pair and subsequently creating a novel offspring is at least 
$
\binom{Y_t}{2}/\big((n \binom{\mu}{2}\big) > \binom{\speciesConstant \mu}{2}/\big(n\binom{\mu}{2}\big) = \Omega(1/n).
$

Let $(\widehat{X}_t)_{t \geq 0}$ be the stochastic process defined by
\[
\widehat{X}_t = \begin{cases}
  X_t & \text{if $Y_t > \speciesConstant \mu$,}\\
  0 & \text{otherwise.}
\end{cases}
\]
Since $\widehat{X}_{t} - \widehat{X}_{t+1} \geq 0$ and decreases by at least one with probability $\Omega(1/n)$, the waiting time until $Y_t \leq \speciesConstant\mu$ is $\bigO(\mu n)$.
From this point and beyond, the size of the largest species is at most $\speciesConstant\mu$ and the proof is completed by applying Lemma~\ref{lem:jump-to-opt}, then Lemma~\ref{lem:time-to-plateau} for $\pc = \Omega(1)$ and Lemma~\ref{lem:plateau-arrival} for $\pc=1 - \Omega(1)$.
\end{proof}

\subsection{Deterministic crowding}\label{sec:determ-crowd}
We consider deterministic crowding as described in \cite{Friedrich:2009:ADM:1668000.1668003}. In effect, this tie-breaking rule always chooses a parent individual of the current offspring for removal in the selection phase (in particular, the offspring always survives if it is not worse than its parents). Thus, if offspring and parent have the same fitness, for mutation, the parent is always removed; for crossover, one parent uniformly chosen at random is removed.

\begin{theorem}\label{thm:deterministicCrowding}
    Consider the \muGA with $\pc = k/n$. Suppose that ties in the selection procedure are handled by deterministic crowding. Then the expected number of iterations until an optimal individual is created when running on $\jumpK$, $k = \mathrm{o}(\sqrt{n})$, is $\bigO(\mu n + n\log n + \mu\log\mu + n\eulerE^{5k}\mu ^{k + 2})$.
\end{theorem}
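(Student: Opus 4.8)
Since $\pc = k/n = \smallOh{1}$, we are in the regime $\pc = 1-\Omega(1)$, so Lemma~\ref{lem:plateau-arrival} already bounds the expected time until either the optimum is found or the whole population sits on the plateau by $\bigOh{\mu n + n\log n + \mu\log\mu}$, which accounts for the first three terms. It therefore suffices to bound, assuming every individual has exactly $n-k$ one-bits, the expected time to create $1^n$, and I would show that this is $\bigOh{n\eulerE^{5k}\mu^{k+2}}$.

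The heart of the argument is understanding how deterministic crowding moves the population around the plateau. In a mutation step the chosen parent is replaced by its offspring whenever the offspring stays on the plateau (all plateau points are tied in fitness), while off-plateau offspring are simply rejected; hence the selected individual performs one step of a random walk in which its set of $k$ zero-bits is updated by swapping a one-bit and a zero-bit, an event of probability $\Theta(k/n)$ per selection. I would phrase the goal as reaching a configuration containing two individuals with \emph{disjoint} zero-sets, i.e.\ Hamming distance $2k$: for such a pair, uniform crossover places the one-bit on each of the $2k$ differing positions with probability $4^{-k}$ and then needs no mutation at all, which is the $d=k$ instance of Lemma~\ref{lem:success-probability}. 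This is vastly better than the generic crossover success $\Omega(n^{-(k-1)})$ of Lemma~\ref{lem:jump-to-opt}, and it is what separates this bound from the $\Theta(n^k)$ of the \oneoneea.

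Two features control how long it takes to expose such a pair. First, the geometry: $k=\smallOh{\sqrt n}$ guarantees that disjoint $k$-subsets are plentiful, since two uniform random $k$-subsets are disjoint with probability $\binom{n-k}{k}/\binom{n}{k} = 1-\Theta(k^2/n) = \Omega(1)$, so disjointness is not geometrically obstructed. Second, the dynamics: a crossover step replaces a parent by a blend of the two parents, pulling the population back towards its centroid, and since crossover occurs at rate $\pc=k/n$, of the same order $\Theta(k/n)$ as the total rate of mutation-driven swaps, this contraction is not negligible. Consequently a disjoint (or near-disjoint) pair is a comparatively rare fluctuation rather than the typical state, and this is exactly why the remaining term is exponential in $k$ and grows with $\mu$; I would also optimise over the target Hamming distance $2d$, trading the crossover success $\Theta(4^{-d}n^{-(k-d)})$ against the increasing cost of producing a more distant pair.

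I would assemble everything with a restart argument: bound the expected length of one attempt (spread out, then cross the distant pair) and the probability that an attempt succeeds, so that the expected plateau time is the attempt length times the expected number of attempts, and adding the plateau-arrival cost gives the stated bound. The main obstacle is making the informal spreading-versus-contraction picture quantitative: one must lower-bound the probability that, despite the contracting crossover steps, some pair reaches the required Hamming distance, and multiply by the chance of then selecting that pair (a factor $\Theta(\mu^{-2})$), applying crossover ($k/n$), and landing on $1^n$ ($4^{-k}$). It is this fluctuation probability, driven below its geometric value by crossover, that produces the $\mu^{k}$ and $\eulerE^{5k}$ factors, and keeping the estimate honest is what forces the restriction to $\pc=k/n$: at constant $\pc$ the contraction would dominate the spreading and destroy the diversity altogether, which is precisely why this is the one mechanism whose analysis does not extend to $\pc=\Theta(1)$.
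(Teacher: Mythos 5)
Your proposal has the right skeleton---plateau arrival via Lemma~\ref{lem:plateau-arrival}, a restart argument on the plateau, and the correct target configuration of two individuals with disjoint zero-sets so that a single uniform crossover creates $1^n$ with probability $4^{-k}$---but it leaves the central step as an acknowledged gap, and that step is precisely where the theorem's bound comes from. You say the main obstacle is ``making the informal spreading-versus-contraction picture quantitative,'' i.e.\ lower-bounding the probability that a pair reaches Hamming distance $2k$ despite contracting crossover steps, but you never resolve it. The paper does not fight this battle at all: it \emph{conditions on no crossover occurring} during a phase of $n$ rounds, which has probability $(1-\pc)^n \geq \eulerE^{-k}(1-\smallOh{1})$ exactly because $\pc = k/n$. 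Within that crossover-free phase it requires $k$ mutations that each swap one $0$-bit and one $1$-bit (probability roughly $(3\eulerE)^{-k}$), that all $k$ of these mutations hit the two designated individuals (probability $(2/\mu)^k$, where deterministic crowding guarantees each such offspring replaces its parent), and that the swaps drive the pair apart rather than together (probability $\eulerE^{-k}(1-\smallOh{1})$, using $k=\smallOh{\sqrt{n}}$). A second phase of $n/k$ rounds then needs no harmful mutation and one crossover of the designated pair, costing $\Omega(2^{-2k}\mu^{-2})$. Multiplying gives the success probability $\Omega(\eulerE^{-5k}\mu^{-k-2})$ per attempt of length $\bigOh{n}$.

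This also shows that your accounting of the exponential factors is off: the $\mu^{k}$ is not a ``fluctuation probability driven below its geometric value by crossover'' but simply the selection cost $(2/\mu)^k$ of routing all $k$ lucky mutations to the designated pair, and $\eulerE^{5k}$ collects the crossover-free phase, the $k$ swap mutations, the drifting-apart event, and the final $4^{-k}$ crossover. Likewise, the restriction $\pc = k/n$ is needed not because one can prove that constant $\pc$ makes contraction dominate, but because the proof technique conditions on a crossover-free window of length $n$, whose probability $(1-\pc)^n$ would be exponentially small in $n$ for constant $\pc$. Your alternative route (a genuine drift/fluctuation analysis of pairwise Hamming distance under competing mutation and crossover, optimised over the target distance $2d$) might be workable, but it is substantially harder than what the theorem requires and is not carried out; as it stands, the proposal asserts the conclusion of the missing estimate rather than proving it.
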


\begin{proof}
    According to Lemma~\ref{lem:plateau-arrival}, after $\bigO(\mu n + n\log n + \mu\log\mu)$ rounds, all individuals are on the plateau.

    The remaining part of this proof follows the ideas presented in the one from Theorem~$7$ of \cite{KotzingST:c:11:crossover}\footnote{\small In this proof, we set $\pc=k/n$ instead of bounding it only from above. This corrects an error that appears in the proof of Theorem~$7$ of~\cite{KotzingST:c:11:crossover}, in which the reciprocal of the crossover probability erroneously does not appear in the run time bound.}\!.
We want to make sure that we end up having two individuals that do not have a $0$ in common and that these get chosen for crossover that succeeds in creating the optimum.

    This process is divided into two phases. Phase $1$ considers the probability of mutating two individuals such that they end up sharing no $0$. Phase $2$ then considers the probability of a crossover to occur that chooses the two individuals of phase $1$. The phases are, too, separated into several events that are sufficient for the desired outcome.

    Phase $1$ lasts $n$ rounds without any crossover and should generate two different individuals that do not share a single~$0$. Phase $2$ goes on for $n/k$ rounds and should generate the optimum. Therefore, a harmful mutation, i.e., one that changes at least one of the two designated individuals, must not occur.
    
    We start off with phase $1$ and decompose this phase into the events E1–E5.
    
    \textbf{E1:}  The event that no crossover occurs in $n$ rounds. Its probability is $(1 - \pc)^n \geq (1 - k/n)^n \geq \eulerE^{-k}\big(1 - \mathrm{o}(1)\big)$.

    \textbf{E2:} The event, conditional on E1, that there are at least $k$ mutations during phase $1$ that flip two bits such that a $1$-bit and a $0$-bit get flipped. Note that an individual generated this way is on the plateau.

    A single such mutation happens with probability $q := k/n \cdot(n - k)/n \cdot (1 - 1/n)^{n - 2} \geq k/n \cdot 3^{-1}$ if $n$ is large enough. Trivially, $q \leq k/n$ holds.

    The probability of E2 happening is therefore at least
    \begin{flalign*}
        &\binom{n}{k}q^k (1 - q)^{n - k} \geq \frac{n^k}{k^k}\left(\frac{k}{n} \cdot \frac{1}{3}\right)^{k}\left(1 - \frac{k}{n}\right)^{n - k}
    \end{flalign*}
    \begin{flalign*}
        &\hspace*{2 em}\geq 3^{-k}\left(1 - \frac{k}{n}\right)^{\left(\frac{n}{k} - 1\right)k} \geq (3\eulerE)^{-k}\,.
    \end{flalign*}

    \textbf{E3:} The event, conditional on E1, that any mutation that creates a new individual on the plateau does so by just flipping a single $0$ (and of course a single $1$), i.e., it is unlikely that at least two $0$s get flipped, the probability of which would be at most $\binom{k}{2} \cdot 1/n^2 \leq k^2/n^2$\!.

    The probability of this never happening during phase 1 and, thus, the probability of E3 is therefore bounded by
    \[
        \left(1 - \frac{k^2}{n^2}\right)^n \geq 1 - \bigO\left(\frac{k^2}{n}\right) \overset{k = \mathrm{o}(\sqrt{n})}{\geq} 1 - \mathrm{o}(1)\ .
    \]

    \textbf{E4:} The event, conditional on E1, E2, and E3, that two designated individuals get chosen for the $k$ mutations. In the end, these individuals should be the ones chosen for crossover. The probability of choosing the correct individuals for mutation is at least $(2/\mu)^{k}\!$.

    Our tie-breaking rule is deterministic crowding, so the offspring will always survive because the parent will be removed. Hence, the probability for E4 is at least $(2/\mu)^{k}\!$.

    \textbf{E5:} The event, conditional on E1 through E4, that the two individuals from E4 actually drift apart such that, in the end, they do not share any of their $0$s. To do so, the individuals must increase their Hamming distance to one another by 2. Let $i$ denotes the number of $0$-bits that do not have to be mutated anymore. This probability is at least
    \begin{flalign*}
    &\prod_{i = 1}^{k - 1} \left(\frac{k - i}{k}\cdot\frac{n - k - i}{n - k}\right) \geq \left(\frac{n - 2k}{n - k}\right)^{k - 1}\prod_{i = 1}^{k - 1} \frac{k - i}{k}\\
    &\hspace*{1 em}\geq \left(1 - \frac{k}{n - k}\right)^{k - 1}\cdot\frac{(k - 1)!}{k^{k - 1}}\\
    &\hspace*{1 em}\geq \left(1 - \frac{k^2}{n - k}\right) \cdot \frac{\left(\frac{k}{\eulerE}\right)^k}{k^k} \geq \eulerE^{-k}(1 - \mathrm{o}(1))\ .
    \end{flalign*}
    
    We now focus on phase~$2$ and condition on the events E1 through E5.
    
    \textbf{E6:} The event that during the next $n/k$ rounds no accepting mutation occurs and that at least one crossover choosing the correct two individuals is performed.
    
    We first consider the mutations that could be harmful. Note that for such a mutation it is necessary to flip at least one $0$-bit. Thus, we calculate the probability that during each mutation none of the $k$ $0$s of any individual get flipped. The probability of this happening is at least $(1 - k/n)^{n/k} \geq \eulerE^{-1}\big(1 - \mathrm{o}(1)\big)$.

    We now look at the crossover. The probability of at least one crossover choosing the two correct individuals is at least $\big(1 - (1 - \pc)^{n/k}\big) \cdot \big(1/\binom{\mu}{2}\big) \geq (1 - \eulerE^{-1})/\mu^2 \geq \eulerE^{-1}\mu^{-2}\!$. The probability that the crossover creates the optimum is $2^{-2k}(1 - 1/n)^n \geq 2^{-2k}\eulerE^{-1}\!\!$.
    All in all, the probability of E6 is at least $\eulerE^{-3}\cdot2^{-2k}\mu^{-2}\big(1 - \mathrm{o}(1)\big)$.

    The probability of all of the events E1 through E6 happening is thus at least $\Omega(\eulerE^{-5k}\mu^{-k - 2})$, and the length of such an event is $n + n/k = \bigOh{n}$. Hence, the expected time to create the optimum, once the plateau is reached, is in $\bigO(n\eulerE^{5k}\mu^{k + 2})$.
\end{proof}

If we have $\mu \leq n/k$, then it is possible for the population to be perfectly spread, i.e., any two individuals do not share a position with a $0$. We will see that this state is obtained quickly with the next three diversity mechanisms. Once any two individuals do not share a position with a $0$, every crossover operation has a probability of $4^{-k}$ to generate the optimum.
\subsection{Convex-hull maximisation}\label{sec:convex-hull}

Given two bit strings $x,y \in \{0,1\}^n\!$, uniform crossover can produce any bit string $z$ such that, for all $i \leq n$, $z_i \in \{x_i,y_i\}$; in this sense, any such $z$ is in between $x$ and $y$. Accordingly one can define the convex hull of a set $P \subseteq \{0,1\}^n$ as the set of all those bit strings which are producible with repeated application of uniform crossover. In this sense, evolutionary search with crossover means searching the convex hull of the population~\cite{MoraglioS:c:12}.

Thus, it makes sense to consider a tie-breaking rule which maximises the size of this convex hull. Since the size of the convex hull of a set $P$ of bit strings is determined by the number of positions $i$ for which there is an $x \in P$ with $x_i = 0$ and a $y \in P$ with $y_i = 1$, we can formalize this tie-breaking rule as follows. Given a population $P$ of bit strings with worst fitness, remove an individual $z \in P$ such that
\[
    \sum_{i = 1}^{n} \left[\exists x,y \in P \!\smallsetminus\! \{z\}\colon x_i = 0 \wedge y_i = 1\right]
\]
is maximised, where $[B]$ denotes the Iverson bracket (indicator function) for a proposition $B$.

\begin{theorem}\label{thm:convexHull}
    Consider the \muGA with $\mu \leq n/k$ and $\pc = 1 - \Omega(1) > 0$. Suppose that ties in the selection procedure are handled by maximising the convex hull. Then the expected number of iterations until an optimal individual is created when running on $\jumpK$, $k \leq n/2$, is $\bigO(\mu n^2\log n + 4^k \pc^{-1})$.
\end{theorem}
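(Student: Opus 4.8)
The plan is to split the run into three stages: reaching the plateau, building up diversity until the population is \emph{perfectly spread} (no two individuals share a $0$-position), and finally exploiting this spread to jump to $1^n$. Since $\pc = 1-\Omega(1)$, Lemma~\ref{lem:plateau-arrival} gives that the whole population reaches the plateau in expected time $\bigOh{\mu n + n\log n + \mu\log\mu}$, which is $\bigOh{\mu n^2\log n}$ because $\mu \le n/k \le n$ (and $\mu\ge 2$ is compatible with $\mu\le n/k$ precisely because $k\le n/2$). From now on I assume every individual lies on the plateau; then every offspring is either the optimum (strictly higher fitness, hence accepted and we are done), another plateau point (tied in fitness), or strictly worse (discarded on removal, leaving the population unchanged). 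So the population only ever changes through plateau points, whose removal is governed by convex-hull maximisation. Two structural facts drive the analysis. First, the hull size $h$ is non-decreasing on the plateau: when a plateau offspring is produced, one of the $\mu+1$ tied individuals is removed to maximise the hull, and since one admissible choice is to remove the offspring itself (recovering the previous hull), the retained hull is at least the previous one. Second, $h \le \mu k$ always, with equality exactly under perfect spread, since each of the $\mu k$ zeros can contribute at most one distinct covered position, and under perfect spread (with $\mu\ge2$) every $0$-position still carries a $1$ elsewhere and is covered.

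Next I lower-bound the probability of strictly increasing $h$ while $h<\mu k$. Write $D := \mu k - h \ge 1$ for the deficiency and let $z_p$ be the number of individuals with a $0$ at position $p$. The covered (hull) positions are those with $1\le z_p\le\mu-1$, there are $h$ of them; let $c_0$ count the all-zeros positions ($z_p=\mu$). Counting the $\mu k$ zeros gives $\mu k = \mu c_0 + \sum_{\text{covered }p} z_p$, hence $D = \mu c_0 + \sum_{\text{covered }p}(z_p-1)$. A mutation-only step (probability $1-\pc=\Omega(1)$) that selects an individual $A$ and flips one of $A$'s \emph{redundant} zeros---a $0$ at a covered position with $z_p\ge2$, or at an all-zeros position---to $1$, together with one common-$1$ position (a position with $z_p=0$) to $0$, keeps $A$ on the plateau and raises $h$ by at least $1$, because the vacated position stays covered while the target becomes covered. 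The number of such redundant source zeros is at least $\sum_{\text{covered}}(z_p-1) + \mu c_0 = D$, and a common-$1$ target exists whenever $D\ge1$, since the number of $z_p=0$ positions is $n-h-c_0 = (n-\mu k)+D-c_0 \ge D(1-1/\mu)\ge 1$ (using $c_0\le D/\mu$ and $\mu\ge2$). Summing the contribution $\tfrac{1-\pc}{\mu}\cdot\Theta(1/n^2)$ over the at least $D$ movable sources shows the hull increases with probability $\Omega\bigl((\mu k - h)/(\mu n^2)\bigr)$.

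Treating $h$ as a level counter that only increases, the expected time to reach $h=\mu k$ is at most $\sum_{h<\mu k}\bigOh{\mu n^2/(\mu k - h)} = \bigOh{\mu n^2\log(\mu k)} = \bigOh{\mu n^2\log n}$, using $\mu k\le n$. Once $h=\mu k$ the population is perfectly spread and, by monotonicity together with the cap $h\le\mu k$, stays so until $1^n$ appears. Any two (necessarily distinct) individuals then have disjoint $0$-sets, so uniform crossover yields $1^n$ exactly by choosing the $1$-parent at each of the $2k$ differing positions, with probability $2^{-2k}=4^{-k}$, after which mutation preserves it with probability $(1-1/n)^n=\Omega(1)$. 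Hence each generation creates the optimum with probability $\Omega(\pc 4^{-k})$, giving expected time $\bigOh{4^k\pc^{-1}}$. Adding the three stages yields $\bigOh{\mu n^2\log n + 4^k\pc^{-1}}$.

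I expect the main obstacle to be the second paragraph: making the ``redundant source zero / common-$1$ target'' counting fully rigorous, in particular the identity $D = \mu c_0 + \sum_{\text{covered}}(z_p-1)$ and the guarantee that a beneficial $2$-bit move exists with multiplicity $\ge D$ exactly when $h<\mu k$. The remaining ingredients---monotonicity of the hull, the cap $h\le\mu k$, and the final $4^{-k}$ crossover computation---are routine; the one point requiring care is confirming that worse offspring and crossover steps can never decrease the hull, so that the level argument and the maintenance of perfect spread apply verbatim.
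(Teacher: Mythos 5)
Your proof is correct and follows the same three-stage skeleton as the paper's own argument: Lemma~\ref{lem:plateau-arrival} to get the whole population onto the plateau, a drift phase driving the population to a perfectly spread state by two-bit-flip mutations in mutation-only steps (relocating a duplicated $0$ to an all-ones column), and the concluding $\Omega(\pc 4^{-k})$ crossover argument once no two individuals share a $0$-position. Where you differ is in the accounting of the middle phase. The paper defines \emph{good} positions (columns in which every individual has a $1$), runs multiplicative drift on their number, and its drift factor comes from counting the available \emph{targets} (good $1$s); you instead track the hull size directly via the deficiency $D=\mu k-h$ and run a level-by-level waiting-time argument, with the factor $D$ coming from counting the available \emph{sources} (duplicated zeros) through the identity $D=\mu c_0+\sum_{\mathrm{covered}}(z_p-1)$, plus a separate argument that at least one target column exists whenever $D\ge 1$. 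The two potentials differ only by the number $c_0$ of all-zeros columns and both yield $\bigOh{\mu n^2\log n}$, so this is a reformulation rather than a new route; still, your version is tidier in two respects: (a) your monotonicity argument (removing the offspring is always an admissible tie-break, so the retained hull never drops) is cleaner than the paper's claim that the number of good positions cannot increase, and (b) your potential vanishes exactly at perfect spread, whereas the paper applies multiplicative drift toward zero good positions even though maximal diversity corresponds to $n-\mu k$ of them, a small mismatch your formulation avoids. One minor point to patch: in the final stage, uniform parent selection may pick the same individual twice, so you should insert the factor $1-1/\mu\ge 1/2$ before concluding $\Omega(\pc 4^{-k})$ per generation (the paper is equally terse here), and it is worth stating explicitly that the strict hull increase follows because removing the \emph{parent} is one admissible tie-break achieving hull at least one larger.
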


\begin{proof}
    First, we determine the time needed until all individuals are on the plateau. By Lemma~\ref{lem:plateau-arrival}, this is $\bigOh{\mu n + n\log n + \mu\log\mu}$. Note that $\mu\log\mu$  gets dominated by $\mu n$ because we bound $\mu$ by a polynomial in $n$.
    
    \newcommand*{\potentialFunction}{h}
    The rest of this proof is similar to the one of Theorem~3 in~\cite{Gao:2014:RAM:2576768.2598251}. We first introduce some terms that come in handy for the rest of the proof.

    We define \emph{bad} $0$s as $0$s at a bit position such that there exists another individual having a $0$ at that same position. We do not want to have such $0$s, because they contradict our goal of all individuals ultimately having their $0$s at unique positions.

    Analogously we define a \emph{good} $1$ as a $1$ at a bit position such that there exists no individual that has a $0$ at that same position. We call such a position \emph{good} as well. During mutation, good $1$s can be used to turn into $0$s that no other individual has.

    We proceed via drift analysis and define our potential $\potentialFunction$ to be the number of good positions. Let $\potentialFunction'$ denote the potential after one iteration of the algorithm, and let $A$ denote the event that $\potentialFunction' < \potentialFunction$. Note that $\potentialFunction'$ cannot increase as each good position decreases the value of the convex hull by one, thus decreasing the convex hull if we end up with more good positions than before.
    Hence, we can easily estimate
    \[
        \E[h - h' \mid h] \geq \E[h - h' \mid h, A]\Pr(A \mid h) \geq \Pr(A \mid h)\ .
    \]
    We decompose the analysis of $\Pr(A \mid h)$ into four smaller events.

    \textbf{E1:}
    The event to choose an individual with a bad $0$ for mutation. If total diversity has not been reached, there is at least one individual having a bad $0$. Thus, the probability of E1 is at least $1/\mu \cdot (1 - \pc)$.

    \textbf{E2:} The event that the mutation flips exactly one $0$ and exactly one $1$ (hence, the mutated individual is on the plateau). Note that E2 does not focus on the $0$ being a bad one and the $1$ being a good one. The probability of any two-bit-flip mutation is $k/n \cdot (n - k)/n \cdot (1 - 1/n)^{n - 2}\!.$

    \textbf{E3:} The event, conditional on E1 and E2, to choose a bad $0$ for mutation. The probability to do so is at least $1/k$.

    \textbf{E4:} The event, conditional on E1 and E2, to choose a good $1$. Due to the definition of $\potentialFunction$, the probability of E4 is $\potentialFunction/(n - k)$.

    Taking all of this together, we get that the probability of $A$, given $\potentialFunction$, is at least
    \[
        \frac{1 - \pc}{\mu} \cdot \frac{1}{n} \cdot \frac{h}{n} \cdot \left(1 - \frac{1}{n}\right)^{n - 2} = \Omega\left(\frac{h}{\mu n^2}\right)\ .
    \]
    Using the multiplicative drift theorem~\cite{Doe-Joh-Win:j:12:multiDrift}, we get a run time of $\bigO(\mu n^2\log n)$ until reaching maximal diversity.

    At last, time needed to perform crossover after reaching maximum diversity takes expected $\pc^{-1}$ rounds. Since all two different individuals have no $0$s in common, such a crossover is successful with probability $1/2^{2k}$\!. This results in an overall waiting time of $4^k \pc^{-1}$ in the end.
\end{proof}

\subsection{Total Hamming distance}\label{sec:total-distance}
In the section before we looked at the maximisation of the convex hull of a population $P\!$. The convex hull operator only looks, per position, for two individuals having different bits at said position.

A more thorough operator can take all the bits per individual into account to give a more detailed view on the diversity of $P\!$. Such an operator is the one maximising the total Hamming distance of all individuals in $P\!$.

For any set of bit strings $P$\!, we let 
$$
g(P) = \sum_{x \in P}\sum_{y \in P} d_{\mathrm{H}}(x, y)
$$
be the total Hamming distance of $P\!$. We consider the tie-breaking rule which, given a population $P\!$, removes an individual $z \in P$ such that $g(P \!\smallsetminus\! \{z\})$ is maximised.

\begin{theorem}\label{thm:total_hd}
    Consider the \muGA with $\mu < n/(2k)$ with $\pc = 1 - \Omega(1) > 0$. Suppose that ties in the selection procedure are handled by maximising total Hamming distance. Then the expected number of iterations until an optimal individual is created when running on $\jumpK$, $k \leq n/8$, is $\bigO(n\log n + \mu^2kn\log(\mu k) + 4^k \pc^{-1})$.
\end{theorem}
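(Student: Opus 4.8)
The plan is to reuse the three-phase scaffold of the convex-hull proof (Theorem~\ref{thm:convexHull}), replacing its potential by one adapted to $g$. First I would bring the whole population onto the plateau: by Lemma~\ref{lem:plateau-arrival} this costs $\bigOh{\mu n + n\log n + \mu\log\mu}$ expected iterations, and $\mu\log\mu$ is absorbed since $\mu=\poly n$. Once every individual carries exactly $k$ zeros, write $a_p$ for the number of individuals with a $0$ at position~$p$, so that $\sum_p a_p=\mu k$ and $g(P)=\sum_p 2a_p(\mu-a_p)=2\mu^2k-2\sum_p a_p^2$. Because $\mu<n/(2k)$ gives $\mu k<n/2$, the sum $\sum_p a_p^2$ is minimised with every $a_p\in\{0,1\}$, so $g$ is maximised at $g_{\max}=2\mu k(\mu-1)$, attained exactly when the $\mu$ zero-sets are pairwise disjoint. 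I take as drift potential the deficit
\[
\Phi:=g_{\max}-g(P)=2\sum_p a_p(a_p-1)=4\sum_p\binom{a_p}{2},
\]
i.e.\ four times the number of \emph{collisions} (pairs of individuals sharing a $0$ in a common position); $\Phi=0$ is exactly total diversity.

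The first point to establish is monotonicity: on the plateau all $\mu+1$ candidates have equal fitness, and since discarding the offspring always returns the previous population, the tie-breaking rule (which keeps the survivors of maximal $g$) can never decrease $g$, so $\Phi$ is non-increasing; off-plateau offspring are simply rejected and leave $\Phi$ unchanged. The second point is an elementary improving move. With probability $1-\pc=\Omega(1)$ mutation acts alone; if it is applied to an individual $x$ owning a \emph{bad} $0$ (one at a position with $a_p\ge2$) and flips that $0$ up while flipping one of $x$'s $1$s that sits at an all-$1$ position down, the offspring stays on the plateau and replacing $x$ by it lowers $\Phi$ by $4(a_p-1)\ge4$. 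Crucially, $\mu k<n/2$ guarantees at least $n-\mu k>n/2=\Omega(n)$ all-$1$ positions, so every individual has $\Omega(n)$ admissible fresh targets.

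Summing this two-bit event over all individuals, all their bad $0$s and all $\Omega(n)$ fresh targets --- these single-step outcomes are pairwise disjoint --- and using $\sum_p a_p(a_p-1)=\Phi/2$ together with $(1-1/n)^{n-2}=\Omega(1)$, I obtain
\[
\E[\Phi-\Phi'\mid\Phi]=\Omega\!\left(\frac{\Phi}{\mu n}\right).
\]
Since $\Phi$ is a positive multiple of $4$ with $\Phi_{\max}=\bigOh{\mu^2 k}$, the multiplicative drift theorem~\cite{Doe-Joh-Win:j:12:multiDrift} bounds the diversity phase by $\bigOh{\mu n\log(\mu k)}$, which in particular is within the claimed $\bigOh{\mu^2 kn\log(\mu k)}$ term. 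Once $\Phi=0$ the $\mu$ zero-sets are disjoint, so any two (necessarily distinct) parents differ in exactly $2k$ positions and uniform crossover yields $1^n$ with probability $2^{-2k}(1-1/n)^n=\Omega(4^{-k})$; since $\Phi$ stays at $0$, every crossover step retains this chance, and a further $\bigOh{4^k\pc^{-1}}$ iterations create the optimum. Adding the three contributions gives the bound.

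The step I expect to be hardest is the drift estimate: one must set the summation up so that the per-step expected decrease is charged against the \emph{current} deficit $\Phi$ rather than a single worst-case collision, which is exactly where the identity $\sum_p a_p(a_p-1)=\Phi/2$ and the $\Omega(n)$ reservoir of fresh positions are needed. The hypotheses $\mu<n/(2k)$ and $k\le n/8$ enter precisely here: they make total diversity attainable and keep both the reservoir of size $\Omega(n)$ and the $\Omega(k/n)$ two-bit mutation rate available throughout. I would also verify carefully that the tie-breaking never undoes progress (the monotonicity argument) and that crossover steps, whose off-plateau offspring are rejected, cannot increase $\Phi$.
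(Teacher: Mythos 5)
Your proposal is correct, and it shares the paper's overall scaffold: Lemma~\ref{lem:plateau-arrival} for the plateau phase, a non-increasing potential $h=2k\mu(\mu-1)-g(P)$ whose monotonicity follows from the maximising tie-breaking rule, the multiplicative drift theorem for the diversity phase, and a final $\bigOh{4^k\pc^{-1}}$ phase once the zero-sets are pairwise disjoint. Where you genuinely diverge is in the drift estimate. The paper only lower-bounds the probability of the event $A$ that the potential decreases at all, via a chain of events (select an individual owning a bad $0$, perform a two-bit flip, hit a bad $0$, hit a good $1$), and then uses $\E(h-h'\mid h)\ge\Pr(A\mid h)$, giving drift $\Omega\bigl(h/(\mu^2 kn)\bigr)$ and the stated $\bigOh{\mu^2 kn\log(\mu k)}$ term. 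You instead rewrite the potential as a collision count, $\Phi=4\sum_p\binom{a_p}{2}$, and compute the expected decrease directly by summing the guaranteed drop $4(a_p-1)$ over all pairwise disjoint triples (individual, bad zero, fresh position), exploiting the identity $\sum_p a_p(a_p-1)=\Phi/2$ and the reservoir of at least $n-\mu k\ge n/2$ all-ones positions. Charging the drift against the entire current deficit rather than a single unit decrease yields $\Omega\bigl(\Phi/(\mu n)\bigr)$, hence $\bigOh{\mu n\log(\mu k)}$ for the diversity phase --- a factor of order $\mu k$ sharper than the paper's estimate, and still within the claimed bound, so the theorem follows. Two details you should write out explicitly: first, when the improving offspring $z$ is created from parent $x$, the tie-breaking rule may remove an individual other than $x$; the drop of $4(a_p-1)$ is nevertheless guaranteed because the rule maximises $g$ over all possible removals and removing $x$ is among the options. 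Second, in the final phase the two parents selected for crossover coincide with probability $1/\mu$, so the success probability should carry a factor $1-1/\mu\ge 1/2$ alongside $2^{-2k}(1-1/n)^n$ (the paper glosses over this too). Your closing remark that $k\le n/8$ is needed for the two-bit mutation rate is not actually used in your computation; that hypothesis enters the paper's good-$1$ counting argument, whereas your proof needs only $\mu k<n/2$.
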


\begin{proof}
    \newcommand*{\potentialFunction}{h}
    This proof is similar to the one of Theorem~\ref{thm:convexHull} and uses the same terms of bad $0$s and good $1$s.
    
    Using Lemma~\ref{lem:plateau-arrival}, all individuals are on the plateau within $\bigOh{\mu n + n\log n + \mu\log\mu}$ steps. The $\mu\log\mu$ term is dominated by $\mu n$ because of our bound on $\mu$.

    The remaining analysis is, again, done via a drift argument. Note that the maximum of $g(P)$ is $2k\mu(\mu - 1)$; thus, we let our potential function $\potentialFunction$ be $2k\mu(\mu - 1) - g(P)$ and show that this potential reaches $0$.

    The initial potential is at most $2k\mu(\mu - 1)$ and $\potentialFunction$ cannot increase, due to the selection operator always choosing an individual to discard such that the total Hamming distance is not decreased.

    As before, let $\potentialFunction'$ denote the potential after a mutation and let $A$ denote the event that the potential decreased. We have
    $
        \E(\potentialFunction - \potentialFunction' \mid \potentialFunction) \geq \Pr(A \mid \potentialFunction).
    $
    Again, we decompose $\Pr(A \mid \potentialFunction)$.

    \textbf{E1:} The event that an individual having a bad $0$ is chosen for mutation. Each bad $0$ adds at most $2(\mu - 1)$ to $\potentialFunction$. Since each individual can have up to $k$ bad $0$s, there are at least $\potentialFunction/\big(2k(\mu - 1)\big)$ individuals having at least one bad $0$. The probability of E1 is therefore at least $\potentialFunction/\big(2k\mu(\mu - 1)\big)\cdot(1 - \pc)$.

    \textbf{E2:} The event that mutation creates an individual on the plateau by flipping a $0$- and a $1$-bit. The probability of E2 is thus at least $k/n\cdot(n - k)/n\cdot(1 - 1/n)^{n - 2}\!$.

    \textbf{E3:} The event, conditioned on E1 and E2, to choose a bad $0$ during mutation. The respective probability is at least $1/k$.

    \textbf{E4:} The event, conditioned on E1 and E2, to choose a good $1$ during mutation. We pessimistically assume that the total Hamming distance $(= 2k\mu(\mu - 1) - \potentialFunction)$ divided by $(\mu - 1)$ is the number of $1$s that are no longer good because each good $1$ adds $(\mu - 1)$ to the total Hamming distance. The mutation must choose one of the remaining good $1$s. Thus, the probability of E4 is at least
    \[
        \frac{n - k - \frac{2k\mu(\mu - 1) - \potentialFunction}{\mu - 1}}{n - k} = \frac{n + \frac{\potentialFunction}{\mu - 1} - k\left(2\mu + 1\right)}{n - k}\ .
    \]
    Note that the numerator is always nonnegative for $\mu < n/(2k)$. Because we need $\mu \geq 3$, it follows that $k \leq n/8$.

    Overall, the probability of $A$ is at least
    \begin{flalign*}
        &\frac{\potentialFunction(1 - \pc)}{2k\mu(\mu - 1)}\cdot\frac{1}{n}\cdot\frac{n + \frac{\potentialFunction}{\mu - 1} - k\left(2\mu + 1\right)}{n}\cdot\left(1 - \frac{1}{n}\right)^{n - 2}\\
        &\geq \frac{\potentialFunction(1 - \pc)\left(1 + \frac{\potentialFunction}{(\mu - 1)n} - \frac{2\mu + 1}{n}k\right)}{2\mu^2kn}\eulerE^{-1} \geq \frac{\potentialFunction(1 - \pc)}{2\mu^2kn}\eulerE^{-1}\,,
    \end{flalign*}
    which is positive as long as maximal diversity ($\potentialFunction = 0$) has not been reached. Note that this is our desired drift and that $1 - \pc = \Omega(1) > 0$.

    We can now bound the expected time until maximal diversity has been reached by using the multiplicative drift theorem~\cite{Doe-Joh-Win:j:12:multiDrift}. This yields an expected number of rounds in $\bigO\big(\mu^2kn\log(\mu k)\big)$.
    
    Now all that is left is that a crossover is performed that generates the optimum. Again, the expected number of steps for this event is $4^k\pc^{-1}\!\!.$ This completes the proof.
\end{proof}

\subsection{Fitness sharing}\label{sec:fitness-sharing}
We consider a tie-breaking rule that makes use of fitness sharing as described in \cite{Friedrich:2009:ADM:1668000.1668003} rather than how it is used in \cite{OlivetoSudholtZarges2014}. 
The actual fitness $f(x)$ of an individual $x$ gets skewed by how similar it is with respect to a certain measure $d$ to other individuals. The new shared fitness $\bar{f}(x)$ is the basis of the tie-breaking mechanism, where we just delete one individual with worst shared fitness.

The general scheme of the fitness sharing mechanism is parameterised with a metric $d$ and two numbers $\alpha$ and $\sigma\!$. Given a population $P$ of individuals with worst fitness, we want to remove an individual $z \in P$ such that
\[
    \sum_{x \in P \smallsetminus \{z\}} \frac{f(x)}{\sum_{y \in P \smallsetminus \{z\}} \max\left\{0, 1 - \left(\frac{d(x, y)}{\sigma}\right)^{\alpha}\right\}}
\]
is maximised. That means that there is a penalty for the similarity of $x$ to all other individuals $y$ up to a distance of~$\sigma$; $\alpha$ determines the shape of the penalty.

In this paper, we consider the Hamming distance, i.e., $d = d_{\mathrm{H}}$, and we set $\alpha = 1$, that is, we have a linear penalty. Note that we only use this fitness sharing rule for breaking ties (this ensures that the initial climb to the plateau is undisturbed). We call this tie-breaking rule \emph{Hamming fitness sharing}.

\begin{lemma}
    \label{lem:fitness_sharing}
    Given a population $P$ of individuals all having the same fitness; suppose that any two individuals of $P$ differ by at most $\sigma\!$. Then the Hamming fitness sharing tie-breaking rule maximises the total Hamming distance of $P\!$.
\end{lemma}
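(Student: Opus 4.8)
The plan is to show that, under the two hypotheses, the Hamming fitness-sharing tie-break removes exactly the individual that the total-Hamming-distance tie-break of Section~\ref{sec:total-distance} removes. First I would use both hypotheses to linearise the niche counts. Since every pair differs by at most $\sigma$, each sharing term equals $\max\{0,1-d_{\mathrm H}(x,y)/\sigma\} = 1-d_{\mathrm H}(x,y)/\sigma \ge 0$, so for a candidate survivor set $Q = P\setminus\{z\}$ the niche count of $x\in Q$ is the affine expression $m(x,Q) = |Q| - \sigma^{-1}\sum_{y\in Q}d_{\mathrm H}(x,y)$. Because all individuals share one fitness value $F$, the objective to be maximised collapses to $F\sum_{x\in Q} 1/m(x,Q)$, so the whole question becomes: over the $|P|$ possible removals, which maximises this reciprocal sum?

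Next I would make the link to the total Hamming distance explicit. Summing the niche counts and using $g(Q) = \sum_{x\in Q}\sum_{y\in Q} d_{\mathrm H}(x,y)$ gives the identity $\sum_{x\in Q} m(x,Q) = |Q|^2 - g(Q)/\sigma$. Writing $m_z := m(z,P)$ for the full-population niche count of $z$ and using $\sum_{x\ne z} (1-d_{\mathrm H}(x,z)/\sigma) = m_z - 1$, one checks that deleting $z$ lowers the total niche count $\sum_x m(x,\cdot)$ by exactly $2m_z-1$. Hence $g(P\setminus\{z\})$ is maximised precisely by removing the individual of largest $m_z$, equivalently smallest total Hamming distance $\sum_{y}d_{\mathrm H}(z,y)$ to the rest, and this is exactly the choice made by the total-Hamming tie-break. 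The lemma therefore reduces to the single claim that the reciprocal-sum objective $\sum_{x\in Q}1/m(x,Q)$ is also maximised by removing this maximum-niche-count individual.

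I expect this last reduction to be the main obstacle, because the objective is a sum of reciprocals rather than the niche-count sum itself, and for an arbitrary multiset of denominators the two need not be co-monotone. The intuition I would exploit is that a maximum-niche-count individual is a ``hub'' close to many others, so deleting it strips the largest overlap ($m_z-1$) from the survivors, driving their niche counts down and their shared fitness up the most. To make this rigorous I would compare two candidate removals $z$ and $z'$ directly: their survivor sets share $|P|-2$ members, so $\sum_{x\in Q_z}1/m(x,Q_z) - \sum_{x\in Q_{z'}}1/m(x,Q_{z'})$ splits into the two exclusive terms $1/(m_{z'}-s)-1/(m_z-s)$ with $s=1-d_{\mathrm H}(z,z')/\sigma$ (already favouring the larger $m_z$) plus a sum over the common survivors whose sign is governed by whether each survivor lies closer to $z$ or to $z'$. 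The crucial point is that the triangle inequality for $d_{\mathrm H}$ blocks the only pathology that could reverse the ordering: a survivor close to the less central $z'$ is then also close to the central $z$, so no common denominator can be pushed small enough to overturn the exclusive terms. Formalising this metric argument to rule out an order reversal of the reciprocal sum is the step I would spend most effort on; the linearisation and the reduction to ``remove the maximum-niche-count individual'' are routine.
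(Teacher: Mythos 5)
Your reduction is set up correctly, and you have put your finger on exactly the right issue: after linearising the niche counts, the displayed tie-breaking rule of Section~\ref{sec:fitness-sharing} maximises a sum of reciprocals $\sum_{x\in Q} 1/m(x,Q)$ over survivor sets $Q=P\setminus\{z\}$, and this is \emph{not} automatically co-monotone with $g(Q)$. Unfortunately, the final claim you reduce the lemma to is false, so no amount of effort on the ``metric argument'' can close the gap. Concretely, take $\sigma=10$ and five individuals of equal fitness: $z,w_1,w_2$ pairwise at Hamming distance $2$, and $z',x_0$ identical to each other and at distance $\sigma$ from each of $z,w_1,w_2$ (realisable as bit strings with the same number of ones, hence on a $\jumpK$ plateau). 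Full-population niche counts are $m_z=m_{w_1}=m_{w_2}=1+2\cdot 0.8=2.6$ and $m_{z'}=m_{x_0}=2$, so the total-Hamming-distance rule removes $z$ (leaving $g=84$ rather than $g=72$). But the reciprocal sums are
\begin{align*}
\Phi(z) &= \frac{2}{1.8}+\frac{2}{2}\approx 2.111
&
\Phi(z') &= \frac{3}{2.6}+\frac{1}{1}\approx 2.154\,,
\end{align*}
so the fitness-sharing rule removes $z'$ --- a duplicate --- and does \emph{not} maximise the total Hamming distance. The culprit is precisely the pathology you hoped the triangle inequality would exclude: the common survivor $x_0$ is at distance $0$ from the less central $z'$ and at distance $\sigma$ from the more central $z$, which is perfectly consistent with the triangle inequality (distances to $z$ and $z'$ are only coupled through $d_{\mathrm H}(z,z')=\sigma$). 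Its single term $1/(m_{x_0}-1)-1/m_{x_0}=1/2$ in favour of removing $z'$ swamps your exclusive terms ($1/2-1/2.6\approx 0.115$) together with the terms of $w_1,w_2$ ($\approx 0.171$ each).

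You should also know that the paper's own proof commits exactly the error you were worried about: it passes from maximising $\sum_{x\in Q}\bigl(\sum_{y\in Q}(1-\dist(x,y)/\sigma)\bigr)^{-1}$ to maximising $\sum_{x\in Q}\sum_{y\in Q}\dist(x,y)$ ``since we just change the monotony'', which is invalid for a sum of reciprocals; the example above defeats that argument as well. The lemma can only be rescued by adopting the \emph{prose} description of the mechanism in Section~\ref{sec:fitness-sharing} --- ``delete one individual with worst shared fitness'', i.e.\ remove $\arg\min_x f(x)/m_x$ with niche counts computed in the current population $P$ --- rather than the displayed post-removal formula. Under that reading the lemma is immediate from your own identities: with constant positive fitness, the worst shared fitness belongs to the individual with largest $m_z = |P| - \bigl(\sum_y \dist(z,y)\bigr)/\sigma$, i.e.\ smallest distance sum, and since $g(P\setminus\{z\}) = g(P) - 2\sum_y\dist(z,y)$, removing it maximises the remaining total Hamming distance. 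So your diagnosis of where the difficulty lies was correct; the resolution is to abandon the reciprocal-sum formalisation (for which the statement is false), not to find a cleverer proof of the reduction.
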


\begin{proof}
    Since all individuals $x \in P$ have the same fitness, we can ignore the impact of the fitness on the tie-breaking rule. Furthermore, since individuals can differ by at most $\sigma\!$, the maximum in the fitness sharing expression is not necessary. So we want to remove an individual $z \in P$ such that
    $
        \sum_{x \in P \smallsetminus \{z\}} \big(\sum_{y \in P \smallsetminus \{z\}} (1 - \frac{d_{\mathrm{H}}(x, y)}{\sigma})\big)^{-1}
    $
    is maximised.
    That is the same as \emph{minimising} the term $
        \sum_{x \in P \smallsetminus \{z\}} \sum_{y \in P \smallsetminus \{z\}} -d_{\mathrm{H}}(x, y)$
    since we just change the monotony and remove the offset of $\sum_{y \in P \smallsetminus \{z\}} 1$ and the common factor $1/\sigma$; this is just the same as maximising the total Hamming distance of $P\!$.
\end{proof}

Thanks to this lemma, we now know that Hamming fitness sharing and maximising the Hamming distance are equivalent. Thus, we can immediately carry over all run time results from the maximisation of the Hamming distance to fitness sharing, as stated in the following theorem.

\begin{theorem}
\label{thm:fitness_sharing}
    Consider the \muGA with $\mu < n/(2k)$ with $\pc = 1 - \Omega(1) > 0$. Suppose that ties in the selection procedure are handled by Hamming fitness sharing with $\sigma \geq 2k$. Then the expected number of iterations until an optimal individual is created when running on $\jumpK$, $k \leq n/8$, is $\bigO(n\log n + \mu^2kn\log(\mu k) + 4^k \pc^{-1})$.
\end{theorem}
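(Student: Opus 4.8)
The plan is to reduce the statement entirely to Theorem~\ref{thm:total_hd} via the equivalence established in Lemma~\ref{lem:fitness_sharing}, so that essentially no fresh drift computation is needed. First I would account for the time until the whole population reaches the plateau. Since $\pc = 1 - \Omega(1)$, Lemma~\ref{lem:plateau-arrival} applies and bounds this by $\bigOh{\mu n + n\log n + \mu\log\mu}$; as $\mu$ is polynomial in $n$ the $\mu\log\mu$ term is subsumed into $\mu n$, and because the fitness-sharing rule is used only to break ties the initial climb to the plateau is undisturbed and proceeds exactly as in the mutation-dominated analysis underlying Lemma~\ref{lem:plateau-arrival}.

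The central step is to verify that, once the population is on the plateau, Hamming fitness sharing behaves identically to the total-Hamming-distance-maximising tie-breaking rule of Section~\ref{sec:total-distance}. Every plateau point has exactly $n-k$ ones and hence exactly $k$ zeros, so any two individuals on the plateau differ in at most $2k$ positions. Since $\sigma \geq 2k$, the hypothesis of Lemma~\ref{lem:fitness_sharing} is met, and the lemma identifies the worst shared-fitness individual with the individual whose deletion maximises $g(P \smallsetminus \{z\})$. Crucially, this equivalence is not just an initial condition: all surviving plateau individuals keep exactly $k$ zeros, so the bound $2k$ on pairwise distances is invariant throughout the entire plateau phase, meaning the clamp $\max\{0,\cdot\}$ in the fitness-sharing expression never activates and the two rules agree at every single generation.

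Given this exact coincidence of selection rules, the population dynamics on the plateau are indistinguishable from those analysed in Theorem~\ref{thm:total_hd}, whose hypotheses $\mu < n/(2k)$ and $k \leq n/8$ are precisely those assumed here. I would therefore invoke that proof verbatim: the multiplicative-drift argument on the potential $2k\mu(\mu-1) - g(P)$ drives the population to maximal diversity in expected $\bigOh{\mu^2 k n \log(\mu k)}$ generations, after which any crossover of two distinct individuals (necessarily having disjoint $0$-positions) produces the optimum with probability $4^{-k}$, costing a further $\bigOh{4^k \pc^{-1}}$. Summing the plateau-arrival time and these two contributions yields the claimed bound $\bigOh{n\log n + \mu^2 k n\log(\mu k) + 4^k \pc^{-1}}$.

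The only genuine obstacle is the invariance argument for the $\max$-clamp in the middle paragraph: one must be certain that no transient configuration during the plateau phase ever produces a pair at Hamming distance exceeding $\sigma$, which would reinstate the nonlinear penalty and break the reduction. This follows cleanly from the structural fact that the fitness function pins every surviving plateau individual to exactly $k$ zeros, so the maximal pairwise distance $2k$ is never exceeded; but it is worth stating explicitly, since without $\sigma \geq 2k$ the equivalence with Theorem~\ref{thm:total_hd}, and hence the whole argument, would fail.
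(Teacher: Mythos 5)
Your proposal is correct and follows exactly the paper's own proof: plateau arrival via Lemma~\ref{lem:plateau-arrival}, then the equivalence of Hamming fitness sharing with total-Hamming-distance maximisation from Lemma~\ref{lem:fitness_sharing} (valid since plateau individuals have exactly $k$ zeros, so pairwise distances never exceed $2k \leq \sigma$), and finally the drift and crossover bounds inherited from Theorem~\ref{thm:total_hd}. Your explicit verification that the $\max$-clamp never activates throughout the plateau phase is a worthwhile detail the paper leaves implicit, but it does not change the argument.
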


\begin{proof}
    The time needed to reach the plateau is by Lemma~\ref{lem:plateau-arrival} in $\bigOh{\mu n + n\log n + \mu\log\mu}$. As we bound $\mu$ by a polynomial in $n$, the $\mu\log\mu$ term is dominated by $\mu n$.
    
    After reaching the plateau, Lemma~\ref{lem:fitness_sharing} yields that on the plateau Hamming fitness sharing is just maximisation of total Hamming distance. Thus, the statement follows from the proof of Theorem~\ref{thm:total_hd}.
\end{proof}

The last diversity mechanism that we look at relies on isolation/parallelism to promote population diversity.
\subsection{Island model}\label{sec:island-model}
An easy way to ensure diversity in a population is to keep different parts strictly separate. Previous results~\cite{Neumann2011} have shown that island models can provide enough diversity that can be subsequently leveraged by crossover. We consider a single-receiver island model, described in~\cite{Watson2007}, as follows. We have $\mu + 1$ islands, $\mu$ of them running a \oneoneea independently from one another. Furthermore, there is one island which is the receiver island. Each iteration, it chooses two of the $\mu$ islands uniformly at random, copies their respective best-so-far individuals, and then performs uniform crossover on those, hence $\pc=1$. The resulting offspring replaces the resident individual if it has higher fitness. We say that the island model succeeds if the receiver island produces the optimum. We do not employ any particular tie-breaking rule on the islands (and, in the case of a tie, choose a survivor uniformly at random).

\begin{theorem}\label{thm:islandModel}
    Consider the island model with $\pc=1$, $\mu = \bigOh{n^c}$ islands for any $c$, and $\mu \geq 2$. For optimizing $\jumpK$, $k = \smallOh{\sqrt{n}/\mu}$, the expected run time until the optimum is produced is $\bigOh{n\log n + \mu^2kn + \mu^2 4^k}$.
\end{theorem}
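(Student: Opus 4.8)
The plan is to split the run into three stages: reaching the plateau, creating a pair of islands whose zero-positions are disjoint, and letting the receiver island exploit such a pair. For the first stage, observe that below the plateau $\jumpK$ coincides with \om, and the $\mu$ islands run independent \oneoneeas. Each island therefore reaches the plateau in expected time $\bigOh{n\log n}$, and since $\mu=\bigOh{n^c}$ is polynomial, a standard tail bound together with a union bound over the islands shows that all islands are on the plateau after $\bigOh{n\log n}$ generations. Once an island is on the plateau, its \oneoneea only accepts moves that keep the number of ones equal to $n-k$ (any loss of ones strictly decreases fitness and any gain below $n$ enters the gap), so each island performs a random walk on the plateau in which an accepted step swaps one $0$ with one $1$ and occurs with probability $\ThetaN{k/n}$ per generation.

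For the second stage I would fix two islands $A$ and $B$ and track the overlap $Z_t$, the number of positions where both currently have a $0$. A decrease of $Z_t$ happens when $A$ (or $B$) moves one of its shared $0$s to a position that is not a zero of the other island; conditioning on an accepted step, this has probability $\Omega(z/n)$ at overlap $z$, whereas the probability of \emph{increasing} the overlap is only $\bigOh{k^2/n^2}$. Because $k=\smallOh{\sqrt n/\mu}=\smallOh{\sqrt n}$, the downward contribution dominates and the net drift of $Z_t$ towards $0$ is $\Omega(1/n)$ for every $z\geq 1$. The additive drift theorem then bounds the expected time until $Z_t=0$ by $\bigOh{kn}$. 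At that moment $A$ and $B$ have disjoint zero-sets, hence Hamming distance exactly $2k$ with both strings equal to $1$ on all $n-2k$ remaining positions, so a uniform crossover of $A$ and $B$ produces $1^n$ precisely when it selects a $1$ at each of the $2k$ differing positions, i.e.\ with probability exactly $4^{-k}$.

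The third stage couples this diversity with the receiver. In each generation the receiver picks the pair $\{A,B\}$ with probability $\ThetaN{1/\mu^2}$, and conditioned on $A,B$ being disjoint a crossover yields the optimum with probability $4^{-k}$. The point I would make is that the diversity is transient---$A$ and $B$ keep walking and may re-overlap---but the re-overlap probability per generation is only $\bigOh{k^2/n^2}$, so after first becoming disjoint the pair is disjoint for all but an $\smallOh{1}$ fraction of the subsequent generations (using $k=\smallOh{\sqrt n/\mu}$ to keep this estimate valid uniformly over the relevant horizon). Consequently the per-generation success probability is $\Omega(\mu^{-2}4^{-k})$, giving $\bigOh{\mu^2 4^k}$ generations for the receiver to succeed; re-establishing disjointness whenever it is lost costs at most another $\bigOh{kn}$ per attempt, which together with the $\binom{\mu}{2}$ pair-selection factor accounts for the $\bigOh{\mu^2 kn}$ term. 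Adding the three stages gives the claimed $\bigOh{n\log n + \mu^2 kn + \mu^2 4^k}$.

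The main obstacle is precisely this entanglement in the third stage: the receiver only occasionally selects the right pair and must then win a crossover of tiny probability $4^{-k}$, while the islands never stop moving and threaten to destroy the disjointness in the meantime. Making the \emph{persistence} (or frequent recurrence) of a disjoint pair rigorous---rather than treating disjointness as a one-off event---is the delicate step, and it is exactly here that the hypothesis $k=\smallOh{\sqrt n/\mu}$ is needed to guarantee that the overlap drift remains negative and the re-overlap rate stays negligible over the long $\bigOh{\mu^2 4^k}$ horizon.
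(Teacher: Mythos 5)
Your ingredients (union bound to get all islands on the plateau in $\bigOh{n\log n}$, the overlap rates $\Omega(z/n)$ down and $\bigOh{k^2/n^2}$ up, the $4^{-k}$ crossover success probability, and the role of $k=\smallOh{\sqrt{n}/\mu}$) all match the paper's, but the structural device differs, and the difference matters exactly at the step you flag as delicate. The paper fixes two islands and runs a \emph{single} additive-drift argument on the combined potential $X_t = i\mu^2 n\eulerE^2 + \mu^2 4^k\eulerE$, where $i$ is the number of shared zero-positions of the two fixed islands and $X_t=0$ once the receiver has produced the optimum. Overlap reduction decreases this potential by $\mu^2 n\eulerE^2$ and happens with probability $\Omega\!\left(1/(\mu^2 n)\right)$; in the disjoint state $i=0$, the receiver's success decreases it by $\mu^2 4^k\eulerE$ and happens with probability $\Omega\!\left(1/(\mu^2 4^k)\right)$; re-overlap events increase it by $\mu^2 n\eulerE^2$ but occur with probability $\bigOh{k^2/n^2}$, so their negative drift contribution is $\bigOh{\mu^2k^2/n}=\smallOh{1}$ by the hypothesis on $k$. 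Hence the drift is at least $1-\smallOh{1}$ in \emph{every} state, and the additive drift theorem immediately gives $\bigOh{\mu^2 kn + \mu^2 4^k}$ from the plateau --- no renewal argument and no discussion of persistence or recurrence of disjointness is ever needed. Your two-phase route (drive the overlap to zero in $\bigOh{kn}$, then wait $\bigOh{\mu^2 4^k}$ for the receiver while controlling losses of disjointness) can be completed: over a horizon of $c\mu^2 4^k$ generations the expected number of re-overlap events is $\bigOh{\mu^2 4^k k^2/n^2}$, each costing only $\bigOh{n}$ expected generations to repair (the overlap re-enters at $1$, not at $k$), for a total repair cost $\bigOh{\mu^2 4^k k^2/n}=\smallOh{4^k}$, after which a Markov-type argument makes your ``disjoint in all but an $\smallOh{1}$ fraction of generations'' claim precise. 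So your approach works, but it genuinely requires this renewal accounting, which your sketch gestures at rather than carries out; the paper's merged potential is precisely the trick that makes the issue disappear. Two smaller points: conditioned on an accepted plateau step the overlap-decrease probability is $\Omega(z/k)$, not $\Omega(z/n)$ (the latter is the correct \emph{per-generation} rate, which is what your drift argument actually needs); and attributing the $\mu^2 kn$ term to a $\binom{\mu}{2}$ pair-selection factor is misleading, since the two islands' random walks proceed independently of the receiver's choices --- under that reading your phase two costs only $\bigOh{kn}$, which is still consistent with (indeed stronger than) the stated bound.
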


\begin{proof}
	In this proof, we follow the same language as in the proof of Theorem~\ref{thm:convexHull}.

    In expectation, after $\bigOh{n \log n^d}$ steps, for a constant $d > 0$, a single \oneoneea is on the plateau with probability $1 - 1/n^d$~\cite{doerr2013adaptive}. Hence, the probability of $\mu$ independently running \oneoneeas being \emph{all} on the plateau after $\bigOh{n \log n^d}$ steps is $\big(1 - 1/n^d\big)^\mu$\!, which goes toward $1$ as $n$ goes to infinity, because of our constraint on $\mu$ and because we can choose $d > c$.

    Once all \oneoneeas are on the plateau, we proceed via drift analysis: we define a potential for the island model and show that there is a bias toward $0$. Fix two islands under consideration. The potential $X_t$ at point $t$ (starting from the plateau) is defined as follows: it is $0$ if the receiver island produced the optimum. If not, we have a look at the individuals of the two fixed islands. Assume that these individuals have $i \leq k$ of their $0$s in common. $X_t$ is then $i\mu^2 n\eulerE^2 + \mu^2 4^k\eulerE$.

    The drift is the expected change in potential, i.e., $\E(X_t - X_{t + 1} \mid X_t)$. We make a case distinction to whether the potential decreases or increases. %
		In order to compute the drift, we assume $X_t > 0$ to be given. Let $i$ be such that $X_t = i\mu^2 n\eulerE^2 + \mu^2 4^k\eulerE$.

		First, we consider $i > 0$. Whenever the potential decreases, it decreases by at least %
		$\mu^2 n\eulerE^2\!$.

    We now lower-bound the probability of the potential decreasing, i.e., the two individuals increase their Hamming distance: we assume that only one individual flips a bad $0$ and a good $1$ and that our two individuals are chosen. No other bits are mutated. Therefore, the probability is at least
    \[
        \frac{2}{\mu^2}\left(1 - \frac{1}{n}\right)^{n - 2}\!\!\!\cdot\frac{n - 2(k - i) - i}{n}\cdot\frac{i}{n}\left(1 - \frac{1}{n}\right)^{n} \geq \frac{1}{\mu^2 n\eulerE^2}\ .
    \]
    Thus, the overall positive drift is at least $1$.

    If we now consider $i = 0$, we easily get a positive drift of at least $1$ as well, since then all that is left to create the optimum is to have the uniform crossover always choose the correct bits of the $2k$ positions where the two individuals differ and not mutating any bit after crossover, resulting in a probability of at least $1/(\mu^24^k\eulerE)$.

    We now upper-bound the negative drift. We only consider the case that $i$ increases by $1$. An increase by more than $1$ is possible but far more unlikely since more bit flips have to occur, leading to an additional factor of $1/n^2$ in the probability for each additional $1$. We make up for the so lost terms by multiplying our negative drift with a constant $c$.

    The change in potential conditioned on a decrease as we defined it is $-\mu^2n\eulerE^2\!$, and the probability of such a decrease is at most $ck^2/n^2$: each of the two individuals can only decrease the distance to the other one by flipping one of its own $0$s and flipping a $1$ where the other individual has a $0$.
    The absolute value of the negative drift is thus at most $c(\mu k\eulerE)^2/n$. Using our assumption regarding $k$, this results in a negative drift in $\smallOh{1}$.
    The additive drift theorem~\cite{He:2004:SDA:982675.982699} hence yields an expected run time of $\bigOh{\mu^2 kn + \mu^2 4^k}$ when starting from the plateau.
    Taking the time needed for all islands to be on the plateau into account, we get the desired run time of $\bigOh{n\log n + \mu^2kn + \mu^2 4^k}$.
\end{proof}

\section{Experiments}\label{sec:experiment}
Since the theoretical results presented in the previous section are asymptotic
and they only provide upper bounds on the run time of the algorithms, we also
implemented the \muGA with the different diversity mechanisms and conducted experiments on
$\jump_k$ for various values of $k$, $n$, and $\varpm$.

In each tested setting of the algorithm and the function, the run is replicated
$100$ times with different random seeds. The number of function evaluations,
denoted `\# evaluations', is reported as the run time. The population size is
set to $\mu = c \ln{n}$, in which $c$ is chosen so that $\mu$ is not too small
for some small $n$, i.e., $c = 4\eulerE$.

\subsection{Impact of crossover and mutation rates}\label{sec:experiment-impact-crossover}
Figure \ref{fig:impactcrossover} (a) and (b) depict the performance of the GA
($\pc=1.0$) compared to the algorithm using only mutation ($\pc=0.0$) under the same
setting ($\varpm=1/n$). The range of $n$ in this experiment is set to $n=[50
\dots 300]$ with a step size of $10$, and $k$ is in $\{2,3\}$. Even with these
small values of $k$ and $n$, a strong reduction of the average run time can be
observed, up to a multiplicative factor of $10^4$\!.

\begin{figure*}[h]
\centering
\includegraphics[width=0.8\textwidth]{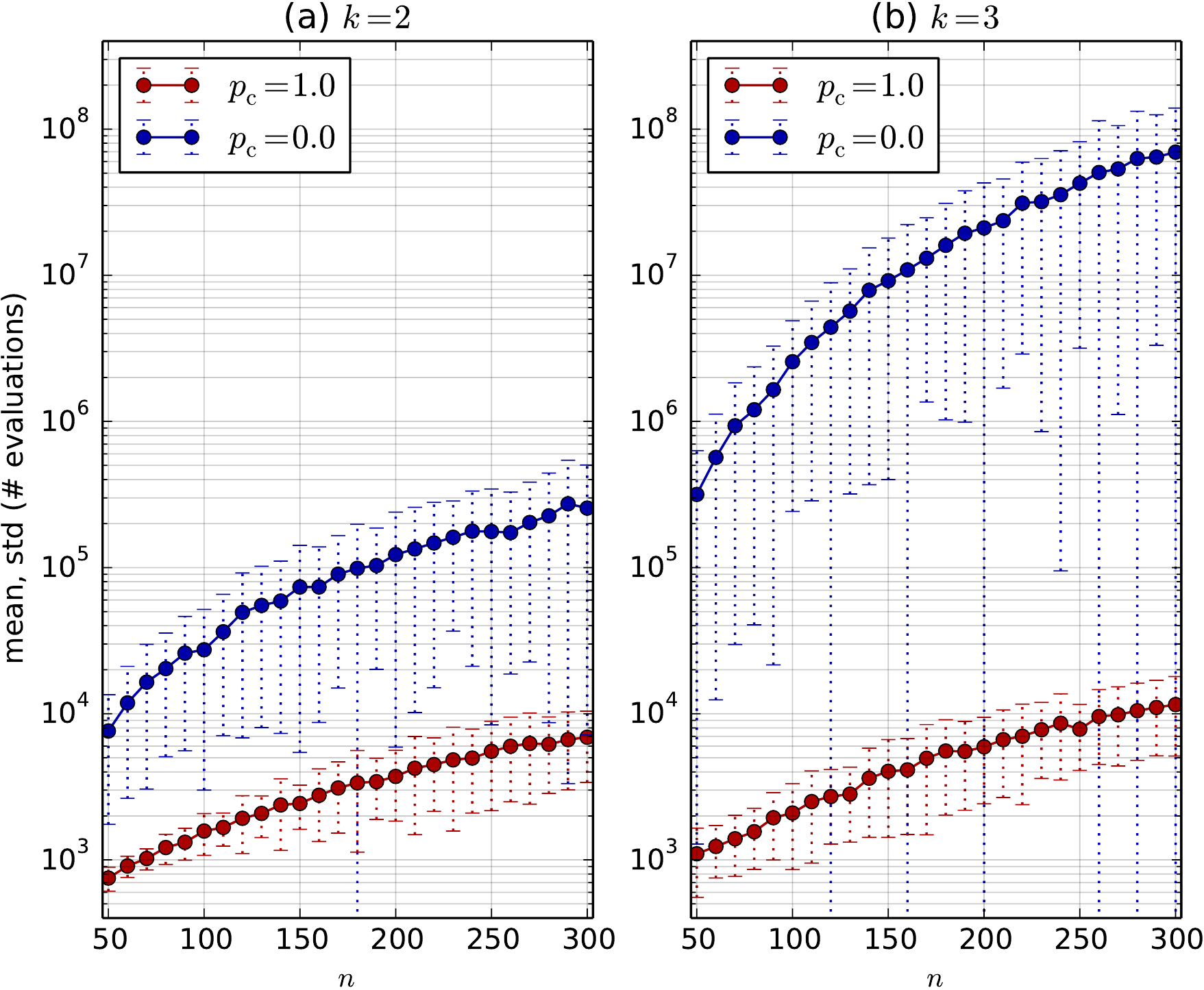}
\caption{Impact of enabling crossover.}\label{fig:impactcrossover}
\end{figure*}

The impact of the jump length $k$ on the run time is illustrated in Figure
\ref{fig:impactmutationrate} (a). The experiment was set with $n$ in $[100 \dots
5000]$ (with a step size of $100$) and $k$ is in $\{3,4,5\}$. We notice  that
the increase of $k$ does not imply a large change in the average run time. The
average run time seems to still scale linearly with $n$ in this setting even for $k=4$.
By fixing $k=3$, we also experimented with different mutation rates, i.e., $\varpm$ in
$\{0.9/n, 1/n, 1.1/n, 2/n\}$. The results are displayed in Figure~\ref{fig:impactmutationrate}
(b). We notice that the mutation rates above $1/n$ reduce the average run time
while a slightly lower mutation rate increases it considerably. With mutation
rate $2/n$, the average run time and the stability of the runs are distinctively
improved.

\begin{figure*}[h]
\centering
\includegraphics[width=0.8\textwidth]{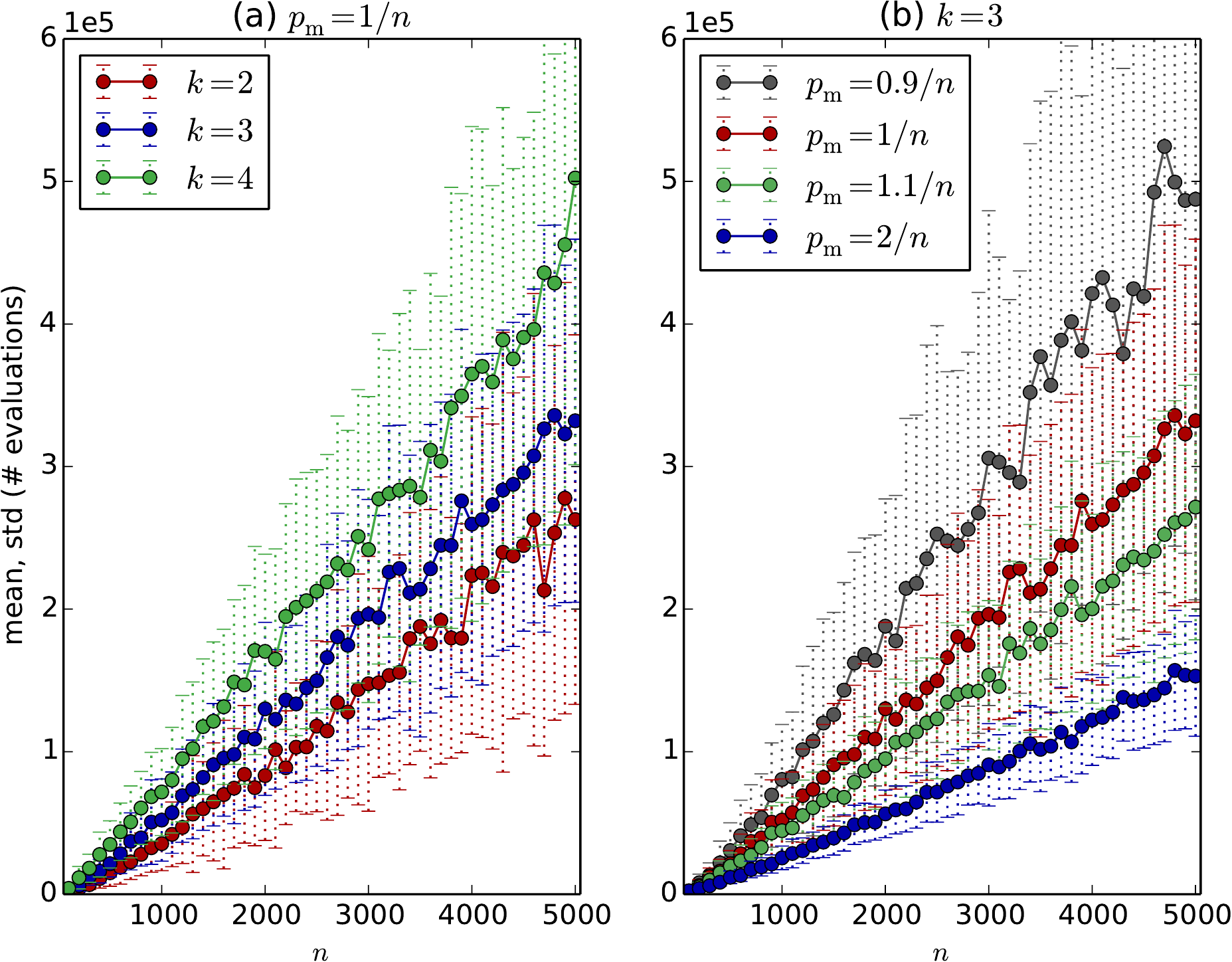}
\caption{Run time for different jump lengths $k$ and different mutation rates $\varpm$ with crossover.}\label{fig:impactmutationrate}
\end{figure*}

On the other hand, an excessive increase of the mutation rate may deteriorate the average
run time because of the likelihood of multiple bit flips which imply harmful mutations.
This can be observed in the experiment depicted in Figure~\ref{fig:impactmutationrate2}
(in log-scale) for $n=500$. In this experiment, $k$ is in $\{2,3,4\}$, and the range of
$\chi = \varpm \cdot n$ is set to $[0.6\dots 8]$ (with a step size of $0.1$). We note
that the more $k$ is increased, the stronger the negative effect of high mutation rates
can be noticed. Moreover, too low mutation rates are also bad for the run time. This
can be related to our theoretical analysis, in which a low mutation rate could have
made the random walk associated with the size of the largest species biased toward the
wrong direction. This may lead to the reduction of the population diversity and the
loss of benefit from crossover.

\begin{figure}[h]
\centering
\ifarxiv
\includegraphics[width=0.8\textwidth]{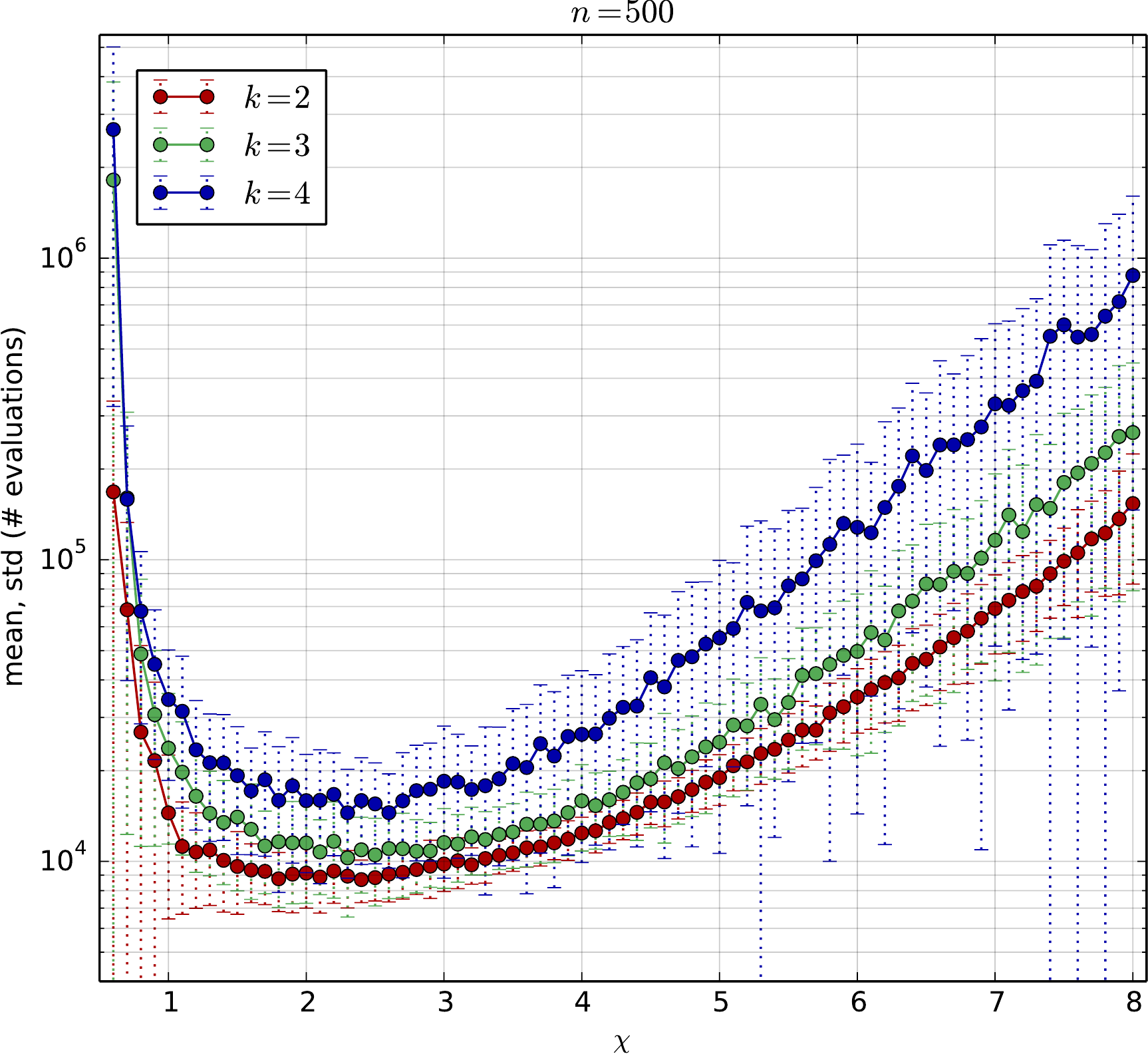}
\else
\includegraphics[width=0.5\textwidth]{Fig3.pdf}
\fi
\caption{Impact of different mutation rates $\varpm=\chi/n$ with crossover for problem size $500$.}\label{fig:impactmutationrate2}
\end{figure}

\subsection{Impact of diversity mechanisms}\label{sec:experiment-impact-diversity}
Full crossover is enabled ($\pc=1.0$), the problem size $n$ is varied in 
$[100, 1000]$ (with a step size of $25$), and the standard mutation rate 
($\varpm=1/n$) is used. 
In each tested setting, 
the run is replicated 
$100$ times with different random seeds and the number of function evaluations, 
denoted as `\# evaluations', is reported as the run time. 
The result for $k=4$ is shown in 
Figure~\ref{fig:diversitymechanism}.

On average, the 
highest contribution to the reduction of the run time in order is fitness sharing,
then convex hull 
maximisation, deterministic crowding, and, finally, duplicate elimination and 
minimisation have quite similar average run times. We also notice that the the 
island model with $\mu = 2$ requires approximately the same average number of 
evaluations as deterministic crowding. Overall, compared to the standard 
\muGA, all the diversity mechanisms contribute to the reduction of the average 
run time, as well as to the stability of the result. 
In addition, we also compare the diversity mechanisms to the high mutation rate 
setting $\varpm=2.6/n$ (the best choice for $n=500$ and $k=4$, suggested by 
Figure~\ref{fig:impactmutationrate2}) of the \muGA. The high mutation rate 
setting is able to perform better than four diversity mechanisms and it is 
only less efficient than convex hull maximisation and fitness sharing.

\begin{figure*}[ht]
\centering
\includegraphics[width=0.8\textwidth]{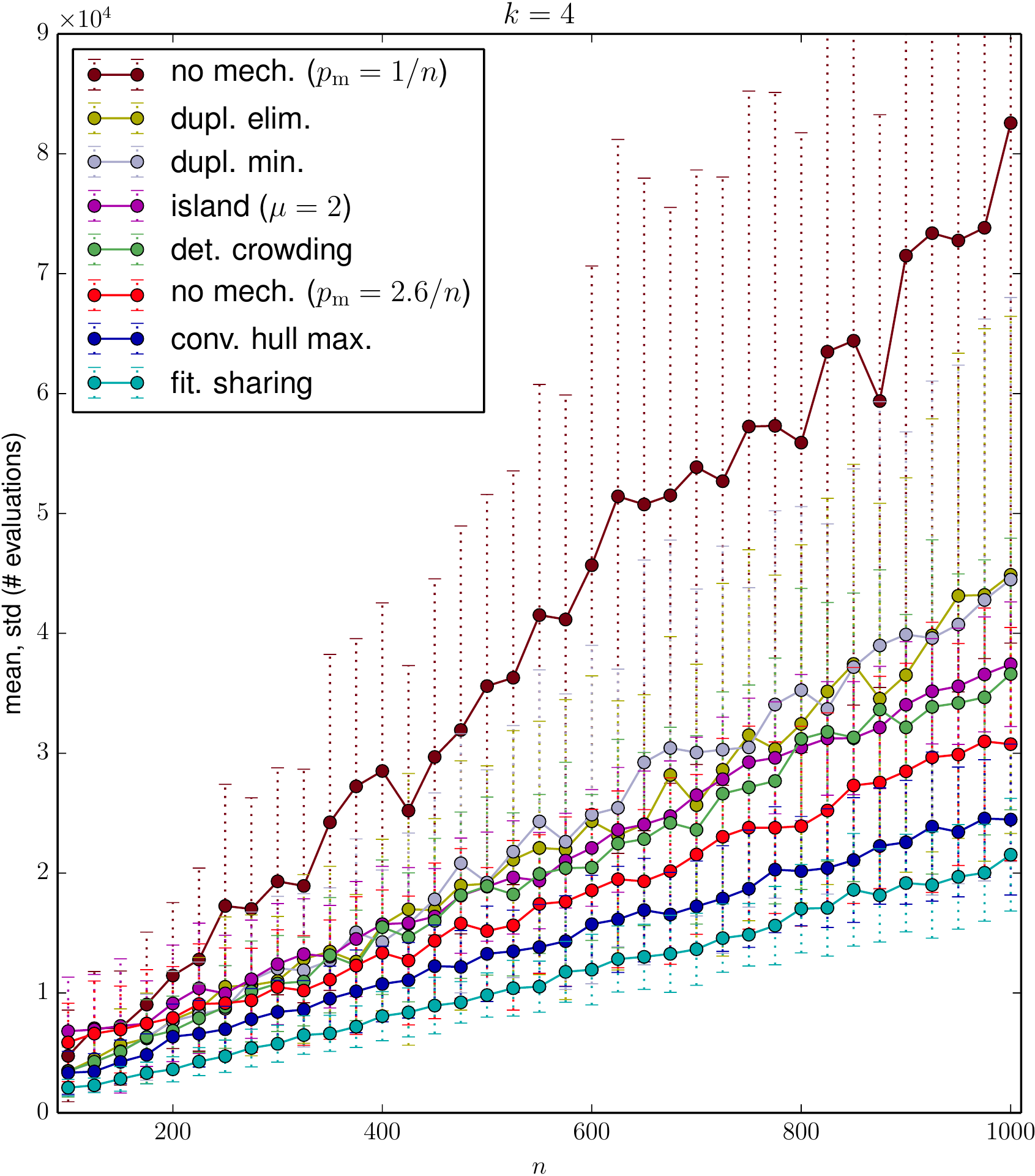}
\caption{Performance of the diversity mechanisms for jump length $4$, the mutation rate $\varpm$ is set to $1/n$ unless specified.}\label{fig:diversitymechanism}
\end{figure*}

\section{Conclusion}\label{sec:concl}
A rigorous analysis of the \muGA has been presented showing how the use of, first, only crossover and mutation and, second, the use of diversity mechanisms considerably speeds up the run time for $\jump_k$ compared to algorithms using mutation only.

With regard to the first setting, traditionally it has been believed  that crossover may be useful only if sufficient diversity is readily available and that the emergence of diversity in the population is due to either mutation alone or should be enforced by the introduction of diversity mechanisms \cite{Friedrich:2009:ADM:1668000.1668003, OlivetoZarges2015}. Indeed, previous work highlighting that crossover may be beneficial for $\jump_k$ used unrealistically low crossover probabilities to allow mutation alone to create sufficient diversity. Conversely, our analysis shows that the interplay between crossover and mutation on the plateau of local optima of the $\jump_k$ function quickly leads to a burst of diversity that is then exploited by both operators to reach the global optimum. The right balance between the amount of mutation and crossover impacts the run time considerably.

While mutation rates lower than the standard $1/n$ rate considerably increase the expected run time, rates that are slightly higher than $1/n$ lead to improved performance. These rates also depend on the presence of crossover. For instance, for $k=4$, the best rate for a mutation-only algorithm is $4/n$ while the best rate for the \muGA with $\pc=1$ is considerably lower than $4/n$ and higher than $1/n$. Our experiments also reveal that the expected run time of the \muGA does not increase considerably with the increase of the gap length $k$. Hence, it is an open problem to provide tighter bounds on the expected run time than those presented in this paper.

Second, we have considered the role of selection-based diversity mechanisms used together with crossover for escaping local optima. We prove rigorous upper bounds on the run time of the \muGA for seven well-known diversity mechanisms optimising the $\jumpK$ function. Our results reveal a qualitative difference in the ability of the different diversity mechanisms to escape local optima.

In contrast to previous theoretical work on crossover for $\jumpK$, our upper bounds do not rely on unreasonably small (e.g., vanishing with $n$) crossover probabilities but instead cover the more practical case of constant crossover probabilities. Furthermore, our proofs provide insight into the ways that diversity mechanisms, when applied as a tie-breaking rule in selection, can quickly spread the population out over the $\jump$ plateau in order to get enough diversity for crossover to combine the correct solution components to escape the set of local optima.

\subsection*{Acknowledgements}
The research leading to these results has received funding from the
European Union Seventh Framework Programme (FP7/2007-2013) under grant
agreement no.\ 618091 (SAGE) and from the EPSRC under grant no.\ EP/M004252/1.
This research benefitted from Dagstuhl seminar 16011 ``Evolution and Computing'' and is based upon work from COST Action CA15140 `Improving Applicability of Nature-Inspired Optimisation by Joining Theory and Practice (ImAppNIO)' supported by COST (European Cooperation in Science and Technology).

\printbibliography

\end{document}